\documentclass{article}

\PassOptionsToPackage{numbers, compress}{natbib}

\usepackage[final]{neurips_2024}




\usepackage{comment}
\usepackage[utf8]{inputenc} 
\usepackage[T1]{fontenc}    
\usepackage[
    colorlinks, 
    linkcolor=green,
    anchorcolor=blue, 
    citecolor=blue
]{hyperref}     
\usepackage{url}            
\usepackage{booktabs}       
\usepackage{amsfonts}       
\usepackage{nicefrac}       
\usepackage{microtype}      
\usepackage{amsmath}
\usepackage{amsthm}
\usepackage{amssymb}

\usepackage{notation}
\usepackage{algorithm}
\usepackage{algpseudocode}
\usepackage{graphicx}
\usepackage{wrapfig}
\usepackage{comment}
\usepackage{wrapfig}
\setlength{\intextsep}{0pt} 
\setlength{\columnsep}{5pt} 

\newtheorem{theorem}{Theorem}
\newtheorem{lemma}{Lemma}

\newtheorem{proposition}{Proposition}

\usepackage[table]{xcolor} 
\usepackage{colortbl}      

\definecolor{darkblue}{rgb}{0.0, 0.2, 0.4}
\definecolor{lightteal}{rgb}{0.7, 0.9, 0.9}

\newcommand{\ys}[1]{\textcolor{red}{\textbf{[Yasi: #1]}}}

\title{Flow Priors for Linear Inverse Problems via \\ Iterative Corrupted Trajectory Matching}

%

\author{Yasi Zhang \\
    \small {UCLA} \\
    \small{\texttt{yasminzhang@ucla.edu}}\\
    \And Peiyu Yu \\
    \small{UCLA} \\
    \small{\texttt{yupeiyu98@g.ucla.edu}}
    \And  Yaxuan Zhu \\
    \small{UCLA}\\
    \small{\texttt{yaxuanzhu@g.ucla.edu}} \\
    \And Yingshan Chang \\
    \small {CMU} \\
    \small {\texttt{yingshac@andrew.cmu.edu}} \\ 
    \And Feng Gao\thanks{This work is not related to the author's position at Amazon.} \\
    \small{Amazon}\\
    \small{\texttt{fenggo@amazon.com}} \\
    \And Ying Nian Wu \\
    \small{UCLA}\\
    \small{\texttt{ywu@stat.ucla.edu}}\\ 
    \And Oscar Leong\\
    \small{UCLA}\\
    \small{\texttt{oleong@stat.ucla.edu}}\\
}


\begin{document}

\maketitle

\begin{abstract}
 Generative models based on flow matching have attracted significant attention for their simplicity and superior performance in high-resolution image synthesis. By leveraging the instantaneous change-of-variables formula, one can directly compute image likelihoods from a learned flow, making them enticing candidates as priors for downstream tasks such as inverse problems. In particular, a natural approach would be to incorporate such image probabilities in a maximum-a-posteriori (MAP) estimation problem. A major obstacle, however, lies in the slow computation of the log-likelihood, as it requires backpropagating through an ODE solver, which can be prohibitively slow for high-dimensional problems. In this work, we propose an iterative algorithm to approximate the MAP estimator efficiently to solve a variety of linear inverse problems. Our algorithm is mathematically justified by the observation that the MAP objective can be approximated by a sum of $N$ ``local MAP'' objectives, where $N$ is the number of function evaluations. By leveraging Tweedie's formula, we show that we can perform gradient steps to sequentially optimize these objectives. We validate our approach for various linear inverse problems, such as super-resolution, deblurring, inpainting, and compressed sensing, and demonstrate that we can outperform other methods based on flow matching. Code is available at \url{https://github.com/YasminZhang/ICTM}.
\end{abstract}

\section{Introduction}

Linear inverse problems are ubiquitous across many imaging domains, pervading areas such as astronomy \cite{roddier1988interferometric, jansson2014deconvolution}, medical imaging \cite{ravishankar2019image, suetens2017fundamentals}, and seismology \cite{nolet2008breviary, rawlinson2014seismic}. In these problems the goal is to reconstruct an unknown image $x_* \in \RR^n$ from observed measurements $y \in \RR^m$ of the form:
\begin{align}
    y = \cA(x_*) + \text{noise},
\end{align}
where $\cA : \RR^n \rightarrow \RR^m$ with $m \leq n$ is a linear operator that degrades the clean image $x_*$, and the additive noise is drawn from a known distribution. In this work, we assume the noise follows $\cN(0, \sigma_y^2 I)$.
Due to the under-constrained nature of such problems, they are typically ill-posed, i.e., there are an infinite number of undesirable images that fit to the observed measurements. Hence, one requires further structural information about the underlying images, which constitutes our prior.

With the advent of large generative models \cite{vae, gan, song2020score, normalizingflow, chang2024skews, zhang2024object, zhang2024objectconditioned}, there has been a surge of interest in exploiting generative models as priors to solve inverse problems. Given a pretrained generator to sample from a distribution or grant access to image probabilities, one can solve a variety of inverse problems in a task- or forward model-agnostic fashion, without the need for large-scale supervision \cite{ieeesurvey}. This has been successfully done for a variety of models, including implicit generators such as Generative Adversarial Networks (GANs) and Variational Autoencoders (VAEs) \cite{Boraetal17, PULSE_CVPR_2020}, invertible generators such as Normalizing Flows \cite{Asimetal20, Whangetal21}, and more recently Diffusion models \cite{chung2023diffusion, rout2024solving, zhu2024think}.

A recent paradigm in generative modeling \cite{song2020score, xie, ruiqi, yu2024latent}, based on the concept of flow matching \cite{liu2022flow, lipman2022flow}, has made significant strides in scaling ODE-based generators to high-resolution images. Flow matching models map a simple base distribution, such as a Gaussian, to a complex, high-dimensional data distribution by defining a flow field that represents the transformation between these distributions. These generative models have demonstrated scalability to high dimensions, forming the backbone of several state-of-the-art generative models \cite{liu2023instaflow, esser2024scaling, yan2024perflow}. Moreover, flow matching models follow straighter and more direct probability paths compared to diffusion models, allowing for more efficient and faster sampling \cite{lipman2022flow, liu2022flow, esser2024scaling}. Additionally, due to their invertibility, flow matching models provide direct access to image likelihoods through the instantaneous change-of-variables formula \cite{neuralODE, FFJORD}. Given these advantages and the relatively recent application of these models to inverse problems \cite{pokle2023training, ben-hamu2024dflow}, we investigate their use as image priors in this work.

Leveraging knowledge about the corruption process $p(y|x)$ and a natural image prior $p(x)$, the Bayesian approach suggests analyzing the image reconstruction posterior $p(x |y) \propto p(y|x)p(x)$ to solve the inverse problem . A proven and effective method based on this approach is maximum-a-posteriori (MAP) estimation \cite{burger2014map, helin2015maximum}, which maximizes the posterior to identify the image most likely to match the observed measurements:
\begin{align}\label{eq:map}
    \argmin_{x \in \RR^n} -\log p(x | y) = \argmin_{x \in \RR^n} -\log p(y | x) - \log p(x).
\end{align}
  MAP estimation provides a single, most probable point estimate of the posterior distribution, making it simple and interpretable. This deterministic approach ensures consistency and reproducibility, which are essential in applications requiring reliable outcomes, particularly in compressed sensing tasks such as Computed Tomography (CT) \cite{buzug2011ct} and Magnetic Resonance Imaging (MRI) \cite{vlaardingerbroek2013magnetic}. While posterior sampling methods can offer diverse reconstructions to quantify uncertainty, they can be prohibitively slow in high-dimensions \cite{brooks2011handbook}. Hence, in this work, we propose to integrate flow priors to solve linear inverse problems by MAP estimation. 
  
  A significant challenge in employing flow priors for MAP estimation lies in the slow computation of the image probabilities, as it requires backpropagating through an ODE solver \cite{song2021maximum, feng2023scoreprior, feng2023efficientprior}. In this work, we show how one can address this challenge via Iterative Corrupted Trajectory Matching (ICTM), a novel algorithm to approximate the MAP solution in a computaionally efficient manner. In particular, we show how one can approximately find an MAP solution by sequentially optimizing a novel simpler, auxillary objective that approximates the true MAP objective in the limit of infinite function evaluations. For finite evaluations, we demonstrate that this approximation is sufficient to optimize by showcasing strong empirical performance for flow priors across a variety of linear inverse problems. We summarize our \textbf{contributions} as follows:

\begin{enumerate}
    \item We propose ICTM, an algorithm to approximate the MAP solution to a variety of linear inverse problems using a flow prior. This algorithm optimizes an auxillary objective  that partitions the flow model's trajectory into $N$ ``local MAP'' objectives, where $N$ is the number of function evaluations (NFEs). By leveraging Tweedie's formula, we show that we can perform gradient steps to sequentially optimize these objectives.
    \item Theoretically, we demonstrate that the auxillary objective converges to the true MAP objective as the NFEs goes to infinity. We validate the correctness of our algorithm in finding the MAP solution on a denoising problem.
    \item We demonstrate the utility of ICTM on a wide variety of linear inverse problems on both natural and scientific image datasets, with problems including denoising, inpainting, super-resolution, deblurring, and compressed sensing. Extensive results show that ICTM is both computationaly efficient and obtains high-quality reconstructions, outperforming other reconstruction algorithms based on flow priors.
\end{enumerate}

\section{Background}
 
\paragraph{Notation} We follow the convention for flow-based models, where 
Gaussian noise is sampled at timestep 0, and the clean image corresponds to timestep 1. Note that this is the opposite of diffusion models. 
For $t \in [0,1]$, we denote $x_t(x_0)$ as the point at time $t$ whose initial condition is $x_0$. 
In this work, we use $x$ and $x_1$ interchangeably, i.e., $x_1(x_0) = x(x_0)$.

\subsection{Flow-Based Models}

We consider generative models that map samples $x_0$ from a noise distribution $p(x_0)$, e.g., Gaussian, to samples $x_1$ of a data distribution $p(x_1)$ using an ordinary differential equation (ODE):
\begin{align}\label{eq:ode}
    dx_t = v_\theta(x_t, t) \, dt,
\end{align}
where the velocity field $v$ is a $\theta$-parameterized neural network, e.g., using a UNet \cite{lipman2022flow, liu2022flow, ronneberger2015unet} or Transformer \cite{esser2024scaling, vaswani2017attention} architecture. 
Generative models based on flow matching \cite{lipman2022flow, liu2022flow} can be seen as a simulation-free approach to learning the velocity field. This approach involves pre-determining paths that the ODE should follow by specifying the interpolation curve $x_t$, rather than relying on the MLE algorithm to implicitly discover them \cite{neuralODE}. To construct such a path, which is not necessarily Markovian, one can define a \textbf{differentiable} nonlinear interpolation between $x_0$ and $x_1$:
\begin{align}
    x_t = \alpha_t x_1 + \beta_t x_0, \quad x_0 \sim \mathcal{N}(0,I),
\end{align}
where both $\alpha_t$ and $\beta_t$ are differentiable functions with respect to $t$ satisfying $\alpha_0 = 0$, $\beta_0 = 1$, and $\alpha_1 = 1$, $\beta_1 = 0$. This ensures that $x_t$ is transported from a standard Gaussian distribution to the natural image manifold from time 0 to time 1. In contrast, the diffusion process \cite{song2020score, sohl2015deep, ho2020ddpm} induces a non-differentiable trajectory due to the diffusion term in the SDE formulation.

The idea behind flow matching is to utilize the power of deep neural networks to efficiently predict the velocity field at each timestep. To achieve this, we can train the neural network by minimizing an $L_2$ loss between the sampled velocity and the one predicted by the neural network:
\begin{align}\label{eq:flowobjective}
   \cL(\theta) =  \mathbb{E}_{t, p(x_1), p(x_0)} \| v_\theta(x_t, t) - (\dot \alpha_t x_1 + \dot \beta_t x_0) \|^2.
\end{align}
We denote the optimal (not necessarily unique) solution to $\arg\min_\theta \cL(\theta)$ as $\hat{\theta}$. The optimal velocity field $v_{\hat{\theta}}$ can be derived in closed form and is the expected velocity at state $x_t$:
\begin{align}
    v_{\hat{\theta}}(x_t, t) = \mathbb{E}_{p(x_1), p(x_0)} [\dot \alpha_t x_1 + \dot \beta_t x_0 \mid x_t ].
\end{align}
For convenience, in the following text, we use $v_\theta$ to refer to the optimal $v_{\hat{\theta}}$. In the rest of the paper, we assume that the flow $v_{\theta}$ and its parameters are pretrained on a dataset of interest and fixed. We are then interested in leveraging its utility as a prior to solve inverse problems.

\subsection{Probability Computation for Flow Priors}
  Denote the probability of $x_t$ in Eq. \eqref{eq:ode} as $p(x_t)$ dependent on time. Assuming that $v_\theta$ is uniformly Lipschitz continuous in $x_t$ and continuous in $t$, the change in log probability also follows a differential equation \cite{neuralODE, FFJORD}:
\begin{align}\label{eq:instan}
   \frac{\partial \log p(x_t) }{\partial t} = -\mathrm{tr} \left( \frac{\partial}{\partial x} v_\theta(x_t, t)\right).
   \end{align} One can additionally obtain the likelihood of the trajectory via integrating Eq. \eqref{eq:instan}  across time \begin{align} \label{eq:integral}
         \log p(x_t) = \log p(x_\tau) - \int_\tau^t \mathrm{tr}\left( \frac{\partial}{\partial x} v_\theta(x_s, s)\right) ds,\ 0 \leq \tau < t \leq 1.
\end{align}

\section{Method}


In this work, we aim to solve the MAP estimation problem in Eq. \eqref{eq:map} where $p(x)$ is given by a pretrained flow prior. We first discuss in Section \ref{sec:init-point-optimization} how the MAP problem could, in principle, be solved via a latent-space optimization problem. As we will see, this problem is challenging to solve computationally due to the need to backpropagate through an ODE solver. To overcome this, we show in Section \ref{sec:flow-based-MAP-approx} that the ideal MAP problem can be approximated by a weighted sum of ``local MAP'' optimization problems, which operates by partitioning the flow's trajectory to a reconstructed solution. We then introduce our ICTM algorithm to sequentially optimize this auxiliary objective. Finally, in Section \ref{sec:toy-example}, we experimentally validate that our algorithm finds a solution that is faithful to the MAP estimate in a simplified setting where the globally optimal MAP solution is known.

\subsection{Flow-Based MAP} \label{sec:init-point-optimization}

Given a pretrained flow prior, one can compute the log-likelihood of $x$ generated from an initial noise sample $x_0$ via Eq. \eqref{eq:integral}. Hence, to find the MAP estimate, one could equivalently optimize the initial point of the trajectory $x_0$ and return $x_1(x_0)$ where $x_0$ is found by solving 
\begin{align}\label{eq:globalupdate}
    \min_{x_0 \in \RR^n}~  \underbrace{\frac{1}{2\sigma_y^2}\|y - \cA(x_1(x_0))\|^2}_{\mathrm{data}\ \mathrm{likelihood}} + \underbrace{\frac{1}{2}\|x_0\|^2 + \int_0^1  \mathrm{tr}\left( \frac{\partial}{\partial x} v_\theta(x_t, t)\right) dt}_{\mathrm{prior}}, 
\end{align}
where $x_t := x_t(x_0)$ denotes the intermediate state $x_t$ generated from $x_0$. 
Intuitively, this loss encourages finding an initial point $x_0$ such that the reconstruction $x_1:=x_1(x_0)$ fits the observed measurements, but is also likely to be generated by the flow. 

In practice,  $x_1$ and the prior term can be approximated by an ODE solver. The trajectory of $x_t = x_0 + \int_0^t  v_\theta(x_t,t) dt$ can be approximated by an ODE sampler, i.e. $\text{ODESolve}(x_0, 0, t, v_\theta)$, where $x_0$ is the initial point, and the second and third arguments represent the starting time and the ending time, respectively. For example, with an Euler sampler, we iterate over $x_{t+\Delta t} = x_t + v_\theta(x_t, t) \Delta t$ where $\Delta t = 1/N$ and $N$ is the predetermined NFEs. After acquiring the optimal $\hat x_0$ by optimizing the Eq. \eqref{eq:globalupdate}, we obtain the MAP solution $x_1$ by using $\text{ODESolve}(\hat x_0, 0, 1, v_\theta)$ again.


\subsection{Flow-Based MAP Approximation} \label{sec:flow-based-MAP-approx}
The global flow-based MAP objective Eq. \eqref{eq:globalupdate} is tractable for low-dimensional problems. The challenge for high-dimensional problems, however, is that optimizing Eq. \eqref{eq:globalupdate} is simulation-based, and thus each update iteration requires full forward and backward propagation through an ODE solver, resulting in issues regarding memory inefficiency and time, making it hard to optimize \cite{neuralODE, feng2023efficientprior, feng2023scoreprior, song2021maximum}.

As a way to address this, we prove a result in Theorem \ref{th:1} that shows that the MAP objective can be approximated by a weighted sum of $N$ local posterior objectives. These objectives are ``local'' in the sense that they mainly depend on likelihoods and probabilities of intermediate trajectories $x_t$ and $x_t + v_{\theta}(x_t,t)\Delta t$ for $t =0,\Delta t,\dots, N\Delta t$ where $\Delta t:= 1/N$. Given an initial noise input $x_0$, each local posterior objective depends on a non-Markovian \textbf{auxiliary path} $y_t = \alpha_t y + \beta_t \cA (x_0)$ by connecting the points between $y$ and $\cA x_0$. We prove this result for straight paths $\alpha_t=t$ and $\beta_t =1-t$ for simplicity, but other interpolation paths can be used. The proof is in Section \ref{proof:th1}. 

\begin{theorem}\label{th:1}
For $N \geq 1$, set $\gamma_i :=  (\frac{1}{2})^{N-i+1}$ and $\Delta t = 1/N$. Suppose $y = \cA(x_*) + \epsilon$ where $x_* = x_1(x_0)$ with $x_0$ being the solution to Eq. \eqref{eq:globalupdate}, $\epsilon \sim \cN(0,\sigma_y^2I)$, and $x_t$ exactly follows the straight path $x_t = tx + (1-t)x_0$ for any timestep $t \in [0,1]$. Suppose the velocity field $v_{\theta} : \RR^n \times \RR \rightarrow \RR^n$ satisfies $\sup_{z \in \RR^n, s \in [0,1]} |\mathrm{tr}\frac{\partial}{\partial x} v_{\theta}(z,s)| \leq C_1$ for some universal constant $C_1$. Then, there exists a constant $c(N)$\footnote{This is given by $c(N) := \sum_{i=1}^N \gamma_i c_i - \log p(y)$. Please see the proof of Theorem \ref{th:1} in Appendix \ref{proof:th1}.}  that does not depend on $x_0$  such that 
\begin{align*}
   \lim_{N \to \infty} \left |\log p(x(x_0)|y) - \sum_{i=1}^N \gamma_i \hat \cJ_i - c(N) \right | = 0,
\end{align*}
where $\hat \cJ_i =  \log p(x_{(i-1) \Delta t}) -\tr\left (\frac{\partial v_\theta(x_{(i-1)\Delta t},(i-1)\Delta t)} {\partial x} \right ) \Delta t +  \log p(y_{i\Delta t}|x_{i  \Delta t})$.
\end{theorem}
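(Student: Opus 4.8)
The plan is to start from Bayes' rule, writing $\log p(x(x_0)\mid y) = \log p(y\mid x_1) + \log p(x_1) - \log p(y)$ with $x_1 := x(x_0)$, and then show that $\sum_{i=1}^N \gamma_i \hat\cJ_i$ reproduces the joint log-density $\log p(x_1) + \log p(y\mid x_1)$ up to a constant that does not depend on $x_0$ and a remainder that vanishes as $N\to\infty$. The guiding observation is that each $\hat\cJ_i$ equals that \emph{same} joint log-density plus an $i$-dependent constant plus two small remainders, and that the geometric profile $\gamma_i = (\tfrac12)^{N-i+1}$ concentrates the weighted average on indices $i$ near $N$, where $x_{i\Delta t}$ and $y_{i\Delta t}$ are within $O(1/N)$ of $x_1$ and $y$.

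For the prior part, I would apply the instantaneous change-of-variables identity \eqref{eq:instan} on the single interval $[(i-1)\Delta t,i\Delta t]$: $\log p(x_{i\Delta t}) = \log p(x_{(i-1)\Delta t}) - \tr\!\big(\tfrac{\partial}{\partial x}v_\theta(x_{(i-1)\Delta t},(i-1)\Delta t)\big)\Delta t + e_i$, where $e_i$ is the one-step quadrature error. Since $|\tr\tfrac{\partial}{\partial x}v_\theta|\le C_1$ uniformly, $|e_i|\le 2C_1\Delta t$, so the first two terms of $\hat\cJ_i$ equal $\log p(x_{i\Delta t})$ up to $O(\Delta t)$; note also that under the exact-path hypothesis the Euler iterates coincide with $x_{i\Delta t} = i\Delta t\,x_1 + (1-i\Delta t)x_0$, so there is no trajectory-discretization error, only the change-of-variables one. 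Telescoping \eqref{eq:integral} over $[i\Delta t,1]$ then gives $\log p(x_{i\Delta t}) = \log p(x_1) + r_i$ with $|r_i| = \big|\int_{i\Delta t}^1 \tr\tfrac{\partial}{\partial x}v_\theta(x_s,s)\,ds\big| \le C_1(1-i\Delta t)$.

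For the data-fidelity part, linearity of $\cA$, the exact path $x_t = tx_1 + (1-t)x_0$, and $y = \cA(x_1)+\epsilon$ give $y_{i\Delta t} = i\Delta t\,y + (1-i\Delta t)\cA(x_0) = \cA(x_{i\Delta t}) + i\Delta t\,\epsilon$, so the corrupted-trajectory likelihood is $p(y_{i\Delta t}\mid x_{i\Delta t}) = \cN\big(\cA(x_{i\Delta t}),(i\Delta t)^2\sigma_y^2 I\big)$; the factor $(i\Delta t)^2$ in the quadratic term then cancels the variance scaling, yielding $\log p(y_{i\Delta t}\mid x_{i\Delta t}) = \log p(y\mid x_1) + c_i$ with $c_i$ an $x_0$-independent constant that gathers the $i$-dependent Gaussian normalizers. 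Combining the three displays, $\hat\cJ_i = \big(\log p(x_1) + \log p(y\mid x_1)\big) + c_i + e_i + r_i$.

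It remains to sum against $\gamma_i$ and let $N\to\infty$. Using $\sum_{i=1}^N\gamma_i = 1-(\tfrac12)^N$ and $\log p(x_1) + \log p(y\mid x_1) = \log p(x_1\mid y) + \log p(y)$ (both finite, since $x_0$ hence $x_1$ is fixed and $p(y)>0$), one gets $\sum_i\gamma_i\hat\cJ_i = (1-2^{-N})\big(\log p(x_1\mid y)+\log p(y)\big) + \sum_i\gamma_i c_i + \sum_i\gamma_i(e_i+r_i)$, so with $c(N) := \sum_i\gamma_i c_i - \log p(y)$ the statement reduces to $\big|\sum_i\gamma_i(e_i+r_i)\big| + 2^{-N}|\log p(x_1\mid y)+\log p(y)| \to 0$. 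The second term is trivial; $\big|\sum_i\gamma_i e_i\big| \le 2C_1\Delta t\sum_i\gamma_i \le 2C_1/N \to 0$; and, substituting $k = N-i$, $\big|\sum_i\gamma_i r_i\big| \le C_1\sum_i\gamma_i(1-i\Delta t) = \tfrac{C_1}{N}\sum_{k=0}^{N-1}k(\tfrac12)^{k+1} \le \tfrac{C_1}{N}\to 0$. I expect this last estimate to be the only genuinely delicate point: it is precisely where the geometric decay of the weights is essential, since for a uniform weighting $\gamma_i\equiv 1/N$ one would instead find $\sum_i\gamma_i(1-i\Delta t) = \Theta(1)$ and the approximation would fail --- so the argument must exploit that $\gamma_i$ places $1-2^{-K}$ of its mass on the last $K$ indices. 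A secondary, routine-but-necessary check is that every $c_i$ is indeed free of $x_0$, so that collecting them into $c(N)$ is legitimate.
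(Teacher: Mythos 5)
Your argument is correct and rests on the same ingredients as the paper's proof: the instantaneous change-of-variables identity for $\log p(x_t)$ combined with the uniform trace bound, the Gaussian identity $y_{i\Delta t}\mid x_{i\Delta t}\sim\cN(\cA x_{i\Delta t},(i\Delta t)^2\sigma_y^2 I)$ for the auxiliary path (the paper's Lemma \ref{lem:1}, which you rederive inline), and the geometric weights $\gamma_i$ to make the prior mismatch vanish. The bookkeeping, however, is genuinely different: the paper anchors everything at $x_0$, introducing intermediate objectives $\tilde\cJ_i=\log p(x_0)+\sum_{j\le i}\Delta p_j+\log p(y_{i\Delta t}\mid x_{i\Delta t})$ and then bounding the cross terms $\sum_{j\ge 2}\bigl(\sum_{i<j}\gamma_i\bigr)\Delta p_j$ together with a Riemann-sum error, whereas you anchor each $\hat\cJ_i$ directly at $x_1$, writing it as the joint log-density plus a one-step quadrature error $e_i$ and a tail integral $r_i=\int_{i\Delta t}^1\tr\bigl(\frac{\partial}{\partial x}v_\theta(x_s,s)\bigr)ds$. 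This buys an explicit rate, $\sum_i\gamma_i(|e_i|+|r_i|)\le 3C_1/N$, where the paper only argues convergence of limits, and it isolates precisely where the geometric decay of $\gamma_i$ is indispensable, as you correctly observe. One bookkeeping slip to fix: with your convention $\log p(y_{i\Delta t}\mid x_{i\Delta t})=\log p(y\mid x_1)+c_i$ (so $c_i=-\tfrac m2\log((i\Delta t)^2)$, the negative of the paper's $c_i$), the constant that produces the clean cancellation is $c(N)=-\sum_i\gamma_i c_i-\log p(y)$; keeping $c(N)=\sum_i\gamma_i c_i-\log p(y)$ as you wrote leaves an extra term $-2\sum_i\gamma_i c_i$ in your reduction. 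That term also happens to vanish as $N\to\infty$ (split the sum at, say, $k=\sqrt N$ in your substitution $k=N-i$), so the theorem still follows, but you should either flip the sign convention for $c_i$ to match Lemma \ref{lem:1} or verify this extra limit explicitly.
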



This result shows that the true MAP objective evaluated at the optimal solution can be approximated by a weighted sum of objectives that depend locally at a time $t$ for the trajectory $\{x_t : t \in [0,1]\}$. The intuition regarding $\hat{\cJ}_i$ arises from the fact that $\hat{\cJ}_i \approx \cJ_i$, where $\cJ_i$ is the local posterior distribution $$\cJ_i = \log p(y_{i \Delta t}| x_{i  \Delta t}(x_{(i-1) \Delta t})) + \log p(x_{i\Delta t}).$$ Optimizing each of these local posterior distributions in a sequential fashion captures the fact that we would like each intermediate point in our trajectory $x_{i\Delta t}$ to be likely and fit to our measurements, ideally resulting in a final reconstruction $x_1$ that satisfies this as well. The benefit of $\hat{\cJ}_i$, as we will show in the sequel, is that it is efficient to optimize. 

\paragraph{Discussion of assumptions:} 
We  assume that the trajectory $\{x_t\}_t$ exactly follows the predefined interpolation path $\{\alpha_t x + \beta_t x_0\}_t$. In Section \ref{sec:compliance} of the appendix, we analyze this assumption and show that we can bound the deviation from the predefined interpolation path to the learned path via a path compliance measure. Moreover, we impose a regularity assumption on the velocity field $v_{\theta}$, effectively requiring a uniform bound on the spectrum of the Jacobian of $v_{\theta}$. 
This can be easily satisfied with neural networks using Lipschitz continuous and differentiable activation functions.

As we see in Theorem \ref{th:1}, one can approximate the true MAP objective via a sum of local objectives of the form \begin{align}
    \hat{\cJ}_i := \underbrace{\log p(y_{i \Delta t}|x_{i  \Delta t})}_{\mathrm{local}\ \mathrm{data}\ \mathrm{likelihood}} +  \underbrace{\log p(x_{(i-1)  \Delta t}) -\tr\left (\frac{\partial v_\theta(x_{(i-1)\Delta t},(i-1)\Delta t)} {\partial x} \right ) \Delta t}_{\mathrm{local}\ \mathrm{prior}}. \label{eq:Jhat-definition}
\end{align} At first glance, $\hat{\cJ}_i$ still appears challenging to optimize, but there are additional insights we can exploit for computation. We discuss each term in $\hat{\cJ}_i$ below.

\begin{wrapfigure}{l}{0.4\textwidth} 
    \centering
    \includegraphics[width=0.38\textwidth]{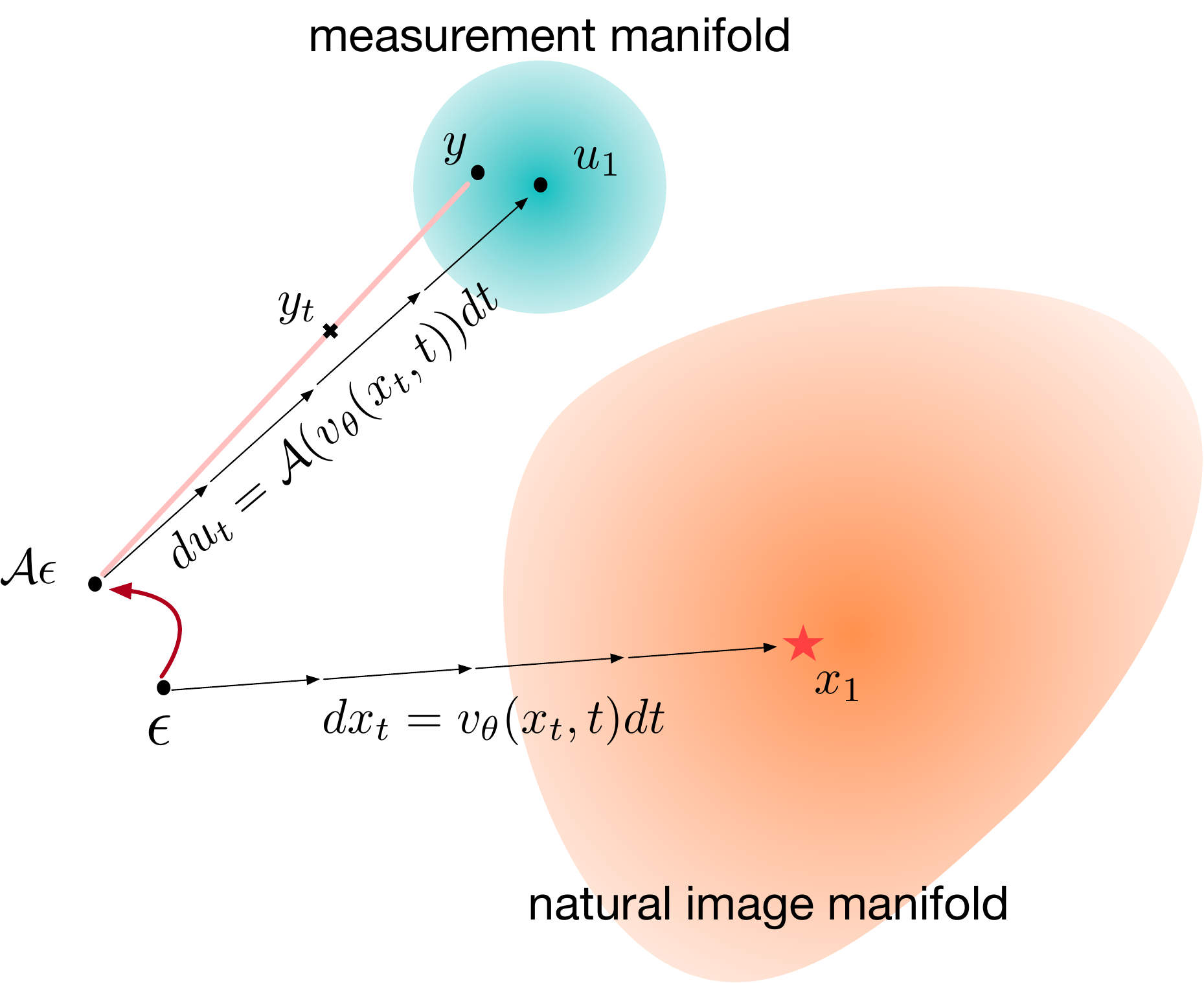} 
    \caption{\textbf{Illustration of the idea of ICTM.} The corrupted trajectory $u_t: = \cA(x_t)$ follows the {corrupted flow ODE} $du_t = \cA(v_{\theta}(x_t,t))dt.$ }
    \label{fig:wrapfig}
\end{wrapfigure}

\paragraph{Local data likelihood:} The intuition behind ICTM is that we aim to match a \textbf{corrupted} trajectory $\{u_t\}_t$ with an auxiliary path $\{y_t\}_t$ specified by an interpolation between our measurements $y$ and $\cA(x_0)$ for each timestep $t$, defined by $y_t := \alpha_t y + \beta_t \cA(x_0)$. The corrupted trajectory $u_t: = \cA(x_t)$ follows the \textbf{corrupted flow ODE} $du_t = \cA(v_{\theta}(x_t,t))dt.$ To optimize the above ``local MAP'' objectives, we must understand the distribution of $p(y_t|x_t)$. Generally speaking, this distribution is intractable. However, by assuming exact \textbf{compliance} of the trajectory generated by flow to the predefined interpolation path (as done in Theorem \ref{th:1}), we can show that $y_t|x_t \sim \cN(u_t, \alpha_t^2 \sigma_y^2)$. This is proven in Lemma \ref{lem:1} in the appendix. While exact compliance of the trajectory may not hold for learned flow matching models, we show empirically that making this assumption leads to strong performance in practice. We further analyze this notion of compliance in Section \ref{sec:compliance} of the appendix.

\paragraph{Local prior:} The approximation in Eq. \eqref{eq:Jhat-definition} addresses one of the main concerns of MAP in that the intensive integral computation is circumvented with a simpler Riemannian sum. This approximation holds for small time increments $\Delta t$: $\int_t^{t+\Delta t}\mathrm{tr}\left(\frac{\partial}{\partial x} v_{\theta}(x_s,s)\right)ds \approx \mathrm{tr}\left(\frac{\partial}{\partial x} v_{\theta}(x_t,t)\right)\Delta t$. Note that one can additionally improve the efficiency of this term by employing a Hutchinson-Skilling estimate \cite{skilling1989eigenvalues, hutchinson1989stochastic} for the trace of the Jacobian matrix. However, at first glance, it appears we have simply shifted the problem to the computation of the prior at timestep $(i-1)\Delta t$. Fortunately, it is possible to derive a formula for the gradient of $\log p(x_t)$ for all timesteps $t \in [0,1]$ using Tweedie's formula \cite{efron2011tweedie}. This allows us to optimize each objective $\hat{\cJ}_i$ using gradient-based optimizers. The following result gives a precise characterization of $\nabla_{x_t} \log p(x_t)$, proven in Section \ref{proof:prop2}.

\begin{proposition}\label{prop:2}
Let $\lambda_t = \alpha_t / \beta_t$ denote the signal-to-noise ratio. The relationship between the score function $\nabla_{x_t} \log p(x_t )$ and the velocity field $v_\theta(x_t, t)$ is given by:
\begin{align} \label{eq:prop2}
    \nabla_{x_t} \log p(x_t) = \frac{1}{\beta_t^2} \left [ \left (  \frac{d \log \lambda_t}{dt} \right )^{-1} \left (v_{\theta }(x_t, t) - \frac{d \log \beta_t}{dt} x_t \right ) -x_t \right  ].
\end{align}
\end{proposition}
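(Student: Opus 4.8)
The strategy is to combine Tweedie's formula for the Gaussian interpolation kernel with the closed-form expression for the optimal velocity field, and then to solve a small linear system in the two conditional expectations $\mathbb{E}[x_1 \mid x_t]$ and $\mathbb{E}[x_0 \mid x_t]$. Throughout I work at an interior time $t \in (0,1)$, so that $\alpha_t, \beta_t \neq 0$.

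First I would recall that $x_t = \alpha_t x_1 + \beta_t x_0$ with $x_0 \sim \mathcal{N}(0,I)$ drawn independently of $x_1 \sim p(x_1)$. Hence, conditioned on $x_1$, the state is Gaussian, $x_t \mid x_1 \sim \mathcal{N}(\alpha_t x_1, \beta_t^2 I)$, so that $p(x_t) = \int \mathcal{N}(x_t;\alpha_t x_1, \beta_t^2 I)\, p(x_1)\, dx_1$. Differentiating under the integral sign (justified by standard Gaussian tail/dominated-convergence bounds) and using $\nabla_{x_t}\log \mathcal{N}(x_t;\alpha_t x_1, \beta_t^2 I) = -\beta_t^{-2}(x_t - \alpha_t x_1)$ yields the Tweedie identity
\[
\nabla_{x_t} \log p(x_t) = \frac{\alpha_t\, \mathbb{E}[x_1 \mid x_t] - x_t}{\beta_t^2}.
\]
Next I would invoke the characterization of the optimal flow stated earlier in the excerpt, namely $v_\theta(x_t,t) = \mathbb{E}[\dot\alpha_t x_1 + \dot\beta_t x_0 \mid x_t] = \dot\alpha_t\, \mathbb{E}[x_1\mid x_t] + \dot\beta_t\, \mathbb{E}[x_0\mid x_t]$, together with the tower-property identity $x_t = \alpha_t\, \mathbb{E}[x_1\mid x_t] + \beta_t\, \mathbb{E}[x_0\mid x_t]$, obtained by applying $\mathbb{E}[\,\cdot \mid x_t]$ to the interpolation equation. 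Treating $(\mathbb{E}[x_1\mid x_t],\, \mathbb{E}[x_0\mid x_t])$ as unknowns, these two equations have coefficient matrix $\begin{pmatrix} \alpha_t & \beta_t \\ \dot\alpha_t & \dot\beta_t \end{pmatrix}$, whose determinant is $\alpha_t \dot\beta_t - \beta_t \dot\alpha_t = -\alpha_t \beta_t \tfrac{d\log\lambda_t}{dt}$, since $\lambda_t = \alpha_t/\beta_t$ gives $\tfrac{d\log\lambda_t}{dt} = \tfrac{\dot\alpha_t}{\alpha_t} - \tfrac{\dot\beta_t}{\beta_t}$. Solving (e.g. by Cramer's rule) for $\mathbb{E}[x_1\mid x_t]$ gives $\mathbb{E}[x_1\mid x_t] = \tfrac{1}{\alpha_t}\big(\tfrac{d\log\lambda_t}{dt}\big)^{-1}\big(v_\theta(x_t,t) - \tfrac{d\log\beta_t}{dt}\, x_t\big)$, and substituting this into the Tweedie identity above and simplifying produces exactly Eq.~\eqref{eq:prop2}. (Alternatively, one may solve directly for $\mathbb{E}[x_0\mid x_t]$ and use $\nabla_{x_t}\log p(x_t) = -\beta_t^{-1}\mathbb{E}[x_0\mid x_t]$, which follows from $x_0 = (x_t - \alpha_t x_1)/\beta_t$; this route yields an equivalent expression.)

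The argument is essentially a routine computation, so there is no deep obstacle; the points requiring care are (i) justifying differentiation under the integral sign, (ii) restricting to $t \in (0,1)$ so the linear system is well posed, and (iii) the invertibility of the coefficient matrix, i.e. $\tfrac{d\log\lambda_t}{dt} \neq 0$, which holds because the signal-to-noise ratio $\lambda_t$ is strictly increasing (it runs from $0$ at $t=0$ to $+\infty$ at $t=1$). I expect the only genuinely fiddly part to be the bookkeeping that converts the raw coefficients $\dot\alpha_t, \dot\beta_t$ into the logarithmic-derivative form $\tfrac{d\log\lambda_t}{dt},\ \tfrac{d\log\beta_t}{dt}$ in which the result is stated.
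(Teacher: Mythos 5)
Your proposal is correct and follows essentially the same route as the paper: the paper also combines the closed-form optimal velocity $v_\theta(x_t,t)=\mathbb{E}[\dot\alpha_t x_1+\dot\beta_t x_0\mid x_t]$ with Tweedie's formula for $x_t\mid x_1\sim\mathcal{N}(\alpha_t x_1,\beta_t^2 I)$, eliminating $\mathbb{E}[x_0\mid x_t]$ via $x_0=(x_t-\alpha_t x_1)/\beta_t$ and rearranging. Your two-equation linear system with Cramer's rule is just a slightly more formal phrasing of that same elimination, so the arguments coincide.
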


In summary, we have derived an efficient approximation to the MAP objective. For our algorithm, we iteratively optimize each term $\hat{\cJ}_t$ sequentially for each $t = 0,\Delta t,\dots,N\Delta t$, fitting our current iterate $x_t$ to induce an increment $x_{t+\Delta t}$ such that $\cA(x_{t+\Delta t})$ fits to our auxiliary corrupted path $y_{t+\Delta t}$ while being likely under our local prior. We call this approach Iterative Corrupted Trajectory Matching (ICTM). Our algorithm is summarized in Algo. \ref{alg:ours_x}. In lines 7 and 12, instead of directly optimizing the local data likelihood, we choose $\lambda$ as a new hyper-parameter to tune. We find a constant $\lambda$ works well in practice. 

\begin{algorithm}[t]
\caption{Iterative Corrupted Trajectory Matching (ICTM) with Euler Sampler  }\label{alg:ours_x}
   \textbf{Input:}   measurement $y$, matrix $\cA$, pretrained flow-based model $\theta$, NFEs $N$, interpolation coefficients $\{\alpha_t\}_t$ and $\{\beta_t\}_t$, step size $\eta$, guidance weight $\lambda$, and iteration number $K$
   \\ 
    \textbf{Output:} recovered clean image $x_1$
\begin{algorithmic}[1]
 \State \textbf{Initialize} $\epsilon \sim \cN(0,I)$,   $x_0 \gets \epsilon$,    $t \gets 0$,  $\Delta t \gets 1/N$
 \State \textbf{Generate} an auxiliary path $y_s= \alpha_s y + \beta_s (\cA x_0)$ for $s\in (0,1)$
 \While{$t < 1$}
 \State $x_{t+\Delta t} \gets x_t +  v_\theta(x_t,t) \Delta t$
 \If{$t = 0$} 
  \For{$k = 1, \cdots K$}
  \State  $x_t \gets x_t - \eta \nabla_{x_t} \left [ \lambda \|\cA(x_{t+\Delta t}(x_t) )- y_{t+\Delta t}\|^2+ \frac{1}{2}\|x_t\|^2+\tr\left(\frac{\partial v_{\theta}(x_t,t)}{\partial x}\right)\Delta t \right ] $ 
  \EndFor
  \Else 
   \For{$k = 1, \cdots K$}
  \State 
  \# use Eq. \eqref{eq:prop2} to obtain the gradient of $\log p(x_t)$
 \State  $x_t \gets x_t - \eta\nabla_{x_t} \left [ \lambda \|\cA(x_{t+\Delta t}(x_t) )- y_{t+\Delta t}\|^2 -\log  p(x_t)+\tr\left(\frac{\partial v_{\theta}(x_t,t)}{\partial x}\right)\Delta t \right ]$
 \EndFor
  \EndIf
 \State $x_{t+\Delta t} \gets x_t + v_\theta(x_t, t) \Delta t$

 \State $t \gets t + \Delta t$

 \EndWhile
\State \Return $x_1$
\end{algorithmic}
\end{algorithm}

 \begin{figure}[t]
    \centering
    \includegraphics[width=\textwidth]{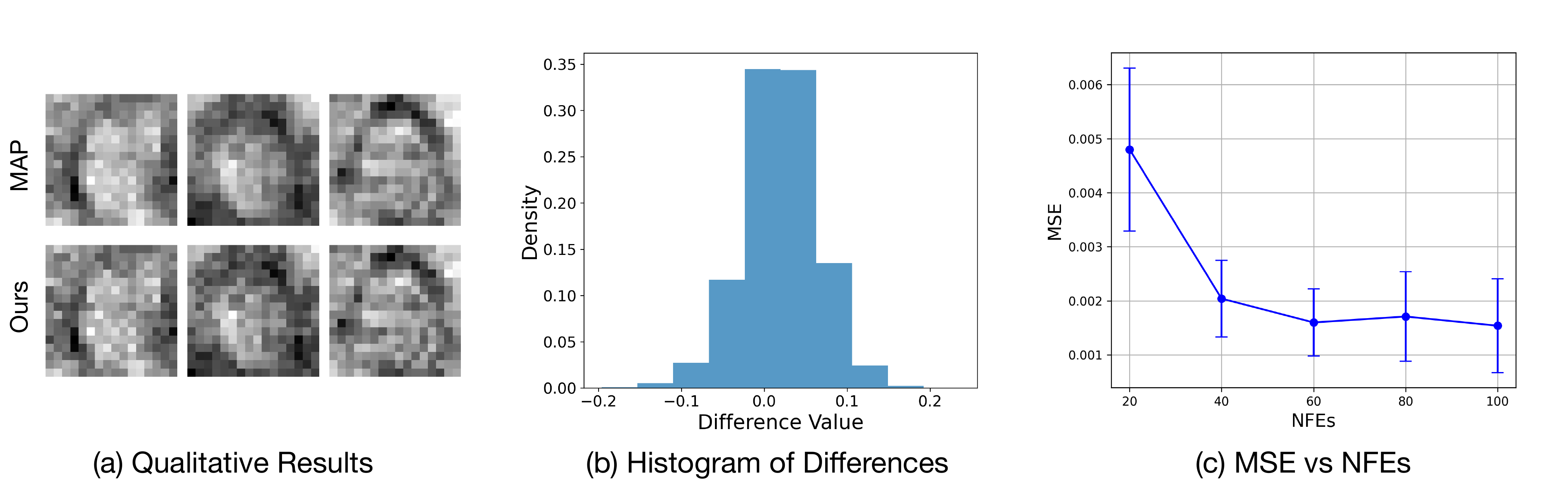}
    \caption{Results of a toy example modeling 1,000 FFHQ faces as a Gaussian distribution. Subfigure (a) shows the qualitative results of our method; Subfigure (b) presents the histogram of the differences between ours and the true MAP; Subfigure (c) displays the MSE values as the NFEs varies.}
    \label{fig:syn}
\end{figure}

\subsection{Toy Example Validation} \label{sec:toy-example}

We experimentally validate that the reconstruction found via ICTM is close to the optimal MAP solution in a simplified denoising problem where the MAP solution can be obtained in closed-form. Specifically, we fit a Gaussian distribution $\cN(\mu, \Sigma)$ using 1,000 samples from the FFHQ dataset. Consider a denoising problem $y = x + \epsilon$ where $x \sim \cN(\mu,\Sigma)$ and $\epsilon \sim \cN(0,\sigma_y^2I)$. In this case, the analytical solution to the MAP estimation problem (Eq. \eqref{eq:map}) is $x_* = (\Sigma^{-1}+\sigma_y^{-2} I)^{-1} (\Sigma^{-1}\mu + \sigma_y^{-2}y)$. We set $\sigma_y = 0.1$. Then, we train a flow-based model on 10,000 samples from the true Gaussian distribution and showcase the deviation of our reconstruction found via ICTM to the closed-form MAP solution $x_*$ in Fig. \ref{fig:syn}. 
We see that ICTM can obtain a faithful estimate of the MAP solution across many samples. 

\section{Experiments}


\begin{table}[t]
    \centering
        \caption{Quantitative comparison results in terms of PSNR and SSIM on the CelebA-HQ dataset. Our algorithm surpasses all other baselines across all tasks. The best values are highlighted in blue and the second-best are underlined.}
    \begin{tabular}{l|cccccccc}
        \toprule
     & \multicolumn{2}{c}{ \small Super-Resolution} & \multicolumn{2}{c}{ \small Inpainting(random)} & \multicolumn{2}{c}{ \small Gaussian Deblurring} & \multicolumn{2}{c}{ \small Inpainting(box)} \\
    \cmidrule(l){2-3} \cmidrule(l){4-5} \cmidrule(l){6-7} \cmidrule(l){8-9} 
        Method   & PSNR & SSIM & PSNR & SSIM & PSNR & SSIM & PSNR & SSIM \\
        \midrule
       ~ OT-ODE &      {27.46}	& 0.775     & 28.57 & 0.838 &   {26.28} &   {0.727} & 19.80  & 0.795 \\
     ~ DPS-ODE         &   {27.85} &  {0.791} &  {29.57} &  \underline{0.872} & 25.97 & 0.704 &    {23.59} & 0.758 \\
       ~ RED-Diff & 27.20 & 0.760 & 25.13 & 0.711  &     \cellcolor{lightteal}27.23 & \cellcolor{lightteal}0.765     & 17.50 & 0.651 \\
       ~ $\Pi$GDM  & \cellcolor{lightteal}28.33 & \underline{0.803} &  \underline{29.98} & 0.858 & 24.30 & 0.583 &  \underline{24.10} & \underline{0.853} \\
     ~   Ours (w/o prior) & 26.06 & 0.724 & 29.01 & 0.835 & 25.13 & 0.676 & 22.42 & {0.803} \\
      ~  Ours           & \underline{27.91} & \cellcolor{lightteal}{0.805} & \cellcolor{lightteal}{30.65} &\cellcolor{lightteal}{0.894 }& \underline{26.54} & \underline{0.760} & \cellcolor{lightteal}{24.34} & \cellcolor{lightteal}{0.866} \\    
      \bottomrule
    \end{tabular}
    \label{fig:celeba_quan}
\end{table}

\begin{figure}
    \centering
    \includegraphics[width=\textwidth]{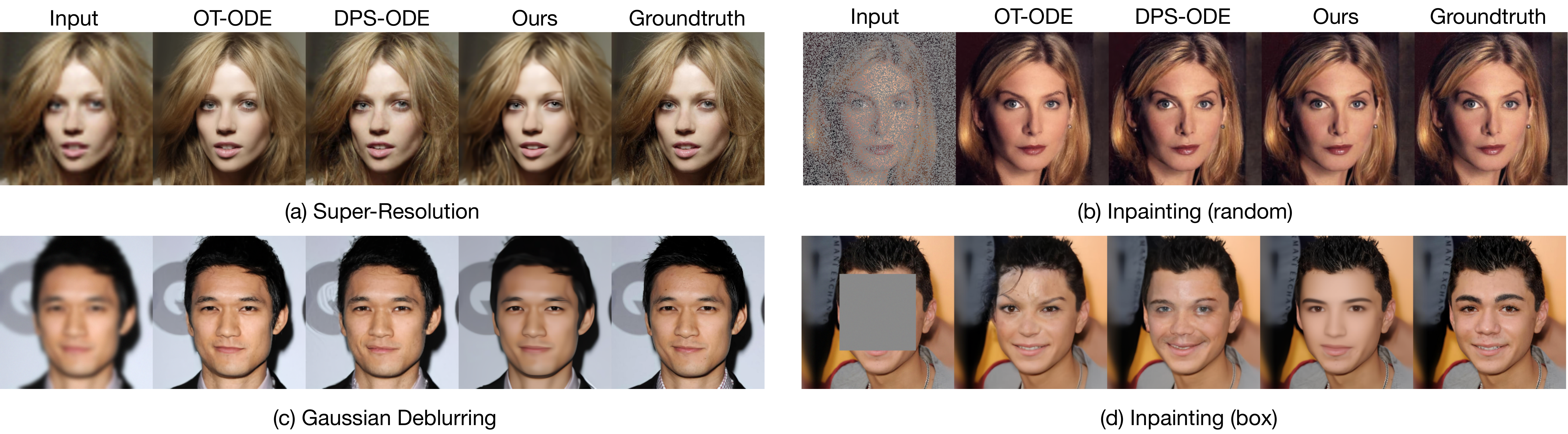}
    \caption{Qualitative comparison results on the CelebA-HQ dataset. The reconstructions generated by our method align more faithfully with the ground truth and exhibit a higher degree of refinement. }
    \label{fig:celeba_qual}
\end{figure}
 
 In our experimental setting, we use optimal transport interpolation coefficients, i.e. $\alpha_t = t$ and $\beta_t = 1-t$. 
We test our algorithm on both natural and medical imaging datasets. For natural images, we utilize the pretrained checkpoint from the official Rectified Flow repository\footnote{\url{https://github.com/gnobitab/RectifiedFlow}} and evaluate our approach on the CelebA-HQ dataset \cite{liu2015celeba, karras2017celebahq}. We address four common linear inverse problems: super-resolution, inpainting with a random mask, Gaussian deblurring, and inpainting with a box mask. For the medical application, we train a flow-based model from scratch on the Human Connectome Project (HCP) dataset \cite{hcp} and test our algorithm specifically for compressed sensing at different compression rates. Our algorithm focuses on the reconstruction faithfulness of generated images, therefore employing PSNR and SSIM \cite{ssim} as evaluation metrics.

\paragraph{Baselines} We compare our method with five baselines. 1) OT-ODE \cite{pokle2023training}. To our knowledge, this is the only baseline that applies flow-based models to inverse problems. They incorporate a prior gradient correction at each sampling step based on conditional Optimal Transport (OT) paths.   For a fair comparison, we follow  their implementation of Algorithm 1, providing detailed ablations on initialization time  $t'$ in Appendix \ref{appx:baseline}.   2) DPS-ODE. Inspired by DPS \cite{chung2023diffusion}, we replace the velocity field with a conditional one, i.e., $v(x_t|y) = v(x_t) + \zeta_t \nabla_{x_t} \log p(y|\hat{x}_1 (x_t))$, where $\zeta_t$ is a hyperparameter to tune. Following the hyperparameter instruction in DPS, we provide detailed ablations on $\zeta_t$ in Appendix \ref{appx:baseline}. 3) Ours without local prior. To examine the local prior term's effectiveness in our optimization algorithm, we drop the local prior term as defined in Eq. \eqref{eq:Jhat-definition} in our algorithm. In the experiments with natural images,  in addition to the flow-based baselines, we have included two representative diffusion-based baselines: 4) RED-Diff \cite{mardani2024a}, a variational Bayes-based method; and 5) 
$\Pi$GDM \cite{song2023pseudoinverse}, an advanced MCMC-based method. We also note one concurrent work, D-Flow \cite{ben-hamu2024dflow}, which formulates the MAP as a constrained optimization problem in their Eq. 9. As documented in their Sec. 3.4,  it takes 5-10 minutes to recover each image. This is because each of its optimization step requires backpropagation through an ODE solver to compute the full log-likelihood. In contrast, our method is significantly faster (approximately 1.6 minutes per image) due to our principled local MAP approximation, as demonstrated in Appendix \ref{appx:compute}.

\subsection{Natural Images}

\paragraph{Experimental setup}   
We evaluate our algorithm using 100 images  from the CelebA-HQ validation set with a resolution of 256$\times$256, normalizing all images to the $[0, 1]$ range for quantitative analysis. All experiments incorporate Gaussian measurement noise with  $\sigma_y = 0.01$. We address the following linear inverse problems: (1) 4$\times$ super-resolution using bicubic downsampling, (2) inpainting with a random mask covering 70\% of missing values, (3) Gaussian deblurring with a 61$\times$61 kernel and a standard deviation of 3.0, and (4) box inpainting with a centered 128$\times$128 mask.

We present the quantitative and qualitative results of all the methods in Tab. \ref{fig:celeba_quan} and Fig. \ref{fig:celeba_qual}, respectively. In Tab. \ref{fig:celeba_quan}, our method surpasses all other baselines across all tasks. For more challenging tasks such as Gaussian deblurring and box inpainting, our method significantly outperforms others in terms of SSIM. 
 Based on the MAP framework, as shown in Fig. \ref{fig:celeba_qual}, our method prefers more faithful and artifact-free reconstructions, whereas others trade off for perceptual quality. We note that there is an unavoidable tradeoff between perceptual quality and restoration faithfulness \cite{blau2018perception}.
Overall, our method presents a higher degree of refinement. The comparison between ours and ours (w/o prior) indicates the effectiveness of the local prior term in enhancing the accuracy of the reconstructions, as evidenced by the increases in both PSNR and SSIM.

\subsection{Medical application}


\begin{table}[t]
    \centering
    \caption{Results of compressed sensing with varying compression rate $\nu$ on the HCP T2w dataset. Note that compressed sensing is more challenging due to the complexity of the forward operator, as evidenced by the poor performance of OT-ODE, which assumes a Gaussian distribution of measurement $y$ given $x_t$. The best values are highlighted in blue.} 
    \begin{tabular}{l|cccccc}
        \toprule 
         & \multicolumn{2}{c}{$\nu=1/2$} & \multicolumn{2}{c}{$\nu=1/4$} & \multicolumn{2}{c}{$\nu=1/10$}\\
        \cmidrule{2-3} \cmidrule{4-5} \cmidrule{6-7}
      Method  & PSNR & SSIM & PSNR & SSIM & PSNR & SSIM  \\ 
        \midrule 
       \footnotesize    Wavelet Prior & \scriptsize 18.02 $\pm$ 1.38 &\scriptsize 0.495 $\pm$ 0.02    & \scriptsize 11.99 $\pm$ 1.34  & \scriptsize 0.230 $\pm$ 0.02 &
      \scriptsize 7.37 $\pm$ 1.85  & \scriptsize 0.090 $\pm$ 0.02 \\
   \footnotesize    TV Prior & \scriptsize 25.36 $\pm$ 2.79 & \scriptsize 0.657 $\pm$ 0.04  & \scriptsize18.70 $\pm$ 2.36 & \scriptsize0.496 $\pm$ 0.03 &\scriptsize 14.38   $\pm$  3.04  & \scriptsize 0.309    $\pm$ 0.04 \\
        \midrule
\footnotesize    OT-ODE & \scriptsize18.71 $\pm$ 1.02 & \scriptsize0.422 $\pm$ 0.17 & \scriptsize 18.16 $\pm$ 1.06 & \scriptsize 0.271 $\pm$ 0.07 & \scriptsize12.21 $\pm$ 1.43 & \scriptsize 0.096 $\pm$ 0.04\\  
\footnotesize    DPS-ODE & \scriptsize 31.06 $\pm$ 3.91 & \scriptsize 0.765 $\pm$ 0.08 & \scriptsize 25.01 $\pm$ 1.87 & \scriptsize 0.608 $\pm$ 0.08 & \scriptsize22.06 $\pm$ 1.66 & \scriptsize 0.479 $\pm$ 0.09\\  
\footnotesize    Ours & \cellcolor{lightteal}\scriptsize{32.72 $\pm$ 1.53} & \cellcolor{lightteal}\scriptsize 
  {0.878 $\pm$ 0.05} & \cellcolor{lightteal}\scriptsize  {27.03 $\pm$ 1.77} &  \cellcolor{lightteal}\scriptsize {0.733 $\pm$ 0.04} &  \cellcolor{lightteal}\scriptsize   {24.03 $\pm$ 1.23} &  \cellcolor{lightteal}\scriptsize {0.503 $\pm$ 0.04}  \\ 
        \bottomrule
    \end{tabular}
    \label{tab:medical_results}
\end{table}

 \begin{figure}[t]
     \centering
     \includegraphics[width=\textwidth]{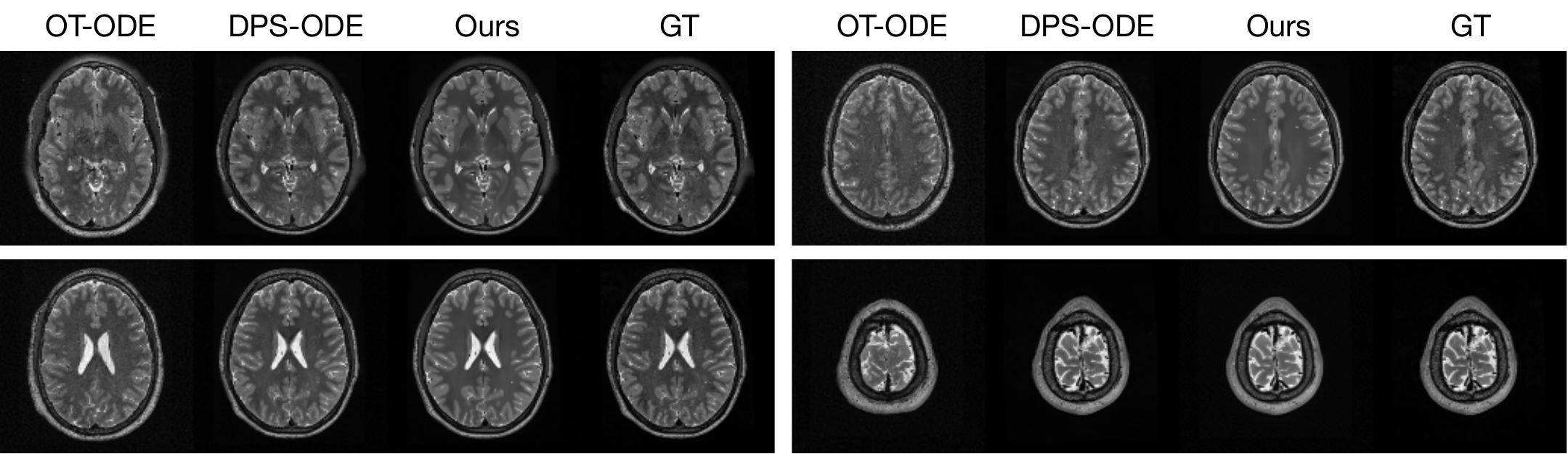}
     \caption{Qualitative comparison results on compressed sensing. Our method produces more faithful reconstructions with fewer artifacts, ensuring higher accuracy and clarity in the details.}
     \label{fig:mri}
 \end{figure}

\paragraph{HCP T2w dataset}
We utilize images from the publicly available Human Connectome Project (HCP) \cite{hcp} T2-weighted (T2w) images dataset  for the task of compressed sensing, which contains brain images from 47 patients. 
The HCP dataset includes cross-sectional images of the brain taken at different levels and angles.

\paragraph{Compressed sensing}
We train a flow-based model from scratch on  10,000 randomly sampled  images, utilizing the \textit{ncsnpp} architecture \cite{song2020score} with minor adaptations for grayscale images. We employ compression rates $\nu \in \{1/2, 1/4, 1/10\}$, meaning $m = \nu n$.  The measurement operator is given by a subsampled Fourier matrix, whose sign patterns are randomly selected.
We evaluate our reconstruction algorithm's performance on 200 randomly sampled test images.

We present the quantitative and qualitative results of compressed sensing in Tab. \ref{tab:medical_results} and Fig. \ref{fig:mri}, respectively.
 In addition to flow-based methods, we include results for two classical recovery algorithms, Wavelet \cite{donoho2006compressed, compressedmri} and TV \cite{TV-ROF} priors. As shown in Tab. \ref{tab:medical_results}, our method outperforms the classical recovery algorithms and other flow-based baselines across varying compression rates $\nu$, demonstrating our method's capability to handle challenging scenarios and the advantages of utilizing modern generative models as priors.
In Fig. \ref{fig:mri}, our method produces reconstructions that are more faithful to the original images, with fewer artifacts, leading to higher accuracy and clearer details.

\subsection{Ablation studies}

\begin{figure}[t]
    \centering
    \includegraphics[width=\textwidth]{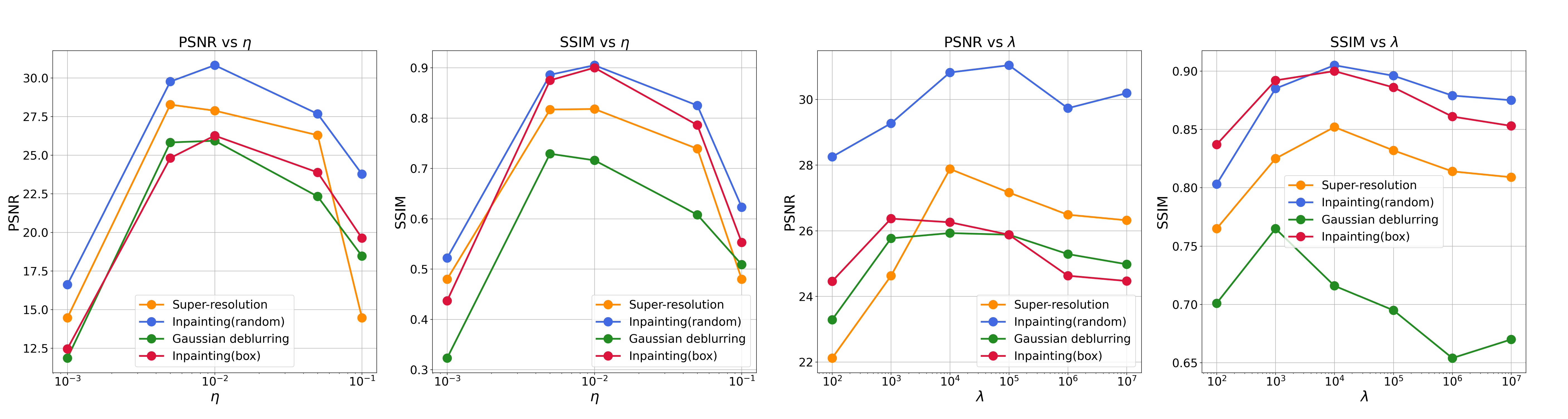}
 \caption{Ablation results of step size $\eta$ and guidance weight $\lambda$. The choice of hyperparameters for our algorithm is fairly consistent across all tasks. We choose $\eta = 10^{-2}$ for all experiments on CelebA-HQ. For $\lambda$, we choose $\lambda = 10^3$ for Gaussian deblurring and $\lambda = 10^4$ for the other tasks.
}
    \label{fig:abl}
\end{figure}

\begin{figure}[t]
    \centering
 \includegraphics[width=\textwidth]{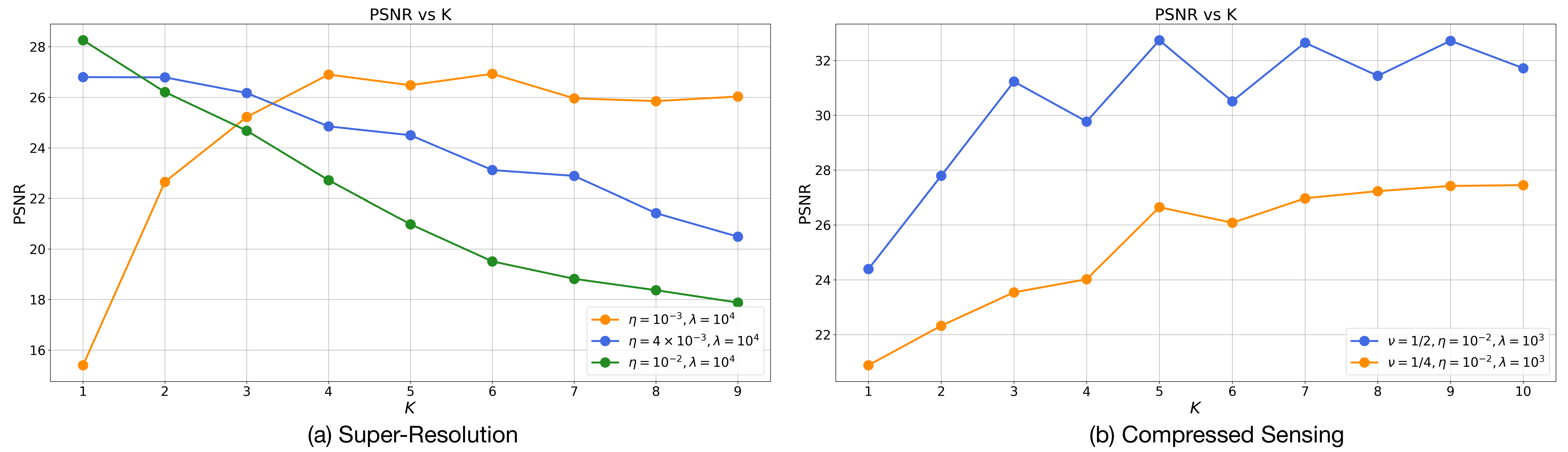}
    \caption{Ablation results of iteration number $K$ on different tasks. For super-resolution and the other three tasks, $K=1$ is sufficient to achieve the best performance with the optimal step size $\eta$ and guidance weight $\lambda$. However, for compressed sensing, it is necessary to increase $K$ to obtain the best performance. We hypothesize that this is due to the increased complexity of the compressed sensing operator, which requires more iteration steps to ensure the correct optimization direction.
  }
    \label{fig:abl_k}
\end{figure}

We use the Adam optimizer \cite{kingma2014adam} for our optimization steps due to its effectiveness in neural network computations. For all tasks, we utilize $N=100$ steps. 

\paragraph{Step size $\eta$ and Guidance weight $\lambda$} The use of the Adam optimizer ensures that the choice of hyperparameters, particularly the step size $\eta$ and the guidance weight $\lambda$, remains consistent across various tasks, as illustrated in Fig. \ref{fig:abl}. 
Specifically, a step size of $\eta = 10^{-2}$ is optimal for Inpainting (random), Inpainting (box), and Super-resolution in terms of SSIM. For PSNR, Gaussian deblurring also achieves optimal performance at $\eta = 10^{-2}$. Consequently, we employ $\eta = 10^{-2}$ for all tasks. 
Based on the results shown in the right two subfigures of Fig. \ref{fig:abl}, we select $\lambda = 10^3$ for Gaussian deblurring and $\lambda = 10^4$ for the other tasks. This consistency extends to the compressed sensing experiments, where we set $\lambda = 10^3$ and $\eta = 10^{-2}$ for all experiments involving medical images.

\paragraph{Iteration number $K$}
We present ablation results of the iteration number $K$ on different tasks in Fig. \ref{fig:abl_k}. We focus on the behavior of $K$ in super-resolution and compressed sensing, as it performs similarly to super-resolution in the other three tasks. With the optimal choice of $\eta$ and $\lambda$ in super-resolution, i.e., $\eta=10^{-2}$ and $\lambda=10^3$, $K=1$ provides superior performance on CelebA-HQ. A decreased step size, e.g., $\eta=10^{-3}$, can help performance as $K$ increases, but it fails to exceed the performance achieved with the optimal parameters at $K=1$. However, for compressed sensing, it is necessary to increase $K$ to achieve the best performance. Consequently, we set $K=10$ for all compressed sensing experiments. We hypothesize that the complexity of the compressed sensing operator directly determines the number of iterations required for optimal performance.



\section{Conclusion}

In this work, we have introduced a novel iterative algorithm to incorporate flow priors to solve linear inverse problems. By addressing the computational challenges associated with the slow log-likelihood calculations inherent in flow matching models, our approach leverages the decomposition of the MAP objective into multiple "local MAP" objectives. This decomposition, combined with the application of Tweedie’s formula, enables effective sequential optimization through gradient steps. Our method has been rigorously validated on both natural and scientific images across various linear inverse problems, including super-resolution, deblurring, inpainting, and compressed sensing. The empirical results indicate that our algorithm consistently outperforms existing techniques based on flow matching, highlighting its potential as a powerful tool for high-resolution image synthesis and related downstream tasks. 

\section{Limitations and Future Work} \label{appx:lim-future-work}
While our algorithm has demonstrated promising results, there are certain limitations that suggest avenues for future research. First, our theoretical framework, built on optimal transport interpolation paths,  is currently limited and cannot be applied to solve the general interpolation between Gaussian and data distributions.  Additionally, in order to broaden the applicability of flow priors for inverse problems, it is important to generalize our approach to handle nonlinear forward models. Moreover, the algorithm currently lacks the capability to quantify the uncertainty of the generated images, an aspect crucial for many scientific applications. It would be interesting to consider approaches to post-process our solutions to understand the uncertainty inherent in our reconstruction. These limitations highlight important directions for future work to enhance the robustness and applicability of our method.

\clearpage

\section*{Acknowledgements}

The work was partially supported by NSF DMS-2015577, NSF DMS-2415226, and a gift fund from Amazon. We thank anonymous reviewers for their feedback and suggestions, which helped improve the quality of the paper.

\bibliographystyle{plain}
\bibliography{ref}

\begin{thebibliography}{10}

\bibitem{Asimetal20}
Muhammad Asim, Max Daniels, Oscar Leong, Ali Ahmed, and Paul Hand.
\newblock Invertible generative models for inverse problems: mitigating representation error and dataset bias.
\newblock {\em Proceedings of the 37th International Conference on Machine Learning}, 2020.

\bibitem{ben-hamu2024dflow}
Heli Ben-Hamu, Omri Puny, Itai Gat, Brian Karrer, Uriel Singer, and Yaron Lipman.
\newblock D-flow: Differentiating through flows for controlled generation.
\newblock In {\em Forty-first International Conference on Machine Learning}, 2024.

\bibitem{blau2018perception}
Yochai Blau and Tomer Michaeli.
\newblock The perception-distortion tradeoff.
\newblock In {\em Proceedings of the IEEE conference on computer vision and pattern recognition}, pages 6228--6237, 2018.

\bibitem{Boraetal17}
Ashish Bora, Ajil Jalal, Eric Price, and Alexandros Dimakis.
\newblock Compressed sensing using generative models.
\newblock {\em International Conference on Machine Learning}, 2017.

\bibitem{brooks2011handbook}
Steve Brooks, Andrew Gelman, Galin Jones, and Xiao-Li Meng.
\newblock {\em Handbook of markov chain monte carlo}.
\newblock CRC press, 2011.

\bibitem{burger2014map}
Martin Burger and Felix Lucka.
\newblock Maximum a posteriori estimates in linear inverse problems with log-concave priors are proper bayes estimators.
\newblock {\em Inverse Problems}, 30(11):114004, 2014.

\bibitem{buzug2011ct}
Thorsten~M Buzug.
\newblock Computed tomography.
\newblock In {\em Springer handbook of medical technology}, pages 311--342. Springer, 2011.

\bibitem{chang2024skews}
Yingshan Chang, Yasi Zhang, Zhiyuan Fang, Yingnian Wu, Yonatan Bisk, and Feng Gao.
\newblock Skews in the phenomenon space hinder generalization in text-to-image generation.
\newblock {\em arXiv preprint arXiv:2403.16394}, 2024.

\bibitem{neuralODE}
Ricky T.~Q. Chen, Yulia Rubanova, Jesse Bettencourt, and David~K Duvenaud.
\newblock Neural ordinary differential equations.
\newblock In S.~Bengio, H.~Wallach, H.~Larochelle, K.~Grauman, N.~Cesa-Bianchi, and R.~Garnett, editors, {\em Advances in Neural Information Processing Systems}, volume~31. Curran Associates, Inc., 2018.

\bibitem{chung2023diffusion}
Hyungjin Chung, Jeongsol Kim, Michael~T Mccann, Marc~L Klasky, and Jong~Chul Ye.
\newblock Diffusion posterior sampling for general noisy inverse problems.
\newblock In {\em The Eleventh International Conference on Learning Representations, ICLR 2023}. The International Conference on Learning Representations, 2023.

\bibitem{donoho2006compressed}
David~L Donoho.
\newblock Compressed sensing.
\newblock {\em IEEE Transactions on information theory}, 52(4):1289--1306, 2006.

\bibitem{efron2011tweedie}
Bradley Efron.
\newblock Tweedie’s formula and selection bias.
\newblock {\em Journal of the American Statistical Association}, 106(496):1602--1614, 2011.

\bibitem{esser2024scaling}
Patrick Esser, Sumith Kulal, Andreas Blattmann, Rahim Entezari, Jonas M{\"u}ller, Harry Saini, Yam Levi, Dominik Lorenz, Axel Sauer, Frederic Boesel, et~al.
\newblock Scaling rectified flow transformers for high-resolution image synthesis.
\newblock {\em arXiv preprint arXiv:2403.03206}, 2024.

\bibitem{fang2023whats}
Zhenghan Fang, Sam Buchanan, and Jeremias Sulam.
\newblock What's in a prior? learned proximal networks for inverse problems.
\newblock In {\em International Conference on Learning Representations}, 2024.

\bibitem{feng2023efficientprior}
Berthy Feng and Katherine Bouman.
\newblock Variational bayesian imaging with an efficient surrogate score-based prior.
\newblock {\em Transactions on Machine Learning Research}, 2024.

\bibitem{feng2023scoreprior}
Berthy~T Feng, Jamie Smith, Michael Rubinstein, Huiwen Chang, Katherine~L Bouman, and William~T Freeman.
\newblock Score-based diffusion models as principled priors for inverse imaging.
\newblock In {\em International Conference on Computer Vision (ICCV)}. IEEE, 2023.

\bibitem{gan}
Ian Goodfellow, Jean Pouget-Abadie, Mehdi Mirza, Bing Xu, David Warde-Farley, Sherjil Ozair, Aaron Courville, and Yoshua Bengio.
\newblock Generative adversarial networks.
\newblock {\em Communications of the ACM}, 63(11):139--144, 2020.

\bibitem{FFJORD}
Will Grathwohl, Ricky T.~Q. Chen, Jesse Bettencourt, Ilya Sutskever, and David Duvenaud.
\newblock Ffjord: Free-form continuous dynamics for scalable reversible generative models.
\newblock In {\em International Conference on Learning Representations}, 2019.

\bibitem{helin2015maximum}
Tapio Helin and Martin Burger.
\newblock Maximum a posteriori probability estimates in infinite-dimensional bayesian inverse problems.
\newblock {\em Inverse Problems}, 31(8):085009, 2015.

\bibitem{ho2020ddpm}
Jonathan Ho, Ajay Jain, and Pieter Abbeel.
\newblock Denoising diffusion probabilistic models.
\newblock {\em Advances in neural information processing systems}, 33:6840--6851, 2020.

\bibitem{hutchinson1989stochastic}
Michael~F Hutchinson.
\newblock A stochastic estimator of the trace of the influence matrix for laplacian smoothing splines.
\newblock {\em Communications in Statistics-Simulation and Computation}, 18(3):1059--1076, 1989.

\bibitem{TV-ROF}
Leonid I.Rudin, Stanley Osher, and Emad Fatemi.
\newblock Nonlinear total variation based noise removal algorithms.
\newblock {\em Physica D: Nonlinear Phenomena}, 60:259--268, 1992.

\bibitem{jansson2014deconvolution}
Peter~A Jansson.
\newblock {\em Deconvolution of images and spectra}.
\newblock Courier Corporation, 2014.

\bibitem{karras2017celebahq}
Tero Karras, Timo Aila, Samuli Laine, and Jaakko Lehtinen.
\newblock Progressive growing of gans for improved quality, stability, and variation.
\newblock {\em arXiv preprint arXiv:1710.10196}, 2017.

\bibitem{ruiqi}
Diederik Kingma and Ruiqi Gao.
\newblock Understanding diffusion objectives as the elbo with simple data augmentation.
\newblock In A.~Oh, T.~Naumann, A.~Globerson, K.~Saenko, M.~Hardt, and S.~Levine, editors, {\em Advances in Neural Information Processing Systems}, volume~36, pages 65484--65516. Curran Associates, Inc., 2023.

\bibitem{kingma2014adam}
Diederik~P Kingma and Jimmy Ba.
\newblock Adam: A method for stochastic optimization.
\newblock {\em arXiv preprint arXiv:1412.6980}, 2014.

\bibitem{vae}
Diederik~P Kingma and Max Welling.
\newblock Auto-encoding variational bayes.
\newblock {\em arXiv preprint arXiv:1312.6114}, 2013.

\bibitem{lipman2022flow}
Yaron Lipman, Ricky~TQ Chen, Heli Ben-Hamu, Maximilian Nickel, and Matthew Le.
\newblock Flow matching for generative modeling.
\newblock In {\em The Eleventh International Conference on Learning Representations}, 2022.

\bibitem{liu2022flow}
Xingchao Liu, Chengyue Gong, et~al.
\newblock Flow straight and fast: Learning to generate and transfer data with rectified flow.
\newblock In {\em The Eleventh International Conference on Learning Representations}, 2022.

\bibitem{liu2023instaflow}
Xingchao Liu, Xiwen Zhang, Jianzhu Ma, Jian Peng, and Qiang Liu.
\newblock Instaflow: One step is enough for high-quality diffusion-based text-to-image generation.
\newblock In {\em International Conference on Learning Representations}, 2024.

\bibitem{liu2015celeba}
Ziwei Liu, Ping Luo, Xiaogang Wang, and Xiaoou Tang.
\newblock Deep learning face attributes in the wild.
\newblock In {\em Proceedings of the IEEE international conference on computer vision}, pages 3730--3738, 2015.

\bibitem{compressedmri}
Michael Lustig, David~L. Donoho, Juan~M. Santos, and John~M. Pauly.
\newblock Compressed sensing mri.
\newblock {\em IEEE Signal Processing Magazine}, 25(2):72--82, 2008.

\bibitem{mardani2024a}
Morteza Mardani, Jiaming Song, Jan Kautz, and Arash Vahdat.
\newblock A variational perspective on solving inverse problems with diffusion models.
\newblock In {\em The Twelfth International Conference on Learning Representations}, 2024.

\bibitem{PULSE_CVPR_2020}
Sachit Menon, Alex Damian, McCourt Hu, Nikhil Ravi, and Cynthia Rudin.
\newblock Pulse: Self-supervised photo upsampling via latent space exploration of generative models.
\newblock In {\em The IEEE Conference on Computer Vision and Pattern Recognition (CVPR)}, June 2020.

\bibitem{nolet2008breviary}
Guust Nolet.
\newblock A breviary of seismic tomography.
\newblock {\em A breviary of seismic tomography}, 2008.

\bibitem{ieeesurvey}
Gregory Ongie, Ajil Jalal, Christopher~A. Metzler, Richard~G. Baraniuk, Alexandros~G. Dimakis, and Rebecca Willett.
\newblock Deep learning techniques for inverse problems in imaging.
\newblock {\em IEEE Journal on Selected Areas in Information Theory}, 1(1):39--56, 2020.

\bibitem{pokle2023training}
Ashwini Pokle, Matthew~J Muckley, Ricky~TQ Chen, and Brian Karrer.
\newblock Training-free linear image inversion via flows.
\newblock {\em arXiv preprint arXiv:2310.04432}, 2023.

\bibitem{ravishankar2019image}
Saiprasad Ravishankar, Jong~Chul Ye, and Jeffrey~A Fessler.
\newblock Image reconstruction: From sparsity to data-adaptive methods and machine learning.
\newblock {\em Proceedings of the IEEE}, 108(1):86--109, 2019.

\bibitem{rawlinson2014seismic}
Nicholas Rawlinson, Andreas Fichtner, Malcolm Sambridge, and Mallory~K Young.
\newblock Seismic tomography and the assessment of uncertainty.
\newblock {\em Advances in geophysics}, 55:1--76, 2014.

\bibitem{normalizingflow}
Danilo Rezende and Shakir Mohamed.
\newblock Variational inference with normalizing flows.
\newblock In {\em International conference on machine learning}, pages 1530--1538. PMLR, 2015.

\bibitem{roddier1988interferometric}
Fran{\c{c}}ois Roddier.
\newblock Interferometric imaging in optical astronomy.
\newblock {\em Physics Reports}, 170(2):97--166, 1988.

\bibitem{ronneberger2015unet}
Olaf Ronneberger, Philipp Fischer, and Thomas Brox.
\newblock U-net: Convolutional networks for biomedical image segmentation.
\newblock In {\em Medical image computing and computer-assisted intervention--MICCAI 2015: 18th international conference, Munich, Germany, October 5-9, 2015, proceedings, part III 18}, pages 234--241. Springer, 2015.

\bibitem{rout2024solving}
Litu Rout, Negin Raoof, Giannis Daras, Constantine Caramanis, Alex Dimakis, and Sanjay Shakkottai.
\newblock Solving linear inverse problems provably via posterior sampling with latent diffusion models.
\newblock {\em Advances in Neural Information Processing Systems}, 36, 2024.

\bibitem{skilling1989eigenvalues}
John Skilling.
\newblock The eigenvalues of mega-dimensional matrices.
\newblock {\em Maximum Entropy and Bayesian Methods: Cambridge, England, 1988}, pages 455--466, 1989.

\bibitem{sohl2015deep}
Jascha Sohl-Dickstein, Eric Weiss, Niru Maheswaranathan, and Surya Ganguli.
\newblock Deep unsupervised learning using nonequilibrium thermodynamics.
\newblock In {\em International conference on machine learning}, pages 2256--2265. PMLR, 2015.

\bibitem{song2023pseudoinverse}
Jiaming Song, Arash Vahdat, Morteza Mardani, and Jan Kautz.
\newblock Pseudoinverse-guided diffusion models for inverse problems.
\newblock In {\em International Conference on Learning Representations}, 2023.

\bibitem{song2021maximum}
Yang Song, Conor Durkan, Iain Murray, and Stefano Ermon.
\newblock Maximum likelihood training of score-based diffusion models.
\newblock {\em Advances in neural information processing systems}, 34:1415--1428, 2021.

\bibitem{song2020score}
Yang Song, Jascha Sohl-Dickstein, Diederik~P Kingma, Abhishek Kumar, Stefano Ermon, and Ben Poole.
\newblock Score-based generative modeling through stochastic differential equations.
\newblock {\em arXiv preprint arXiv:2011.13456}, 2020.

\bibitem{suetens2017fundamentals}
Paul Suetens.
\newblock {\em Fundamentals of medical imaging}.
\newblock Cambridge university press, 2017.

\bibitem{hcp}
David~C Van~Essen, Stephen~M Smith, Deanna~M Barch, Timothy~EJ Behrens, Essa Yacoub, Kamil Ugurbil, Wu-Minn~HCP Consortium, et~al.
\newblock The wu-minn human connectome project: an overview.
\newblock {\em Neuroimage}, 80:62--79, 2013.

\bibitem{vaswani2017attention}
Ashish Vaswani, Noam Shazeer, Niki Parmar, Jakob Uszkoreit, Llion Jones, Aidan~N Gomez, {\L}ukasz Kaiser, and Illia Polosukhin.
\newblock Attention is all you need.
\newblock {\em Advances in neural information processing systems}, 30, 2017.

\bibitem{vlaardingerbroek2013magnetic}
Marinus~T Vlaardingerbroek and Jacques~A Boer.
\newblock {\em Magnetic resonance imaging: theory and practice}.
\newblock Springer Science \& Business Media, 2013.

\bibitem{ssim}
Zhou Wang, A.C. Bovik, H.R. Sheikh, and E.P. Simoncelli.
\newblock Image quality assessment: from error visibility to structural similarity.
\newblock {\em IEEE Transactions on Image Processing}, 13(4):600--612, 2004.

\bibitem{Whangetal21}
Jay Whang, Erik~M. Lindgren, and Alexandros~G. Dimakis.
\newblock Composing normalizing flows for inverse problems.
\newblock {\em Proceedings of the 38th International Conference on Machine Learning}, 2021.

\bibitem{xie}
Jianwen Xie, Yang Lu, Song-Chun Zhu, and Yingnian Wu.
\newblock A theory of generative convnet.
\newblock In Maria~Florina Balcan and Kilian~Q. Weinberger, editors, {\em Proceedings of The 33rd International Conference on Machine Learning}, volume~48 of {\em Proceedings of Machine Learning Research}, pages 2635--2644, New York, New York, USA, 20--22 Jun 2016. PMLR.

\bibitem{yan2024perflow}
Hanshu Yan, Xingchao Liu, Jiachun Pan, Jun~Hao Liew, Qiang Liu, and Jiashi Feng.
\newblock Perflow: Piecewise rectified flow as universal plug-and-play accelerator.
\newblock {\em arXiv preprint arXiv:2405.07510}, 2024.

\bibitem{yu2024latent}
Peiyu Yu, Dinghuai Zhang, Hengzhi He, Xiaojian Ma, Ruiyao Miao, Yifan Lu, Yasi Zhang, Deqian Kong, Ruiqi Gao, Jianwen Xie, et~al.
\newblock Latent energy-based odyssey: Black-box optimization via expanded exploration in the energy-based latent space.
\newblock {\em arXiv preprint arXiv:2405.16730}, 2024.

\bibitem{zhang2024objectconditioned}
Yasi Zhang, Peiyu Yu, and Ying~Nian Wu.
\newblock Object-conditioned energy-based model for attention map alignment in text-to-image diffusion models.
\newblock In {\em Synthetic Data for Computer Vision Workshop @ CVPR 2024}, 2024.

\bibitem{zhang2024object}
Yasi Zhang, Peiyu Yu, and Ying~Nian Wu.
\newblock Object-conditioned energy-based attention map alignment in text-to-image diffusion models.
\newblock In Ale{\v{s}} Leonardis, Elisa Ricci, Stefan Roth, Olga Russakovsky, Torsten Sattler, and G{\"u}l Varol, editors, {\em Computer Vision -- ECCV 2024}, pages 55--71, Cham, 2025. Springer Nature Switzerland.

\bibitem{zhu2024think}
Yaxuan Zhu, Zehao Dou, Haoxin Zheng, Yasi Zhang, Ying~Nian Wu, and Ruiqi Gao.
\newblock Think twice before you act: Improving inverse problem solving with mcmc.
\newblock {\em arXiv preprint arXiv:2409.08551}, 2024.

\end{thebibliography}


\appendix

\clearpage

{\Large \textbf{Appendix}}

\section{Proof}

Before we dive into the proof, we provide the following three lemmas. 
\begin{lemma} \label{lem:integral-inequality}
    Consider a vector-valued function $f : [0,1] \rightarrow \RR^n$. Then for any $t \in [0,1]$, we have that \begin{align*}
        \left\|\int_0^t f(s) ds\right\|^2 \leq \int_0^t \|f(s)\|^2 ds.
    \end{align*}
\end{lemma}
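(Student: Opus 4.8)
The plan is to reduce the statement to a coordinatewise application of the Cauchy--Schwarz inequality, which produces a harmless factor of $t$ that is then absorbed using $t \le 1$.

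First I would write $f = (f_1,\dots,f_n)$ in coordinates, so that $\left\|\int_0^t f(s)\,ds\right\|^2 = \sum_{j=1}^n \left(\int_0^t f_j(s)\,ds\right)^2$. For each fixed $j$, I apply Cauchy--Schwarz on $L^2([0,t])$ to the pair of functions $s \mapsto 1$ and $s \mapsto f_j(s)$, obtaining $\left(\int_0^t f_j(s)\,ds\right)^2 \le \left(\int_0^t 1\,ds\right)\left(\int_0^t f_j(s)^2\,ds\right) = t\int_0^t f_j(s)^2\,ds$.

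Next I would sum these $n$ inequalities over $j$ and interchange the finite sum with the integral to get $\left\|\int_0^t f(s)\,ds\right\|^2 \le t\int_0^t \sum_{j=1}^n f_j(s)^2\,ds = t\int_0^t \|f(s)\|^2\,ds$. Finally, since $t \in [0,1]$ we have $t \le 1$ while $\int_0^t \|f(s)\|^2\,ds \ge 0$, so the prefactor $t$ can simply be dropped, which yields the claimed inequality.

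There is no real obstacle here: the result is a standard consequence of Cauchy--Schwarz, equivalently of Jensen's inequality applied to the convex map $v \mapsto \|v\|^2$ against the uniform probability measure on $[0,t]$ (one could also argue via the duality $\|v\| = \sup_{\|u\|=1}\langle u,v\rangle$ and Cauchy--Schwarz in $L^2([0,t])$). The only points worth flagging are that the sharp bound naturally carries the factor equal to the length of the interval, so the stated inequality holds precisely because the domain is normalized to $[0,1]$, and that $f$ should be assumed integrable (e.g. continuous, as it will be in every application in the paper) so that all the integrals above are well-defined.
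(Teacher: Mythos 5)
Your proof is correct and takes essentially the same route as the paper's: reduce to coordinates, apply a scalar Cauchy--Schwarz/Jensen bound to each component, and sum. If anything, yours is slightly more careful — the paper cites Jensen's inequality in the unnormalized form $g\bigl(\int_a^b h\bigr) \le \int_a^b g(h)$, which for $g(v)=v^2$ is valid only because the interval length is at most $1$, i.e.\ exactly the factor $t \le 1$ that you track explicitly and then drop.
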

\begin{proof}
    For each $s \in [0,1]$, let $f_i(s) \in \RR$ denote the $i$-th component of $f(s)$. Recall Jensen's inequality: for any convex function $g : \RR \rightarrow \RR$ and integrable function $h : [0,1] \rightarrow \RR$, we have $$g\left(\int_a^bh(t)dt \right) \leq \int_a^b g(h(t))dt.$$ Using convexity of the function $t \mapsto t^2$ and applying Jensen's inequality, we see that \begin{align*}
        \left\|\int_0^t f(s) ds\right\|^2 & = \sum_{i=1}^n\left(\int_0^t f_i(s)ds\right)^2 \\
        & \leq \sum_{i=1}^n \int_0^t f_i(s)^2 ds \\
        & = \int_0^t\sum_{i=1}^n f_i(s)^2 ds \\
        & = \int_0^t \|f(s)\|^2ds.
    \end{align*}
\end{proof}
 
\begin{lemma}[Tweedie's Formula \cite{efron2011tweedie}]\label{tweedie}
	If $\mu \sim g(\cdot), z|\mu \sim \cN(\alpha\mu, \sigma^2I)$, and therefore $z \sim f(\cdot)$, we have
	\begin{align}
	    \EE[\mu |z] = \frac{1}{\alpha}[z + \sigma^2 \nabla_z \log f(z)].
	\end{align}
\end{lemma}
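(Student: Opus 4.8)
The plan is to prove Tweedie's formula directly from the definition of the posterior mean, exploiting the special structure of the Gaussian likelihood, namely that its gradient in $z$ is an affine function of $\mu$ times the density itself. Writing the Gaussian conditional density as
\begin{align*}
p(z \mid \mu) = \frac{1}{(2\pi\sigma^2)^{n/2}} \exp\left(-\frac{\|z - \alpha\mu\|^2}{2\sigma^2}\right),
\end{align*}
the first thing I would compute is $\nabla_z p(z\mid\mu) = p(z\mid\mu)\,\frac{\alpha\mu - z}{\sigma^2}$, which is the key algebraic identity that makes the whole argument work.

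Next I would express the marginal density as $f(z) = \int p(z\mid\mu)\,g(\mu)\,d\mu$ and differentiate under the integral sign to obtain
\begin{align*}
\nabla_z f(z) = \int \nabla_z p(z\mid\mu)\,g(\mu)\,d\mu = \frac{1}{\sigma^2}\left[\alpha \int \mu\, p(z\mid\mu)\,g(\mu)\,d\mu - z\,f(z)\right].
\end{align*}
Dividing both sides by $f(z)$ and recognizing that $\frac{p(z\mid\mu)g(\mu)}{f(z)} = p(\mu\mid z)$ is the posterior density (Bayes' rule), the first term collapses to $\alpha\,\EE[\mu \mid z]$, giving
\begin{align*}
\nabla_z \log f(z) = \frac{1}{\sigma^2}\bigl(\alpha\,\EE[\mu\mid z] - z\bigr).
\end{align*}
Rearranging this for $\EE[\mu\mid z]$ yields exactly the claimed identity $\EE[\mu\mid z] = \frac{1}{\alpha}\bigl[z + \sigma^2 \nabla_z \log f(z)\bigr]$.

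The main obstacle, and the only genuinely technical point, is justifying the interchange of the gradient $\nabla_z$ with the integral over $\mu$. This requires a dominated-convergence (differentiation-under-the-integral-sign) argument: one must verify that $\|\nabla_z p(z\mid\mu)\| g(\mu)$ is dominated, uniformly for $z$ in a neighborhood, by a $g$-integrable function of $\mu$, which holds because the Gaussian and its gradient decay rapidly and the prefactor $\|\alpha\mu - z\|/\sigma^2$ grows only polynomially. I would state this regularity condition explicitly (finite first moment of $g$ suffices, together with the standard Gaussian tail bounds) and invoke the Leibniz rule; everything else in the proof is the elementary algebra above.
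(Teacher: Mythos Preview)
Your derivation is correct and is the standard proof of Tweedie's formula. Note, however, that the paper does not actually prove this lemma: it is stated with a citation to Efron (2011) and used as a known result, so there is no ``paper's own proof'' to compare against. Your argument supplies exactly the classical justification the paper omits.
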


\begin{lemma}\label{lem:1}
Suppose $y = \cA(x_*) + \epsilon$ where $x_* = x_1(x_0)$ with $x_0$ being the solution to Eq. \eqref{eq:globalupdate}, $\cA : \RR^n \rightarrow \RR^m$ is linear, $\epsilon \sim \cN(0,\sigma_y^2I)$, and $x_t$ exactly follows the path $x_t = \alpha_tx + \beta_tx_0$ for any time $t \in [0,1]$. 
  Then we have   
    \begin{align}\label{eq:appx17}
        p(y_{t}|   x_{t}  ) = \cN(\cA x_{t}, \alpha_{t}^2\sigma_y^2I ),
    \end{align}
    and hence
    \begin{align}
         \log p(y|x(x_0)) = \log p(y_{t}|x_{t}) + \frac{m}{2} \log (\alpha_t^2), \forall t.
    \end{align}
\end{lemma}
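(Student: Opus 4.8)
\textbf{Proof proposal for Lemma \ref{lem:1}.}

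The plan is to directly compute the conditional law of the auxiliary path variable $y_t := \alpha_t y + \beta_t \cA(x_0)$ given $x_t$, using the assumption that the flow trajectory exactly follows the interpolation path $x_t = \alpha_t x + \beta_t x_0$. First I would observe that, since $x_* = x_1(x_0) = x$ and the trajectory is by assumption exactly the straight interpolation, conditioning on $x_t$ is equivalent to conditioning on the pair $(x, x_0)$ along that trajectory; in particular $\cA(x_t) = \alpha_t \cA(x) + \beta_t \cA(x_0)$ is a deterministic function of $x_t$. Then I would substitute the measurement model $y = \cA(x_*) + \epsilon = \cA(x) + \epsilon$ into the definition of $y_t$:
\begin{align*}
    y_t = \alpha_t y + \beta_t \cA(x_0) = \alpha_t \cA(x) + \beta_t \cA(x_0) + \alpha_t \epsilon = \cA(x_t) + \alpha_t \epsilon.
\end{align*}
Since $\epsilon \sim \cN(0, \sigma_y^2 I)$ is independent of $x$ and $x_0$ (hence of $x_t$), the conditional distribution of $y_t$ given $x_t$ is Gaussian with mean $\cA(x_t)$ and covariance $\alpha_t^2 \sigma_y^2 I$, which is exactly Eq. \eqref{eq:appx17}.

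For the second claim, I would relate $p(y \mid x(x_0))$ and $p(y_t \mid x_t)$ through the change of variables $y \mapsto y_t$ at fixed $(x_0, x)$, i.e. at fixed trajectory. At $t = 1$ we have $\alpha_1 = 1$, $\beta_1 = 0$, so $y_1 = y$ and $x_1 = x$; thus $p(y \mid x(x_0)) = p(y_1 \mid x_1) = \cN(\cA(x), \sigma_y^2 I)$. The map $y \mapsto y_t = \alpha_t y + \beta_t \cA(x_0)$ is an affine bijection on $\RR^m$ with constant Jacobian $\alpha_t^m$, so the density transforms as $p(y_t \mid x_t) = \alpha_t^{-m} p(y \mid x(x_0))$ after accounting for the shift being a deterministic function of the conditioning variables. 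Taking logarithms gives $\log p(y \mid x(x_0)) = \log p(y_t \mid x_t) + m \log \alpha_t = \log p(y_t \mid x_t) + \tfrac{m}{2}\log(\alpha_t^2)$. Alternatively, and perhaps more cleanly, I would just compare the two explicit Gaussian densities from Eq. \eqref{eq:appx17} directly: the ratio of normalizing constants of $\cN(\cA x_t, \alpha_t^2\sigma_y^2 I)$ and $\cN(\cA x, \sigma_y^2 I)$ is $\alpha_t^m$, and the quadratic exponents agree because $\|y_t - \cA x_t\|^2/(\alpha_t^2\sigma_y^2) = \|\alpha_t \epsilon\|^2/(\alpha_t^2 \sigma_y^2) = \|\epsilon\|^2/\sigma_y^2 = \|y - \cA x\|^2/\sigma_y^2$.

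The only subtle point — and the main thing to get right rather than a genuine obstacle — is the bookkeeping of what is random and what is being conditioned on: one must use the exact-compliance assumption to argue that $\cA(x_0)$ (equivalently, the shift $\beta_t \cA(x_0)$ appearing in $y_t$) is measurable with respect to $x_t$, so that adding it does not change the conditional variance and only contributes a deterministic translation of the mean. Once that is pinned down, both parts are short Gaussian computations. I would present the substitution $y_t = \cA(x_t) + \alpha_t\epsilon$ as the central identity and derive everything else from it.
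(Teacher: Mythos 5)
Your proposal is correct and follows essentially the same route as the paper: both hinge on the identity $y_t - \cA(x_t) = \alpha_t\,(y - \cA(x(x_0))) = \alpha_t \epsilon$, conclude $y_t \mid x_t \sim \cN(\cA x_t, \alpha_t^2\sigma_y^2 I)$, and then obtain the log-likelihood relation by comparing the two explicit Gaussian densities, whose quadratic exponents coincide and whose normalizations differ by $\tfrac{m}{2}\log(\alpha_t^2)$. Your extra remarks (the Jacobian view of the affine map and the measurability of $\cA(x_0)$ given the trajectory) are harmless refinements of the same argument, not a different proof.
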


\begin{proof}
Recall that the generated auxiliary path $y_t= \alpha_t y+ \beta_t \cA x_0$. By assumption, we have $\cA(x_t) = \cA(  \alpha_t x+\beta_t x_0) = \alpha_t \cA(x(x_0))+ \beta_t \cA x_0. $
 By subtracting these two equations, we have 
 \begin{align}
      y_{t } - \cA(x_{t})  =  \alpha_{t}(y - \cA(x(x_0)). 
 \end{align}
 As $y|x(x_0) \sim \cN(\cA x, \sigma_y^2 I)$, we have $y_t|x_t \sim \cN(\cA x_t, \alpha_t^2 \sigma_y^2 I)$. The proof for Eq. \eqref{eq:appx17} is done. Next, we examine the log probability as follows:   
 \begin{align}
      \log p(y_{t}|x_{t}) &= - \frac{\|y_t - \cA x_t\|^2}{2\alpha_t^2 \sigma_y^2} - \frac{m}{2} \log (2\pi \alpha_t^2 \sigma_y^2)\\
      & =  - \frac{\|\alpha_{t}(y - \cA(x(x_0))\|^2}{2\alpha_t^2 \sigma_y^2} - \frac{m}{2} \log (2\pi \alpha_t^2 \sigma_y^2)
      \\ &= - \frac{\| y - \cA(x(x_0)\|^2}{2\sigma_y^2} - \frac{m}{2} \log (2\pi  \sigma_y^2) - \frac{m}{2} \log (\alpha_t^2)
      \\ & := \log p(y|x(x_0))  - \frac{m}{2} \log (\alpha_t^2).
 \end{align}
\end{proof}

 
 

\subsection{Proof of Proposition \ref{prop:2}} \label{proof:prop2}
Trained by the objective defined in Eq. \eqref{eq:flowobjective}, the optimal velocity field would be  
\begin{align}
    v_\theta(x_t,t) &= \EE(\dot \alpha_t x_1 + \dot \beta_t  x_0|x_t) \\
     & = \EE( \dot \alpha_t x_1 + \dot \beta_t  \frac{x_t - \alpha_t x}{\beta_t}|x_t)  & \text{\# Given $x_t$, $x_0 = \frac{x_t - \alpha_tx}{\beta_t}$} \\
     & =(\dot \alpha_t - \dot \beta_t\frac{\alpha_t}{\beta_t}) \EE( x_1|x_t) + \frac{\dot \beta_t}{\beta_t} x_t
\\     & = (\dot \alpha_t - \dot \beta_t\frac{\alpha_t}{\beta_t})[ \frac{1}{\alpha_t} (x_t + \beta_t^2  \nabla_{x_t} \log p(x_t))] + \frac{\dot \beta_t}{\beta_t} x_t. & \text{\# Lemma \ref{tweedie}(Tweedie's Formula)}
\end{align}
By defining the signal-to-noise ratio as $\lambda_t = \alpha_t/\beta_t$ and rearranging the equation above, we get exactly Eq. \eqref{eq:prop2} which we display again below: 
 \begin{align}
    \nabla_{x_t} \log p(x_t) = \frac{1}{\beta_t^2} \left [ \left (  \frac{d \log \lambda_t}{dt} \right )^{-1} \left (v_{\theta }(x_t, t) - \frac{d \log \beta_t}{dt} x_t \right ) -x_t \right  ].
\end{align}
 
When $\alpha_t = t$, $\beta_t = 1-t$, the equation above becomes
\begin{align} \label{eq:connect}
    \nabla_{x_t}\log p(x_t) = \frac{1}{1-t} (-x_t + tv_\theta(x_t, t)).
\end{align}

\subsection{Proof of Theorem \ref{th:1}} \label{proof:th1}

Before we dive into the proof, we first point out $\lim_{\Delta t \to 0} \sum_{i=1}^N \gamma_i = 1$. 
Define the timestep $t  = (i-1)\Delta t$. Conversely, $i = 1 + t/\Delta t$ is a function of $t$. In this sense, we define the $i$-th step Riemannian discretization of the integral $-\int_0^1 \tr\left (\frac{\partial v_\theta(x_t,t)}  {\partial x} \right )dt$ as $\Delta p_i = -\tr\left (\frac{\partial v_\theta(x_t,t)} {\partial x} \right ) \Delta t.$

  We first decompose the global MAP objective as follows:
 \begin{align}
     \log p(x(x_0)|y) &= \log p(x_0) - \int_{0}^1 \tr\left (\frac{\partial v_\theta(x_t,t)}  {\partial x} \right )dt + \log p(y|x(x_0)) - \log p(y) \label{eq:map-first}
     \\ & =\lim_{\Delta t \to 0} \sum_{i=1}^N \gamma_i \log p(x_0) + \lim_{\Delta t \to 0} \sum_{i = 1}^N \Delta p_i
     \\ & + \lim_{\Delta t \to 0}  \sum_{i=1}^N \gamma_i [ \log p(y_{i   \Delta t} | x_{i   \Delta t}) +c_i] -  \log p(y),\label{eq:decom}
     \end{align}
where the decomposition of the second term utilizes the property of the discretization of Riemann integral, and that of the third term utilizes the result in Lemma \ref{lem:1} and thus $c_i = \frac{m}{2} \log (\alpha_{i\Delta t}^2)$. 
By the property of limits, i.e. $\lim_{\Delta t \to 0} (\sum_{i=1}^N \gamma_i)(\sum_{i=1}^N \Delta p_i) = \lim_{\Delta t \to 0} (\sum_{i=1}^N \gamma_i)\lim_{\Delta t \to 0}(\sum_{i=1}^N \Delta p_i) = \lim_{\Delta t \to 0}\sum_{i=1}^N \Delta p_i $, we can further decompose the second term in Eq. \eqref{eq:decom} into $\lim_{\Delta t \to 0} (\sum_{i=1}^N \gamma_i)(\sum_{i=1}^N \Delta p_i)$.

By extracting the limit out in Eq. \eqref{eq:decom}, the equation becomes
     \begin{align}
       &     \lim_{\Delta t \to 0} \Big \{  \gamma_1\left [ \log p(x_0) + \Delta p_1 +  \log p(y_{ \Delta t} | x_{ \Delta t}) + c_1  \right ] \notag
     \\ &    ~~~~~~~~ +   \gamma_2 \left [ \log p(x_0) + \Delta p_1 + \Delta p_2 + \log p(y_{2 \Delta t} | x_{2 \Delta t})+ c_2  \right ] \notag
     \\ & ~~~~~~~~+ \cdots  \notag
       \\ & ~~~~~~~~+  \gamma_N \left [ \log p(x_0) + \Delta p_1 + \Delta p_2 + \cdots + \Delta p_N + \log p(y_{N \Delta t} | x_{N  \Delta t})+ c_N  \right ]\notag
    \\ &~~~~~~~~+  \left [\gamma_1 \Delta p_2 + (\gamma_1 + \gamma_2) \Delta p_3 + \cdots +  \left (\gamma_1 + \gamma_2 + \cdots + \gamma_{N-1}\right)\Delta p_N \right] -\log p(y) \Big \}
    \\ & := \lim_{\Delta t \to 0} \left [\sum_{i=1}^N \gamma_i \tilde \cJ_i  
     +  \sum_{j=2}^N \left (\sum_{i=1}^{j-1}\gamma_i\right)\Delta p_j  + \sum_{i=1}^N \gamma_i c_i -\log  p(y) \right ], \label{eq:twoterms-decom}
 \end{align}
 where $\tilde \cJ_i := \log p(x_0) + \sum_{j=1}^i \Delta p_j + \log p(y_{i  \Delta t}|x_{i   \Delta t}) $. We further define the $c(N) = \sum_{i=1}^N \gamma_i c_i - \log p(y)$.

  Recall that  $\hat \cJ_i =  \log p(x_{(i-1) \Delta t}) -\tr\left (\frac{\partial v_\theta(x_{(i-1)\Delta t},(i-1)\Delta t)} {\partial x} \right ) \Delta t +  \log p(y_{i\Delta t}|x_{i  \Delta t})$. By triangle inequality, we have
\begin{align}
   & \left| \log p(x(x_0)|y) - \sum_{i=1}^N \gamma_i \hat \cJ_i - c(N) \right| \\
    \leqslant &\left |\log p(x(x_0)|y) - \sum_{i=1}^N \gamma_i \tilde{\cJ}_i - c(N) \right| + \left |\sum_{i=1}^N \gamma_i \hat \cJ_i  - \sum_{i=1}^N \gamma_i \tilde{\cJ}_i \right |.
\end{align} 
Taking the limit on both sides, we have
\begin{align}
\lim_{\Delta t \to 0} 
    &  \left| \log p(x(x_0)|y) - \sum_{i=1}^N \gamma_i \hat \cJ_i - c(N)\right|  \\ \leqslant & \lim_{\Delta t \to 0} \left |\log p(x(x_0)|y) - \sum_{i=1}^N \gamma_i \tilde{\cJ}_i -c(N)\right| + \lim_{\Delta t \to 0} 
 \left |\sum_{i=1}^N \gamma_i \hat \cJ_i  - \sum_{i=1}^N \gamma_i \tilde{\cJ}_i \right |.
\end{align}
\textbf{In the following, we analyze the two terms on the right-hand side one by one. }\textbf{For the first term: }
as $|\cdot|: \RR \to \RR$ is a continuous function, the first term on the right-hand side is equal to 
	\begin{align}
	& \left |\log p(x(x_0)|y) - \lim_{\Delta t \to 0}\sum_{i=1}^N \gamma_i \tilde \cJ_i -c(N)\right | \\ = &
	\left|\lim_{\Delta t \to 0} \sum_{j=2}^N \left (\sum_{i=1}^{j-1}\gamma_i\right)\Delta p_j\right| \\
	 = &\left|\lim_{\Delta t \to 0}\sum_{j=2}^N \left ( \frac{1}{2^{N-j+1}}-\frac{1}{2^N}\right)\Delta p_j\right|
	\\\le&  \left|\lim_{\Delta t \to 0} \sum_{j=2}^N \left ( \frac{1}{2^{N-j+1}}  \right)\Delta p_j\right| +  \left|\lim_{\Delta t \to 0} \sum_{j=2}^N \left ( \frac{1}{2^N}\right)\Delta p_j\right|, \label{eq:sub-twoterms}
\end{align}
where the first equation is derived by subtracting the first term in Eq. \eqref{eq:twoterms-decom} from   Eq. \eqref{eq:map-first}. 
As  the velocity field $v_{\theta} : \RR^n \times \RR \rightarrow \RR^n$ satisfies $\sup_{z \in \RR^n, s \in [0,1]} |\mathrm{tr}\frac{\partial}{\partial x} v_{\theta}(z,s)| \leq C_1$ for some universal constant $C_1$, we have $|\Delta p_j|\le C_1 \Delta t$. The first term in \eqref{eq:sub-twoterms} would be\begin{align}\label{eq:bound1}
	\left| \sum_{j=2}^N \left ( \frac{1}{2^{N-j+1}}  \right)\Delta p_j\right| & \le C_1  \Delta t\sum_{j=2}^N \left ( \frac{1}{2^{N-j+1}}  \right) \le C_1 \Delta t= O(\Delta t). 	\end{align}
Similarly, the second term in \eqref{eq:sub-twoterms}  would be
\begin{align}\label{eq:bound2}
	\left|\sum_{j=2}^n \left ( \frac{1}{2^n}\right)\Delta p_j\right| \le  \sum_{j=2}^N \left ( \frac{1}{2^N}\right)C_1 \Delta t =C_1 \left ( \frac{N- 1}{2^N} \right)\Delta t =  O(\Delta t).
\end{align}
Combining the results in Eq. \eqref{eq:bound1} and Eq. \eqref{eq:bound2}, we get 
\begin{align}
      \left|\log p(x(x_0)|y) - \lim_{\Delta t \to 0}\sum_{i=1}^N \gamma_i \tilde{\cJ}_i -c(N) \right| =  0. 
\end{align}

\textbf{For the second term:} Intuitively, the error between the integral and the Riemannian discretization goes to 0 as $\Delta t$ tends to 0.  Rigorously, 
\begin{align}
    \lim_{\Delta t \to 0} 
 \left |\sum_{i=1}^N \gamma_i \hat \cJ_i  - \sum_{i=1}^N \gamma_i \tilde{\cJ}_i \right | &=  \lim_{\Delta t \to 0} \left |\sum_{i=1}^N \gamma_i ( \hat \cJ_i  -  \tilde{\cJ}_i) \right |  
 \\ & =  \lim_{\Delta t \to 0} \left |\sum_{i=1}^N \gamma_i \left ( \int_0^{t-\Delta t} \tr\left (\frac{\partial v_\theta(x_s,s)}  {\partial x} \right )ds - \sum_{j=1}^{i-1} \Delta p_j\right ) \right |  
 \\ & \le \lim_{\Delta t \to 0} \sum_{i=1}^N \gamma_i \left | \int_0^{t-\Delta t} \tr\left (\frac{\partial v_\theta(x_s,s)}  {\partial x} \right )ds - \sum_{j=1}^{i-1} \Delta p_j\ \right | = 0.
\end{align}

\textbf{Combining the results of the first term and the second term, we get the proof of theorem 1 done.}

\section{Compliance of Trajectory} \label{sec:compliance}



To quantify our deviation from the assumption of having $x_t$ exactly follow the interpolation path $\alpha_t x + \beta_t x_0$, we define the following: given a differentiable process $\{z_t\}$ and an interpolation path specified by $\boldsymbol{\alpha}:=\{\alpha_t\}$ and $\boldsymbol{\beta}:=\{\beta_t\}$, we define the trajectory's \textbf{compliance} $S_{\boldsymbol{\alpha},\boldsymbol{\beta}}(\{z_t\})$ to the interpolation path as \begin{align}S_{\boldsymbol{\alpha},\boldsymbol{\beta}}(\{z_t\}) := \int_0^1 \EE_{p(z_0),p(z_1)}\left[\|\dot{z}_t - (\dot{\alpha}_t z_1 + \dot{\beta}_t z_0)\|^2\right] dt. \label{eq:compliance-measure}
\end{align} This generalizes the definition of straightness in \cite{liu2022flow} to general interpolation paths. We recover their definition by setting $\alpha_t = t$ and $\beta_t = 1-t$. In certain cases, we have exact compliance with the predefined interpolation path. For example, when $\{z_t\}$ is generated by $v_{\theta}$ and $\alpha_t = t$ and $\beta_t = 1-t$, note that $S_{\boldsymbol{\alpha},\boldsymbol{\beta}}(\{z_t\}) = 0$ is equivalent to $v_\theta(z_t, t)= c$ where $c$ is a constant, almost everywhere. This ensures that $z_1 = z_0 + c$. In this case, when generating the trajectory through an ODE solver with starting point $x_0$ and endpoint $x_t$, we have $x_t = \alpha_tx + \beta_tx_0, \forall t$. When $S_{\boldsymbol{\alpha},\boldsymbol{\beta}}(\{z_t\})$ is not equal to 0, we show in Proposition \ref{prop:1} that we can bound the deviation of our trajectory from the interpolation path using this compliance measure. When specifying our result to Rectified Flow, we can obtain an additional bound showing that when using $L$-Rectified Flow, the deviation of the learned trajectory from the straight trajectory is bounded by $O(1/L)$.

\begin{proposition}\label{prop:1}
Consider a differentiable interpolation path specified by $\boldsymbol{\alpha}:=\{\alpha_t\}$ and $\boldsymbol{\beta}:=\{\beta_t\}$. Then the expected distance between the learned trajectory $ z_t = z_0 + \int_{0}^t v_\theta(z_s, s) ds$ and the predefined trajectory $\hat z_t = z_0 + \int_0^t (\dot{\alpha}_s z_1 + \dot{\beta}_sz_0) ds$ can be bounded as 
    \begin{align}
        \EE_{p(z_0), p(z_1)} \left[\|\hat z_t - z_t\|^2\right] \le S_{\boldsymbol{\alpha},\boldsymbol{\beta}}(\{z_t\}).
    \end{align} If the differentiable process $\{z_t\}$ is specified by $L$-Rectified Flow and $\alpha_t = t$ and $\beta_t = 1-t$ for all $t \in [0,1]$, then we additionally have \begin{align}
        \EE_{p(z_0), p(z_1)} \left[\|\hat z_t - z_t\|^2\right] \le O\left(\frac{1}{L}\right).
    \end{align}
\end{proposition}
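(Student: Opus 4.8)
The plan is to prove the first bound by a direct application of the fundamental theorem of calculus together with Lemma \ref{lem:integral-inequality}, and then to obtain the second bound by combining the first with the known rate of decay of straightness under repeated rectification from \cite{liu2022flow}.

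For the first inequality, I would start by writing the deviation at time $t$ as a single integral, which is legitimate because both trajectories share the initial condition $z_0$ and $\hat z_t$ is defined through exactly the integrand $\dot\alpha_s z_1 + \dot\beta_s z_0$:
\[
\hat z_t - z_t = \int_0^t \big( (\dot\alpha_s z_1 + \dot\beta_s z_0) - v_\theta(z_s,s)\big)\, ds .
\]
Since the learned trajectory solves the ODE, $\dot z_s = v_\theta(z_s,s)$, so the integrand equals $-\big(\dot z_s - (\dot\alpha_s z_1 + \dot\beta_s z_0)\big)$. Applying Lemma \ref{lem:integral-inequality} pointwise in $(z_0,z_1)$, and then using $t\le 1$ with nonnegativity of the integrand, gives
\[
\|\hat z_t - z_t\|^2 \le \int_0^t \big\| \dot z_s - (\dot\alpha_s z_1 + \dot\beta_s z_0)\big\|^2\, ds \le \int_0^1 \big\| \dot z_s - (\dot\alpha_s z_1 + \dot\beta_s z_0)\big\|^2\, ds .
\]
Taking expectations over $p(z_0),p(z_1)$ and exchanging expectation and integral by Tonelli's theorem (the integrand is nonnegative) yields exactly $S_{\boldsymbol{\alpha},\boldsymbol{\beta}}(\{z_t\})$ on the right-hand side, proving the first claim. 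I expect this part to be essentially mechanical.

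For the second inequality, the key observation is that when $\alpha_t = t$ and $\beta_t = 1-t$ the compliance measure $S_{\boldsymbol{\alpha},\boldsymbol{\beta}}(\{z_t\})$ reduces to the straightness functional of \cite{liu2022flow} evaluated on the flow. I would then invoke the main structural fact of Rectified Flow: the transport cost is nonincreasing along the rectification procedure and bounded below, and the decrement at each step controls the straightness of the corresponding flow, so telescoping over $L$ steps shows that the $L$-Rectified Flow (appropriately chosen within the first $L$ rectifications) has straightness $O(1/L)$. Feeding this estimate into the first inequality immediately gives $\EE_{p(z_0), p(z_1)}[\|\hat z_t - z_t\|^2] \le O(1/L)$. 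The main obstacle here is not a new computation but a bookkeeping one: one must pin down precisely which flow in the rectification sequence ``$L$-Rectified Flow'' denotes and cite the exact form of the straightness-decay bound from \cite{liu2022flow}, so that the constant hidden in $O(1/L)$ is controlled uniformly over $t \in [0,1]$; everything else follows from the first part.
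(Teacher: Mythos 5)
Your proposal is correct and follows essentially the same route as the paper's proof: the first bound via the Jensen-type integral inequality (Lemma \ref{lem:integral-inequality}) applied to the single integral representing $\hat z_t - z_t$, followed by extending the integration range to $[0,1]$ and taking expectations, and the second bound by invoking the $O(1/L)$ straightness decay for $L$-Rectified Flow from \cite{liu2022flow} (Theorem 3.7 there), which the paper cites directly rather than re-deriving via the telescoping argument you sketch.
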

\begin{proof}
At time $t$, we are interested in the distance between a real trajectory $ z_t = z_0 + \int_{0}^t v_\theta(z_s, s) ds$ and a preferred trajectory $\hat z_t = z_0 + \int_0^t (\dot \alpha_s z_1 - \dot \beta_s z_0) ds$. Using the result in Lemma \ref{lem:integral-inequality}, the distance can be bounded by
\begin{align}
    \|\hat z_t - z_t\|^2 &= \left\|\int_{0}^t [v_\theta(z_s, s) - (\dot \alpha_s z_1 - \dot \beta_s z_0)]ds\right\|^2 \\ 
    &\le   \int_{0}^t \|v_\theta(z_s, s) -(\dot \alpha_s z_1 - \dot \beta_s z_0)\|^2ds .
\end{align}
 
Therefore,
\begin{align}
    \EE_{p(z_0), p(z_1)} \|\hat z_t - z_t\|^2 & \le  \EE_{p(z_0), p(z_1)} \left[\int_{0}^t \|v_\theta(z_s, s) - (\dot \alpha_s z_1 - \dot \beta_s z_0)\|^2ds \right]
    \\ & = \int_{0}^t \EE_{p(z_0), p(z_1)} \|v_\theta(z_s, s) - (\dot \alpha_s z_1 - \dot \beta_s z_0)\|^2 ds 
    \\ & \le \int_{0}^1 \EE_{p(z_0), p(z_1)} \|v_\theta(z_s, s) - (\dot \alpha_s z_1 - \dot \beta_s z_0)\|^2 ds 
    \\ & := S_{\boldsymbol{\alpha},\boldsymbol{\beta}}(\{z\}).
\end{align}
If $\{z_t, t\in [0,1]\}$ is  a learned $L$-rectfied flow, i.e. $\alpha_t = t$ and $\beta_t = 1-t$ in this case, where $L$ is the times of rectifying the flow, by Theorem 3.7 in \cite{liu2022flow}, we have $S_{\boldsymbol{\alpha},\boldsymbol{\beta}}(\{z\}) = O(1/L)$
and thus
\begin{align}
     \EE_{p(z_0), p(z_1)} \|\hat z_t - z_t\|^2  = O(1/L). 
\end{align}
\end{proof}

 Empirically, \cite{liu2023instaflow, liu2022flow} found $L =2$ generates nearly straight trajectories for high-quality one-step generation. Hence, while this result gives us a simple upper bound, in practice the trajectories may comply more faithfully with the predefined interpolation path than this result suggests.

 \section{Additional Results}

 
\subsection{Additional Ablations} \label{appx:n}

\paragraph{Iteration steps $K$} We provide additional ablation results of $K$ in terms of SSIM in Fig. \ref{fig:abl_k_ssim}.

\begin{figure}[t]
    \centering
    \includegraphics[width=0.49\textwidth]{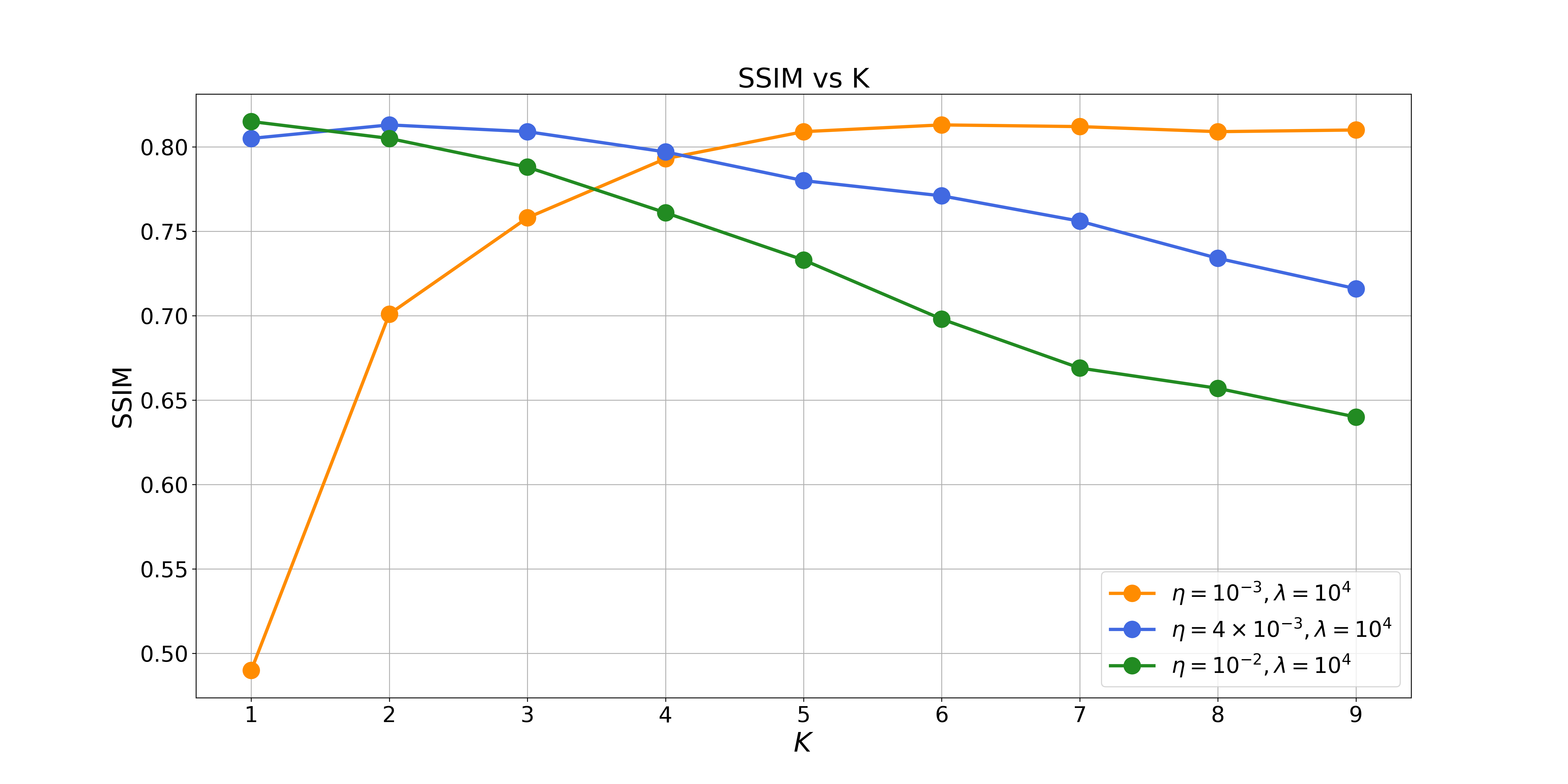}
    \includegraphics[width=0.49\textwidth]{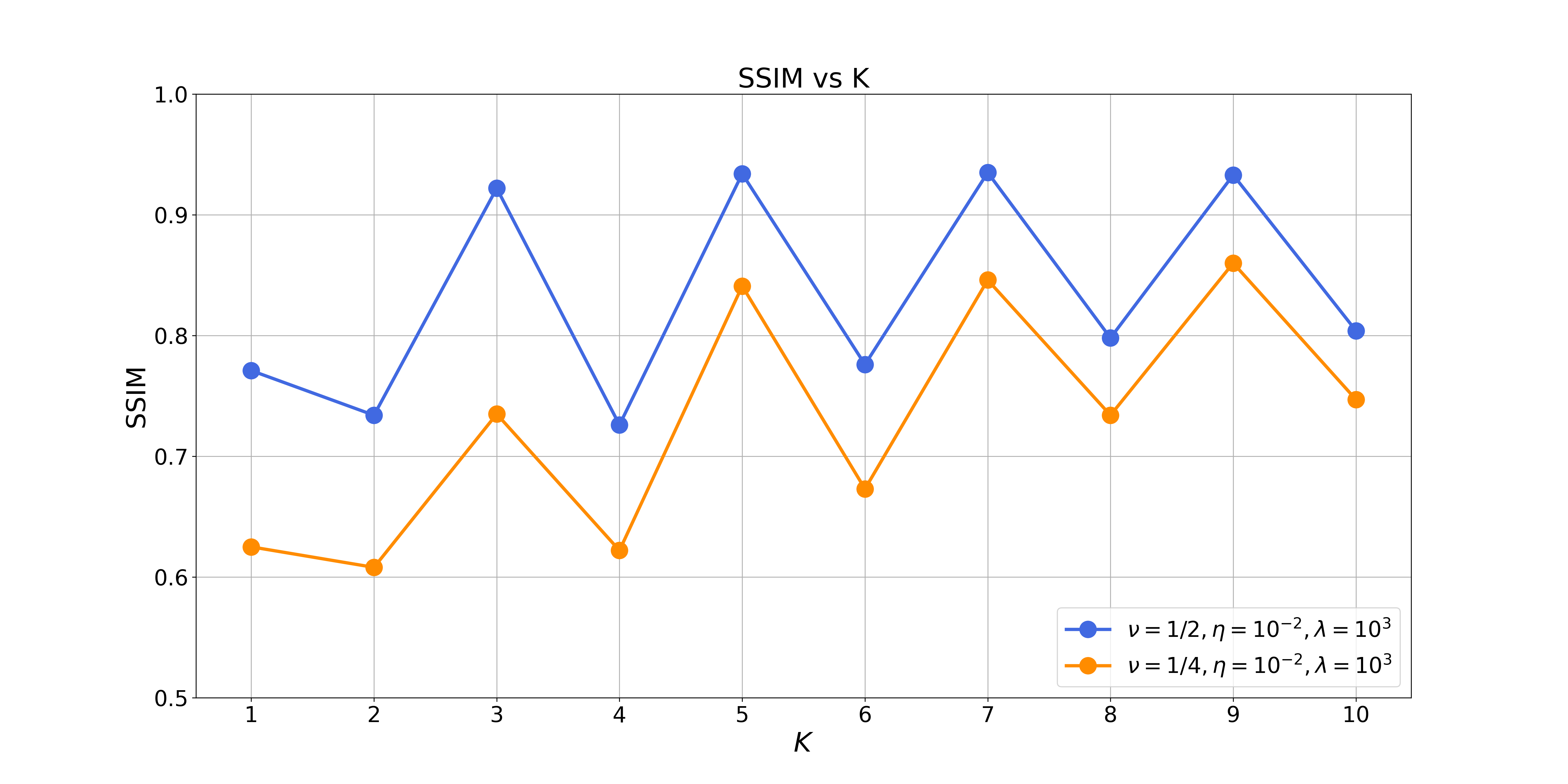}
    \caption{Ablation results of $K$ in terms of SSIM on different tasks. }
    \label{fig:abl_k_ssim}
\end{figure}

\paragraph{NFEs $N$} We first refer to Fig. \ref{fig:syn}(c) for a preliminary ablation on $N$ using a toy example. Next, we show PSNR and SSIM scores for varying $N$ in the task of super-resolution. We find that $N=100$ is the best trade-off between time and performance. The ablation results are shown in Fig. \ref{fig:abl_t}.

\begin{figure}[h]
    \centering
    \includegraphics[width=0.45\textwidth]{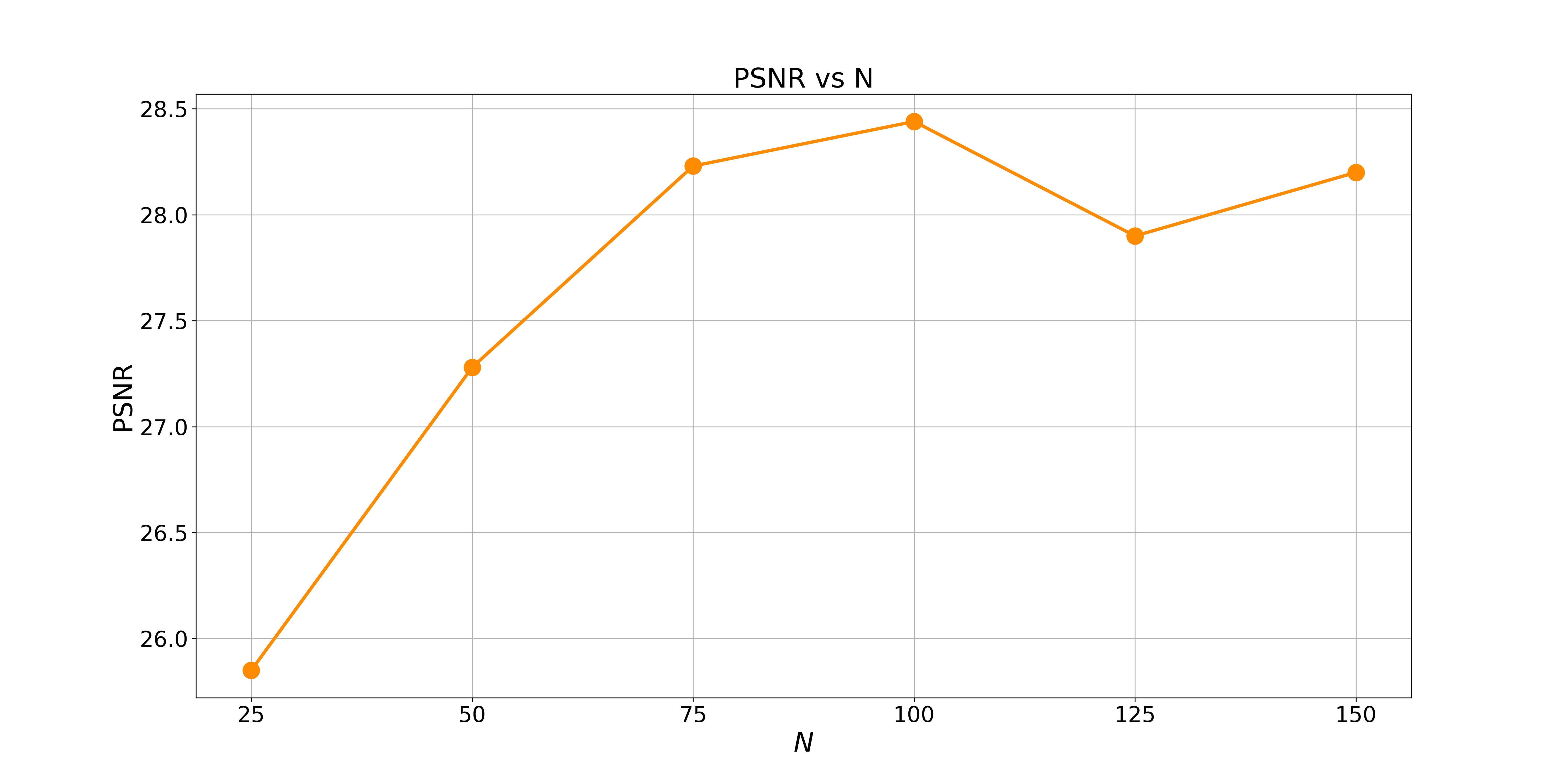}
    \includegraphics[width=0.45\textwidth]{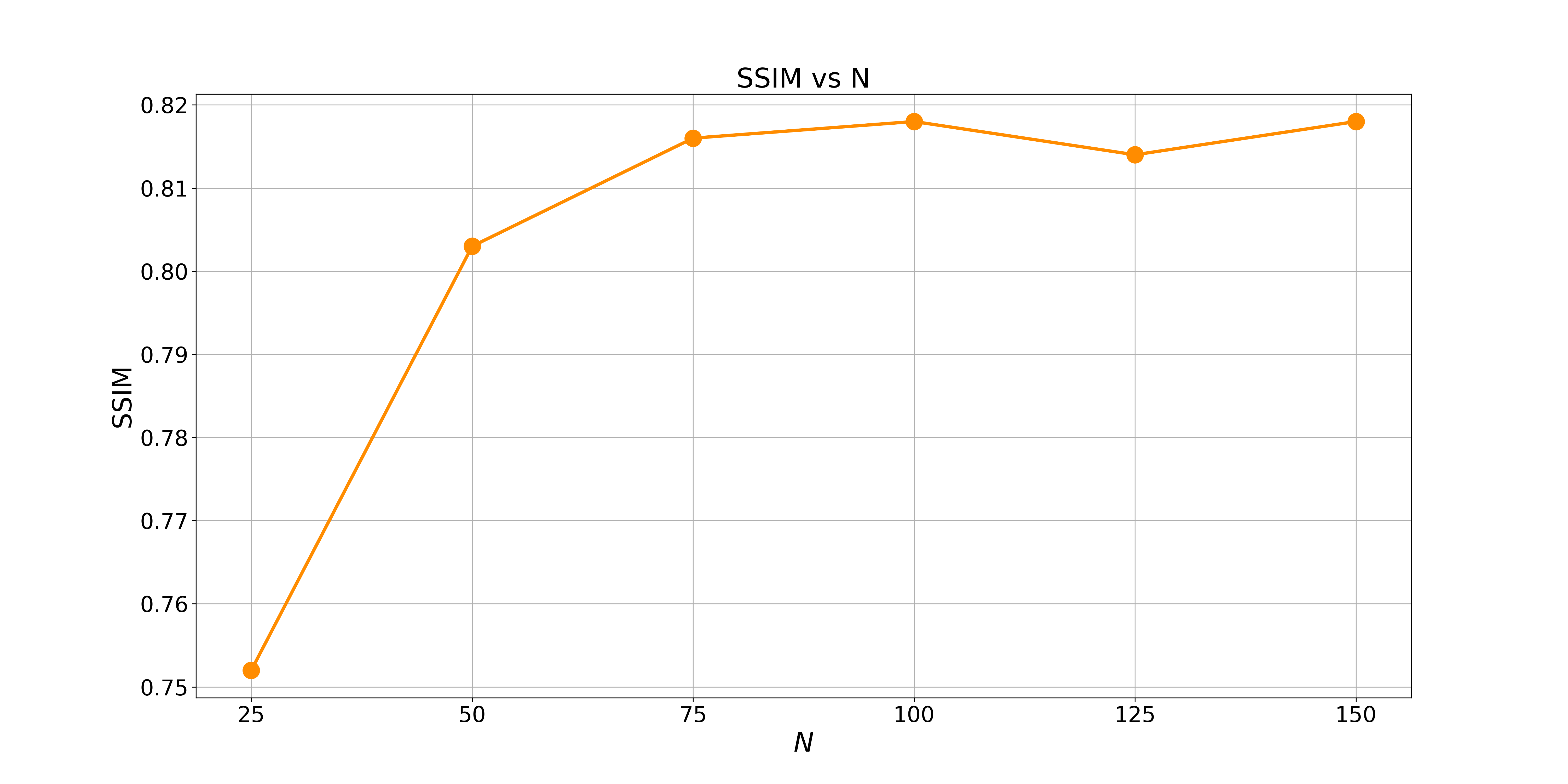}
    \caption{Ablation results of the NFEs $N$ on the super-resolution task. }
    \label{fig:abl_t}
\end{figure}


 \section{Computational Efficiency} \label{appx:compute}
In Tab. \ref{time}, we present the computational efficiency comparison results. Note that OT-ODE is the slowest as it requires taking the inverse of a matrix   $r_t^2{\cA} \cA^T + \sigma^2_y {I}$ each update time. Our method requires taking the gradient over an estimated trace of the Jacobian matrix, which slows the computation.

\begin{table}[!h]
    \centering
    \caption{\textbf{Computational time comparison.} We compare the time required to recover 100 images for the super-resolution task on a single GPU.}
    \begin{tabular}{lcccc} \toprule
         &  DPS-ODE & OT-ODE & Ours (w/o prior) & Ours\\ \midrule
     Time(h)& 0.36   & 4.10    & 0.83   & 2.72  \\ \bottomrule
    \end{tabular}
    \label{time}
\end{table}

\section{Implementation Details}

Experiments were conducted on a Linux-based system with  CUDA 12.2 equipped with 4 Nvidia 
R9000 GPUs, each of them has 48GB of memory. 

\paragraph{Operators} For all the experiments on the CelebA-HQ dataset, we use the operators from \cite{chung2023diffusion}. For all the experiments on compressed sensing, we use the operator \textit{CompressedSensingOperator} defined in the official repository of \cite{fang2023whats} \footnote{\url{https://github.com/Sulam-Group/learned-proximal-networks/tree/main}},

\paragraph{Evaluation} Metrics are implemented with different Python packages. PSNR is calculated using basic PyTorch operations, and SSIM is computed using the \textit{pytorch\_msssim} package.

\subsection{Toy example}
The workflow begins with using 1,000 FFHQ images at a resolution of 1024$\times$1024. These images are then downscaled to 16$\times$16 using bicubic resizing. A Gaussian Mixture model is applied to fit the downsampled images, resulting in mean and covariance parameters. The mean values are transformed from the original range of [0,1] to [-1,1]. Subsequently, 10,000 samples are generated from this distribution to facilitate training a score-based model resembling the architecture of CIFAR10 DDPM++. The training process involves 10,000 iterations, each with a batch size of 64, and utilizes the Adam optimizer \cite{kingma2014adam} with a learning rate of 2e-4 and a warmup phase lasting 100 steps. Notably, convergence is achieved within approximately 200 steps. Lastly, the estimated log-likelihood computation for a batch size of 128 takes around 4 minutes and 30 seconds. We show uncured samples generated from the trained models in Fig. \ref{fig:toy_samples}.

\begin{figure}[h]
    \centering
    \includegraphics[width=0.49\textwidth]{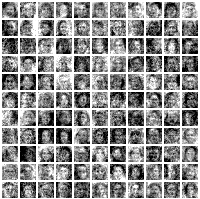}
    \includegraphics[width=0.49\textwidth]{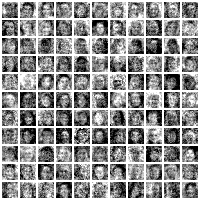}
    \caption{Generated samples from the flow trained on 10,000 Gaussian samples. }
    \label{fig:toy_samples}
\end{figure}

\subsection{Medical Application}
In this setting, $\sigma_y = 0.001$. We use the \textit{ncsnpp} architecture, training from scratch on 10k images for 100k iterations with a batch size of 50. We set the learning rate to $1 \times 10^{-2}$. Sudden convergence appeared during our training process. We use 2000 warmup steps. Uncured generated images are presented in Fig. \ref{fig:mri1}.

\begin{figure}[h]
    \centering
    \includegraphics[width=0.48\textwidth]{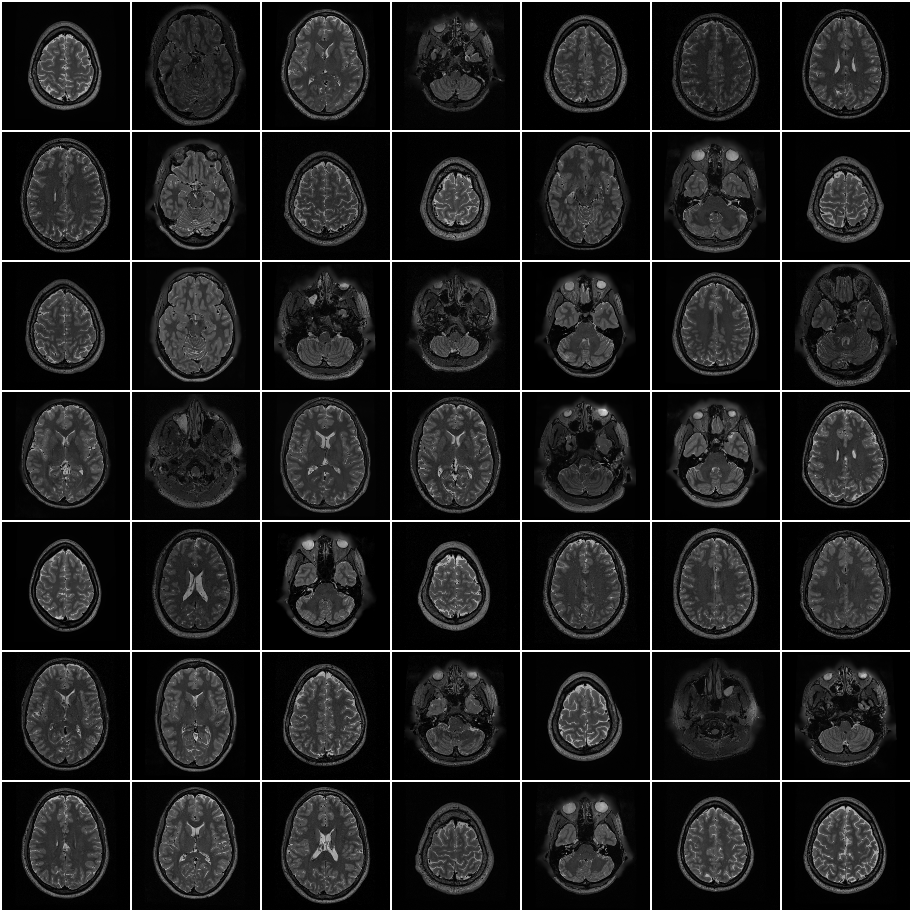}   \includegraphics[width=0.48\textwidth]{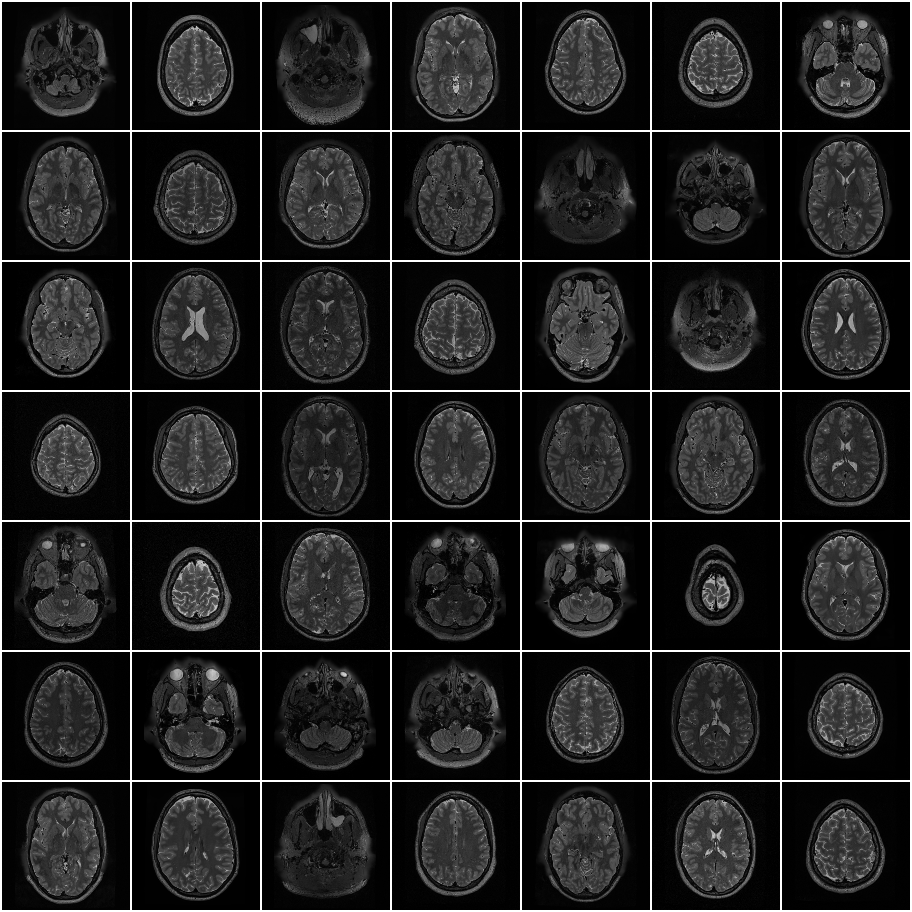} 
    \caption{Generated samples from the flow trained on 10,000 HCP T2w images. }
    \label{fig:mri1}
\end{figure}

\subsection{Implementation of Baselines}\label{appx:baseline}

\paragraph{OT-ODE}
  As  OT-ODE \cite{pokle2023training} has not released their code and pretrained checkpoints. We reproduce their method with the same architecture as in \cite{liu2022flow}.
  We follow their setting and find initialization time $t'$ has a great impact on the performance. We use the \textit{y init} method in their paper. Specifically, the starting point is
  \begin{align}
      x_{t'} = t' y + (1-t') \epsilon, ~ \epsilon \sim \cN(0, I),
  \end{align}
  where $t'$ is the init time. Note that in the super-resolution task we upscale $y$ with bicubic first.  We follow the guidance in the paper and show the ablation results in Fig. \ref{fig:fm_abl} and Fig. \ref{fig:fm_abl_mri}.

  \begin{figure}[h]
  \centering
    \begin{tabular}{cccc}
    Super-Resolution & Inpainting(random) & Gaussian Deblurring & Inpainting(box)\\ 
      \includegraphics[width=0.22\textwidth]{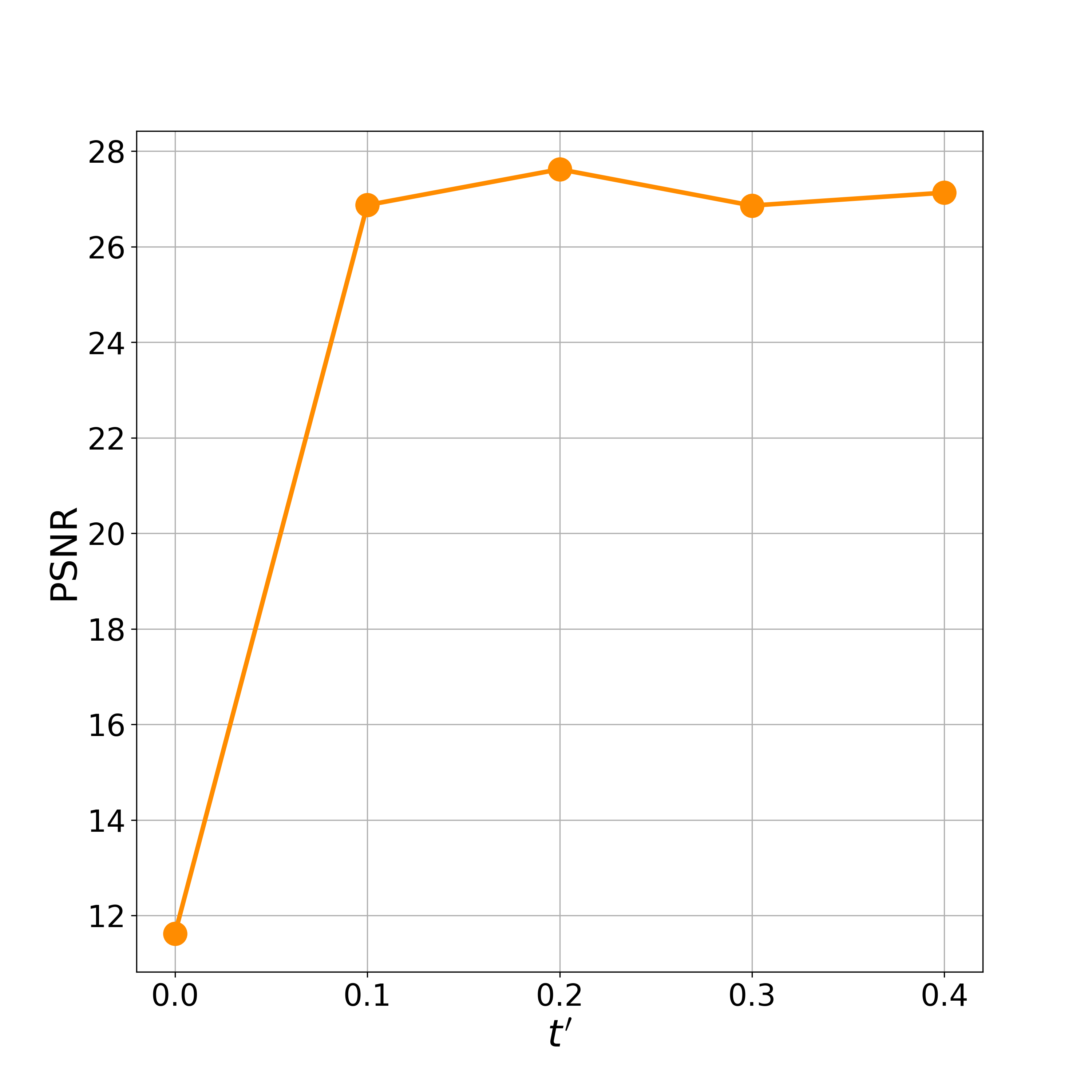}   & \includegraphics[width=0.22\textwidth]{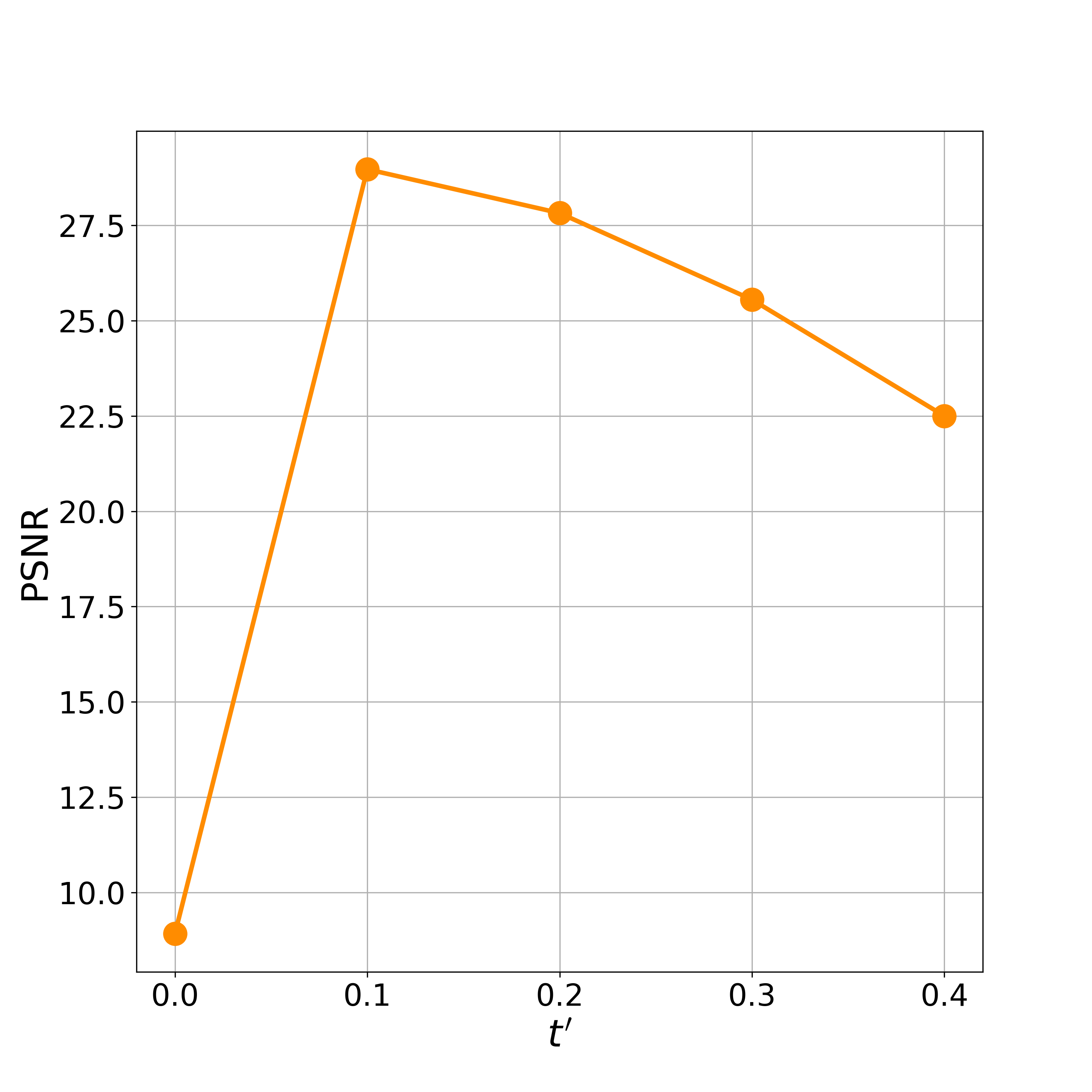} & \includegraphics[width=0.22\textwidth]{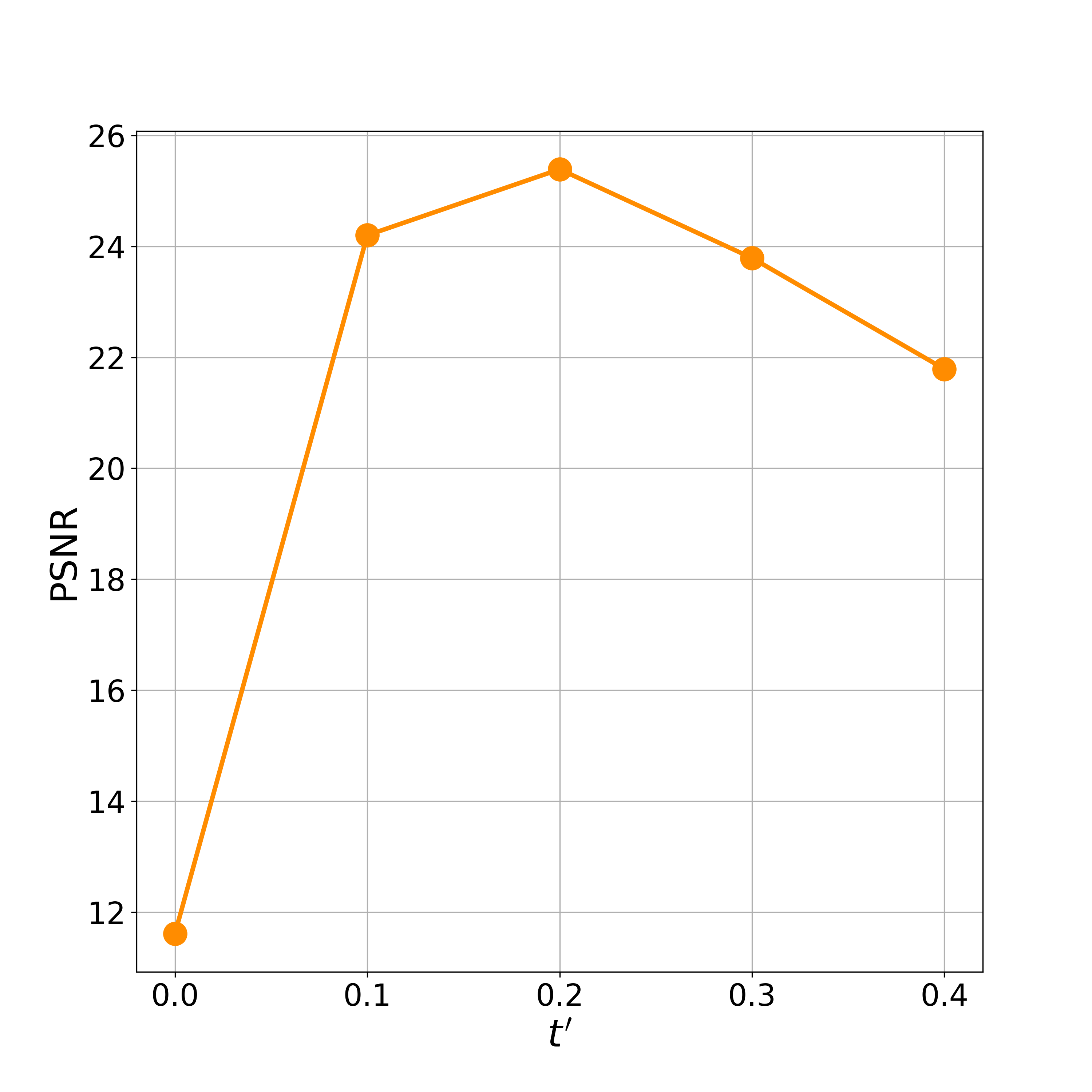} & \includegraphics[width=0.22\textwidth]{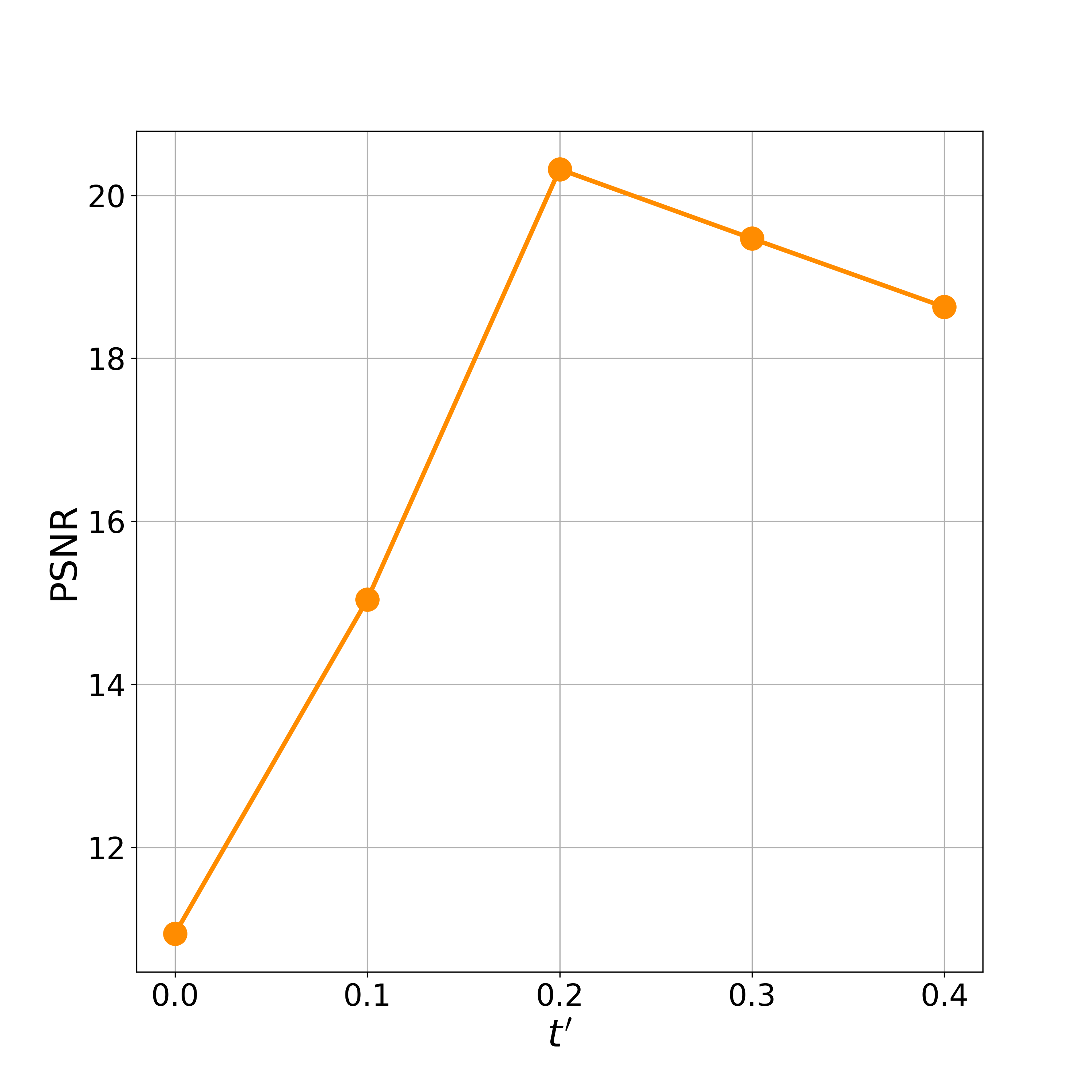}  \\
          \includegraphics[width=0.22\textwidth]{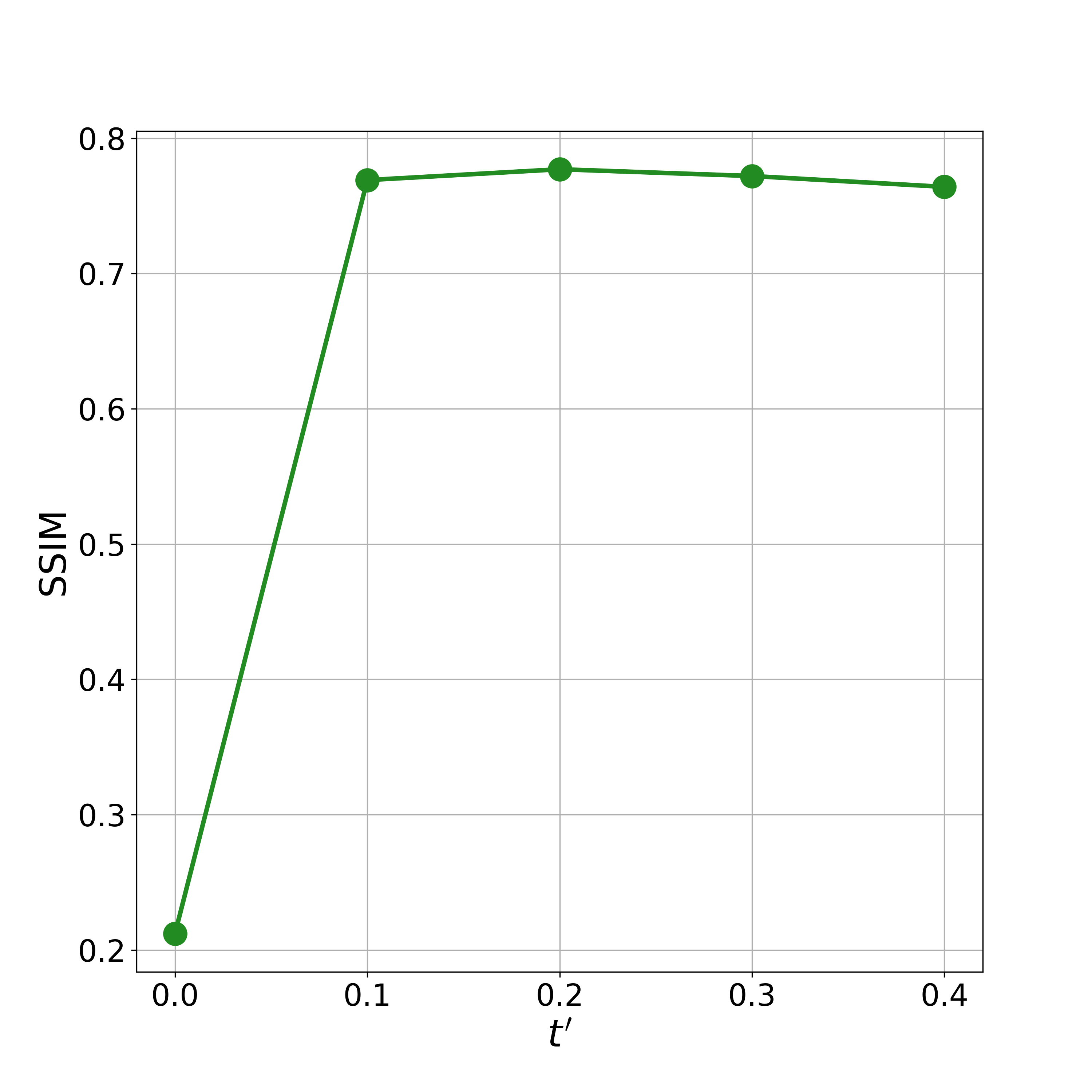}   & \includegraphics[width=0.22\textwidth]{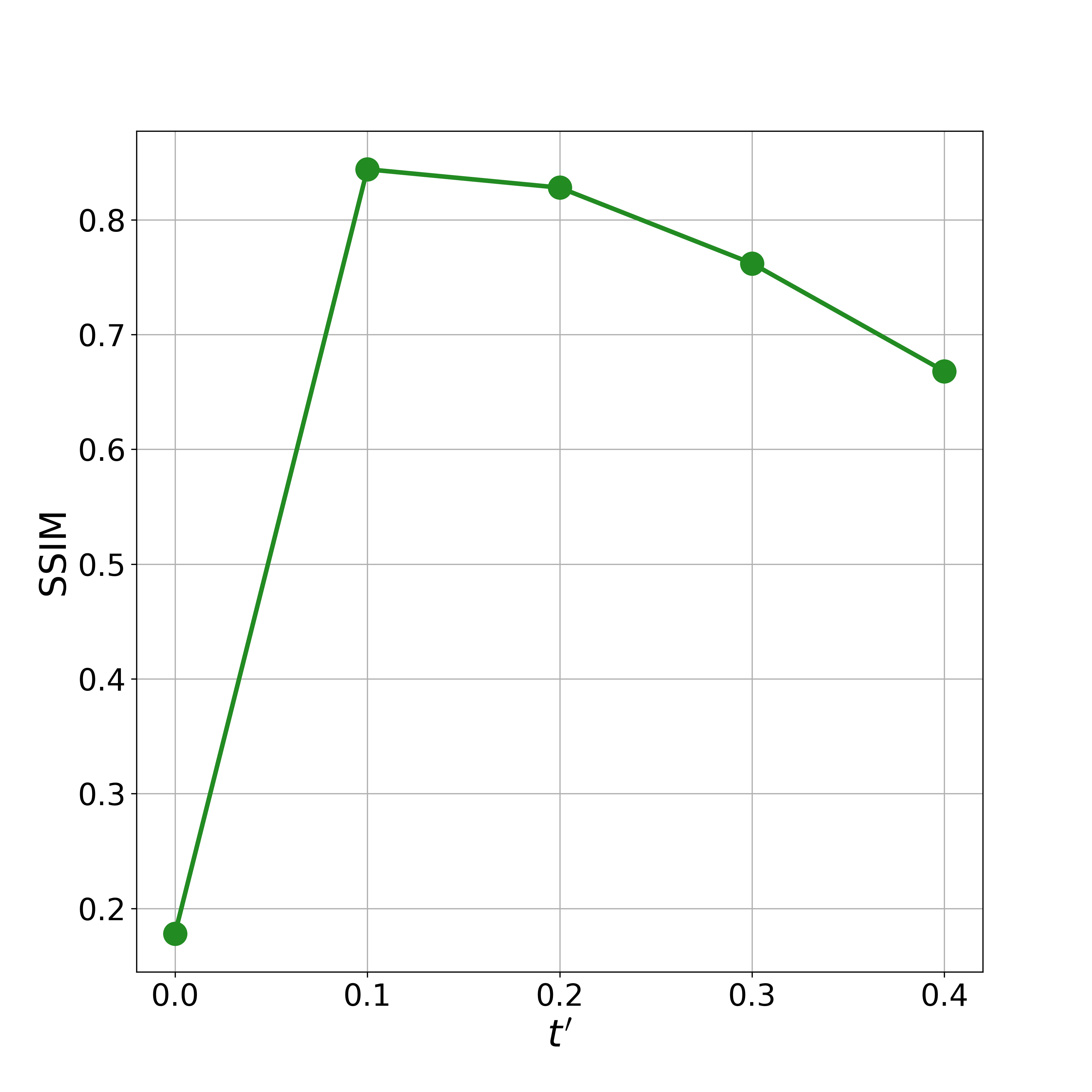} & \includegraphics[width=0.22\textwidth]{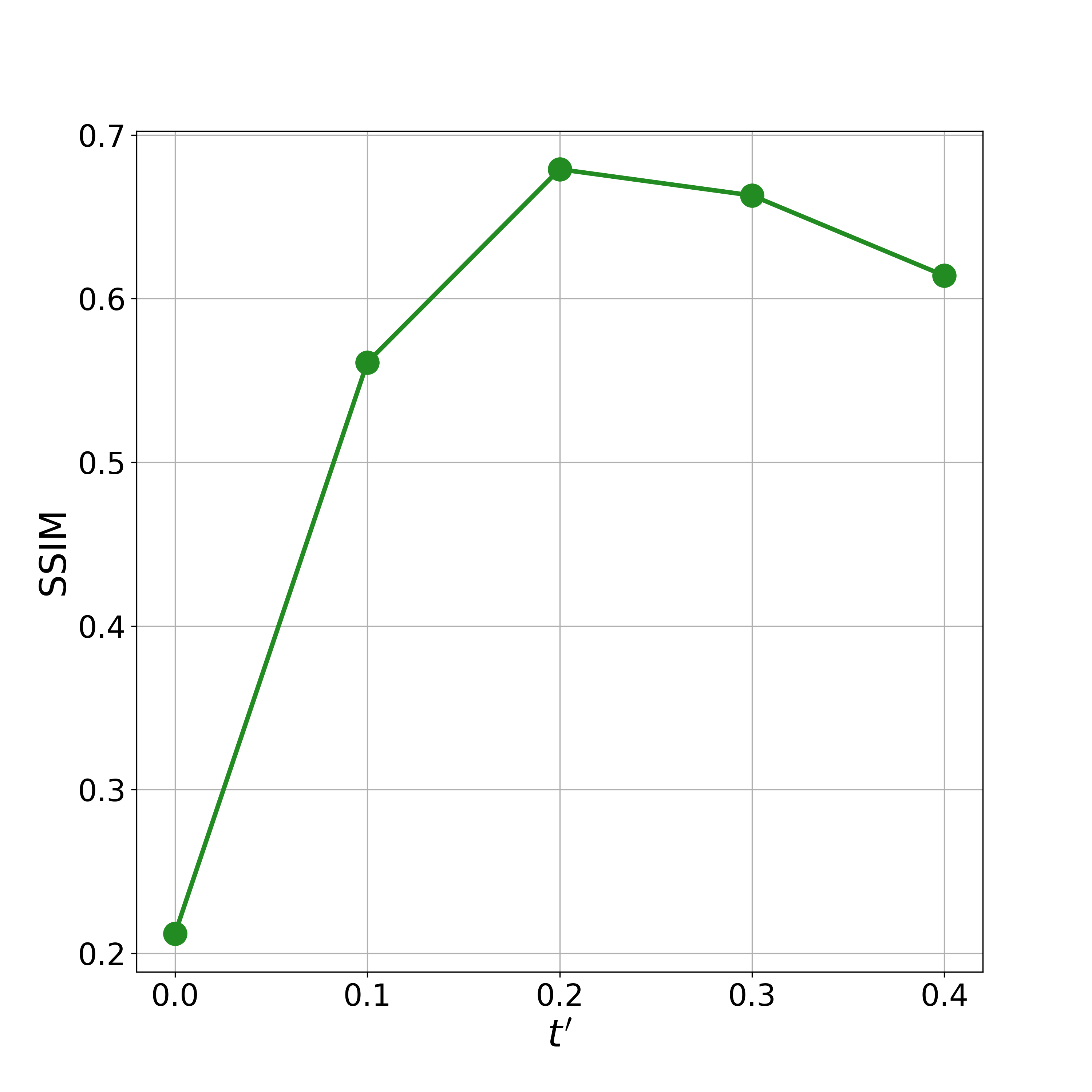} & \includegraphics[width=0.22\textwidth]{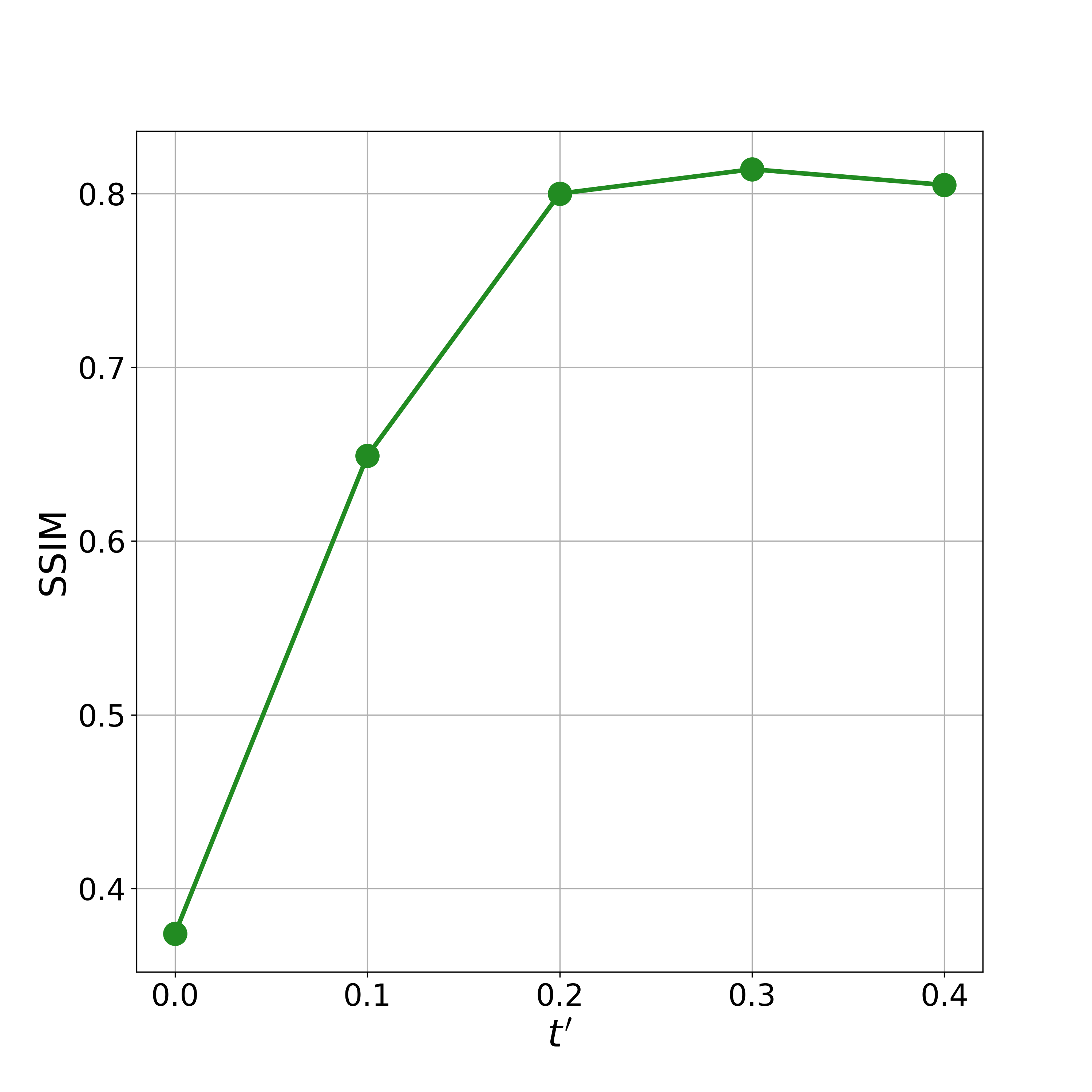}  
    \end{tabular}
    \caption{Hyperparameter $t'$ selection results for OT-ODE on the CelebA-HQ dataset. We select $t'=0.2, 0.1, 0.2,0.2 $ for super-resolution, inpainting(random), Gaussian deblurring, and inpainting(box), respectively.}
    \label{fig:fm_abl}
\end{figure}

\begin{figure}[h]
    \centering
          \begin{tabular}{cccc}
$\nu=2$ &  $\nu=2$ &    $\nu=4$    & $\nu=4$  
\\ 
\includegraphics[width=0.22\textwidth]{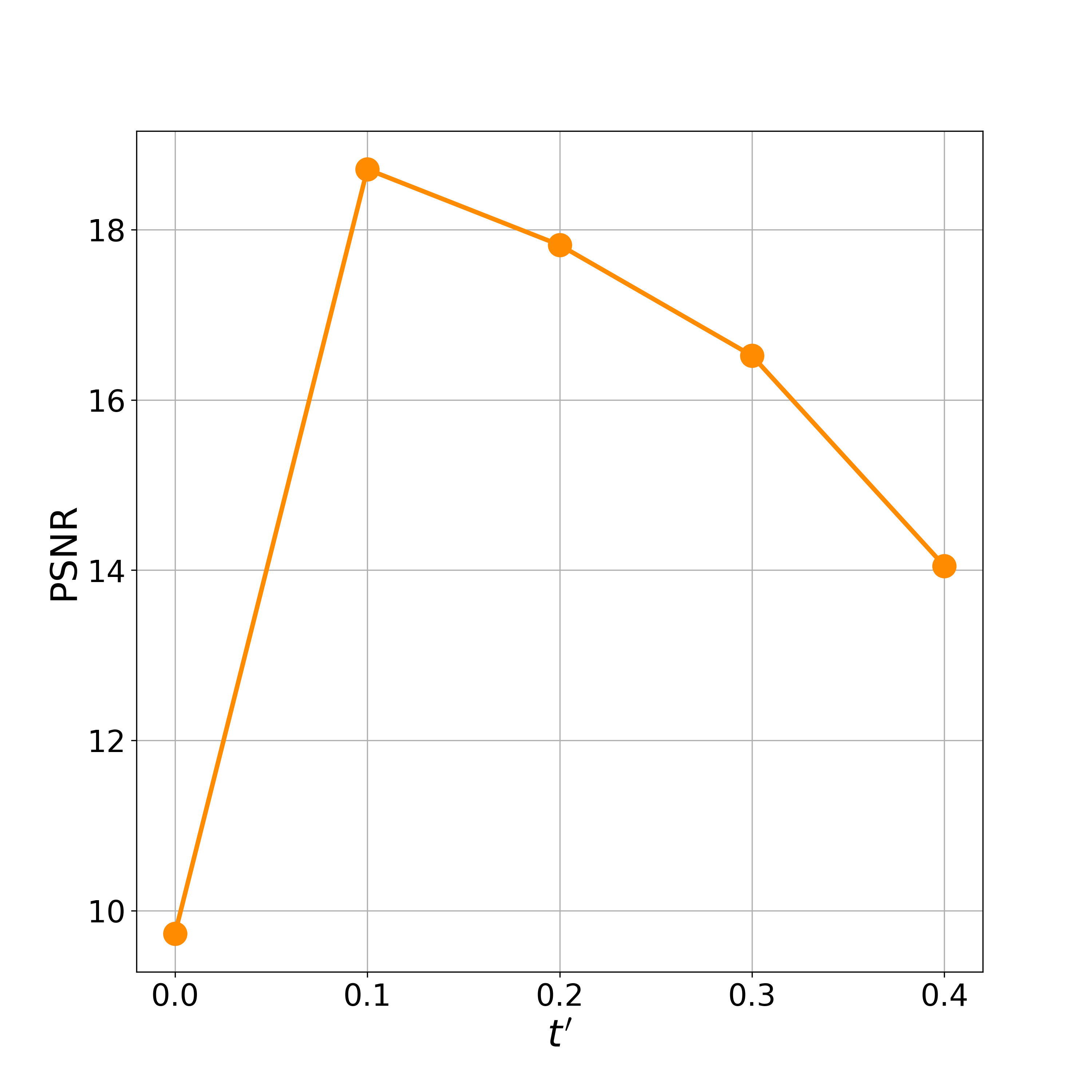}   &  \includegraphics[width=0.22\textwidth]{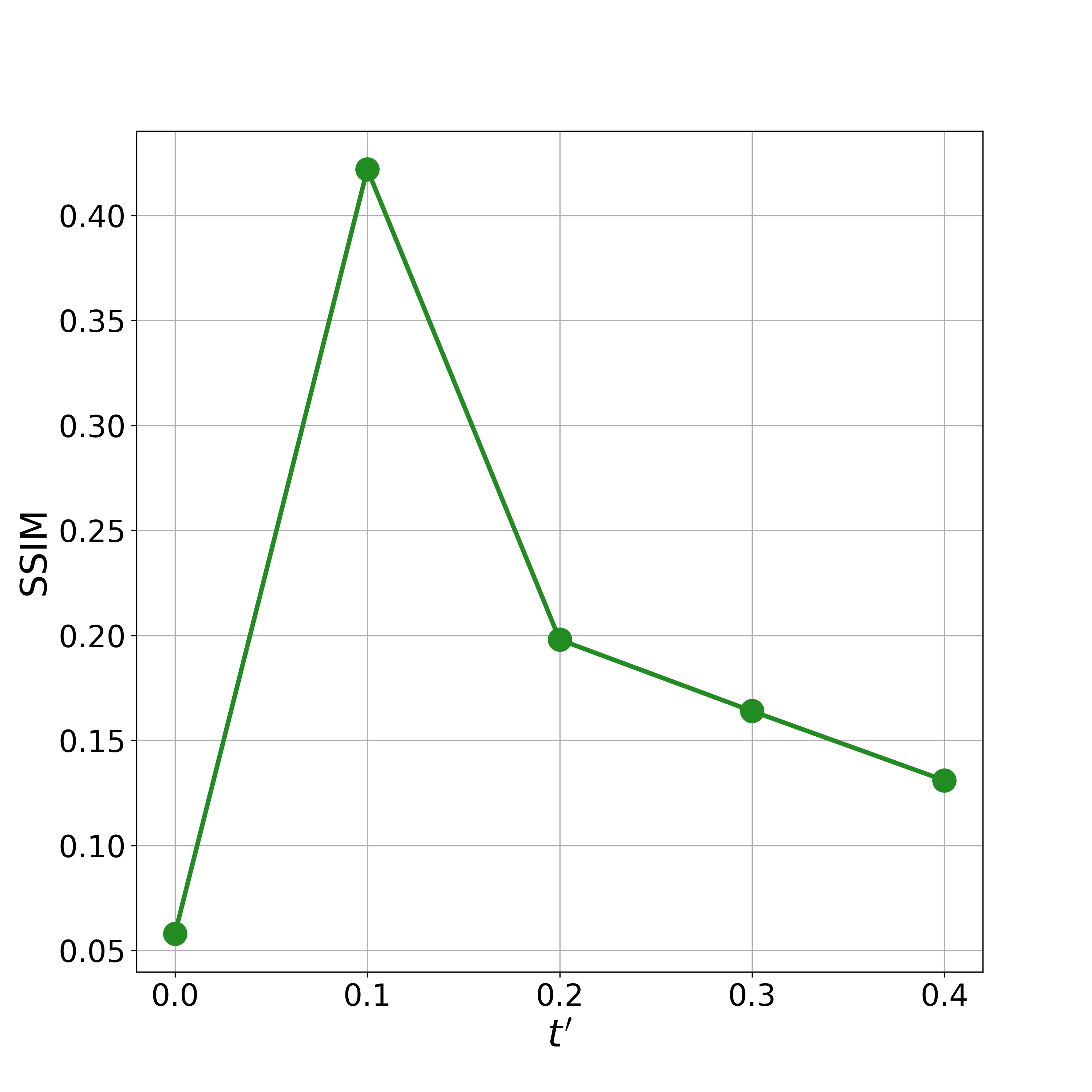}   &\includegraphics[width=0.22\textwidth]{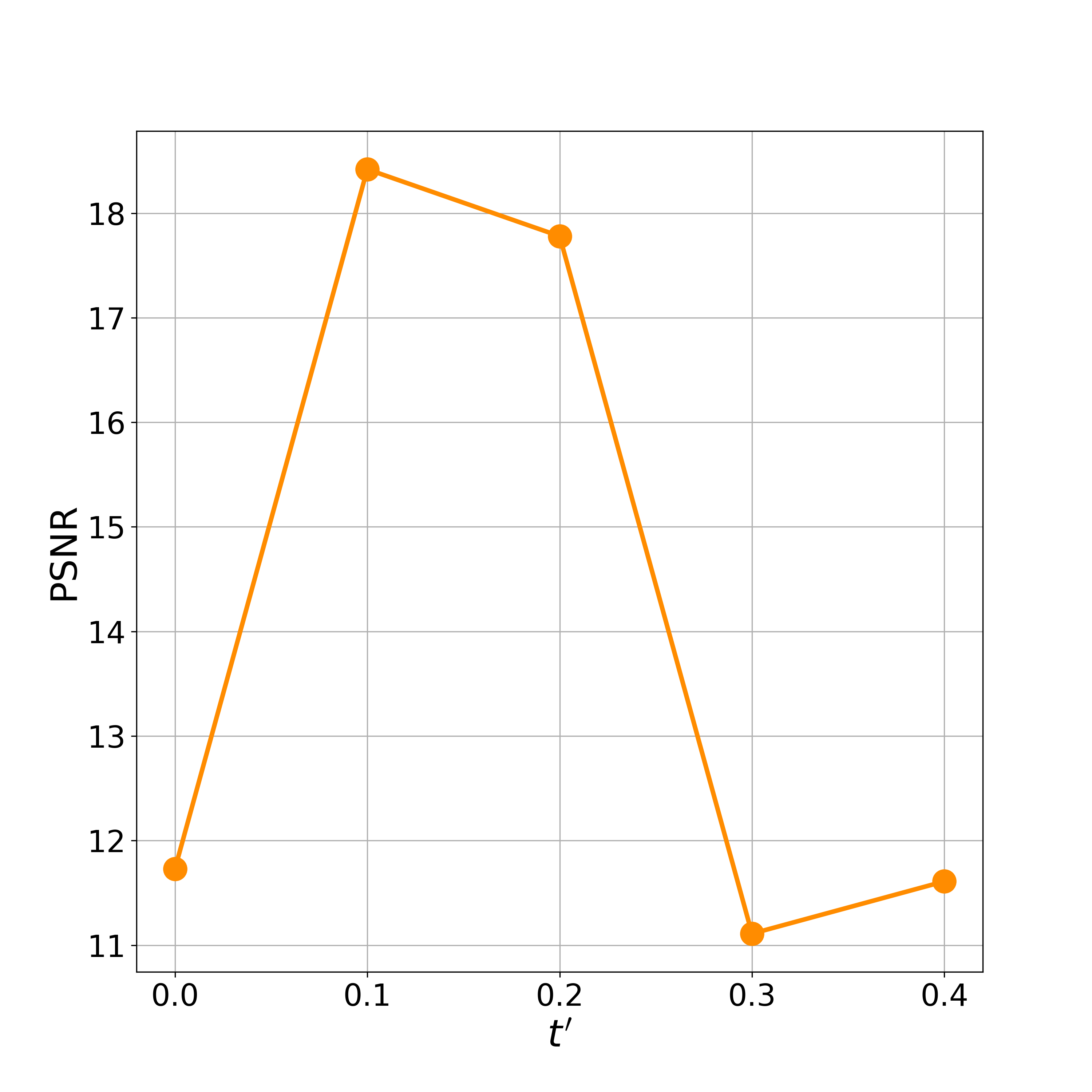}  &\includegraphics[width=0.22\textwidth]{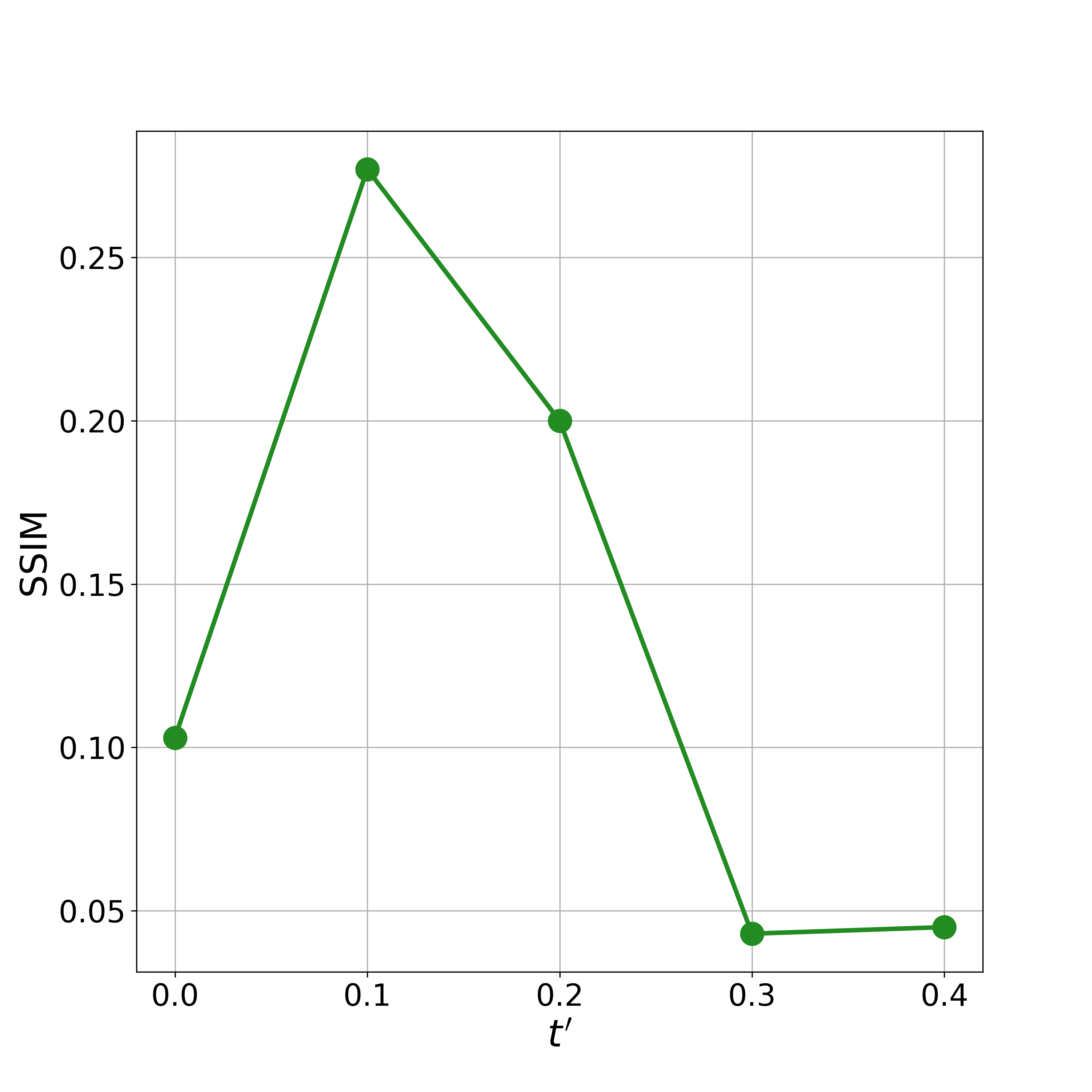}  
\end{tabular}
  \caption{Hyperparameter $t'$ selection  results for OT-ODE on the HCP T2w dataset. We select $t'=0.1 $ for all the experiments.}
    \label{fig:fm_abl_mri}
\end{figure}

\paragraph{DPS-ODE} We use the following formula to update for each step in the flow:
\begin{align*}
	  v(x_t,y) = v(x_t) + \zeta_t \left(- \nabla_{x_t} \|y - \cA\hat{x}_1\|^2 \right),
\end{align*}
where $\zeta_t$ is the step size to tune. We refer to DPS for the method to choose $\zeta_t$. We set $\zeta_t = \frac{\eta}{2\|y - \cA\hat{x}_1(x_t)\|}$. We demonstrate the ablation of $\eta$ for this baseline in Fig. \ref{fig:dps_abl} and Fig. \ref{fig:dps_abl_mri}. Note that there is a significant divergence in PSNR and SSIM for the task of inpainting (box). As we observe that artifacts are likely to appear when $\eta \geq 100$, we choose the optimal $\eta = 75$ for the best tradeoff.

\paragraph{RED-Diff and $\Pi$GDM} We use the official repository\footnote{\url{https://github.com/NVlabs/RED-diff}}
 from Nvidia to reproduce the results of RED-Diff and 
$\Pi$GDM with the pretrained CelebAHQ checkpoint using the architecture of the guided diffusion repository\footnote{\url{https://github.com/openai/guided-diffusion}} from OpenAI.

\textbf{For RED-Diff}, the optimization objective is $\min_\mu ||y - \cA(\mu)||^2 + \lambda (sg(\epsilon_\theta(x_t,t)-\epsilon))^T \mu$. Following the implementation of the original paper, we use Adam optimizer with 1,000 steps for all tasks. We choose learning rate $lr=0.25, \lambda=0.25$ for super-resolution, inpainting(random) and inpainting(box) and $lr=0.5, \lambda=0.25$
 for deblurring as recommended by the paper.

\textbf{For  $\Pi$GDM}, we follow the original paper and use 100 diffusion steps. Specifically, we use $\eta=1.0$ 
 which corresponds to the VE-SDE. Adaptive weights 
 $r_t^2 = \frac{\sigma_{1-t}^2}{1+\sigma_t^2}$ 
 are used if there is an improvement on metrics.

\paragraph{Wavelet and TV priors} We use the pytorch package DeepInverse\footnote{\url{https://deepinv.github.io/deepinv/}}
 to implement Wavelet and TV priors. For both priors, we use the default Proximal Gradient Descent (PGD) algorithm and perform a grid search for regularization weight $\lambda$ in the set $\{  10^0, 10^{-1}, 10^{-2}, 10^{-3}, 10^{-4}\}$ and gradient stepsize $\eta$ in $\{10^1, 10^0, 10^{-1}, 10^{-2}, 10^{-3}, 10^{-4}\}$. The maximum number of iteration is 3k, 5k, and 10k for compression rate $\nu = 1/2, 1/4,$ and $1/10$, respectively. The stopping criterion is the residual norm $\frac{||x_{t-1}-x_t||}{||x_{t-1}||} \le 1\times 10^{-5}$ and   the initialization of the algorithm is the backprojected reconstruction, i.e., the pseudoinverse of $\cA$
 applied to the measurement $y$.

\textbf{For the TV prior}, the objective we aim to minimize is $\min_x \frac{1}{2}||\mathcal{A}x - y||_2^2 + \lambda ||x||_{TV}$.  We find that the optimal combination of hyperparameters is $\lambda = 0.01, \eta = 0.1$ for all the values of $\nu$.     

\textbf{For the Wavelet prior}, the objective we want to minimize is $\min_x \frac{1}{2}||\mathcal{A}x - y||_2^2 + \lambda ||\Psi x||_{1}$. We use the default level of the wavelet transform and select the “db8” Wavelet. The optimal combination of hyperparameters is 
$\lambda=0.1, \eta=0.1$ for all the values of $\nu$.

\begin{figure}[h]
  \centering
    \begin{tabular}{cccc}
    Super-Resolution & Inpainting(random) & Gaussian Deblurring & Inpainting(box)\\ 
      \includegraphics[width=0.22\textwidth]{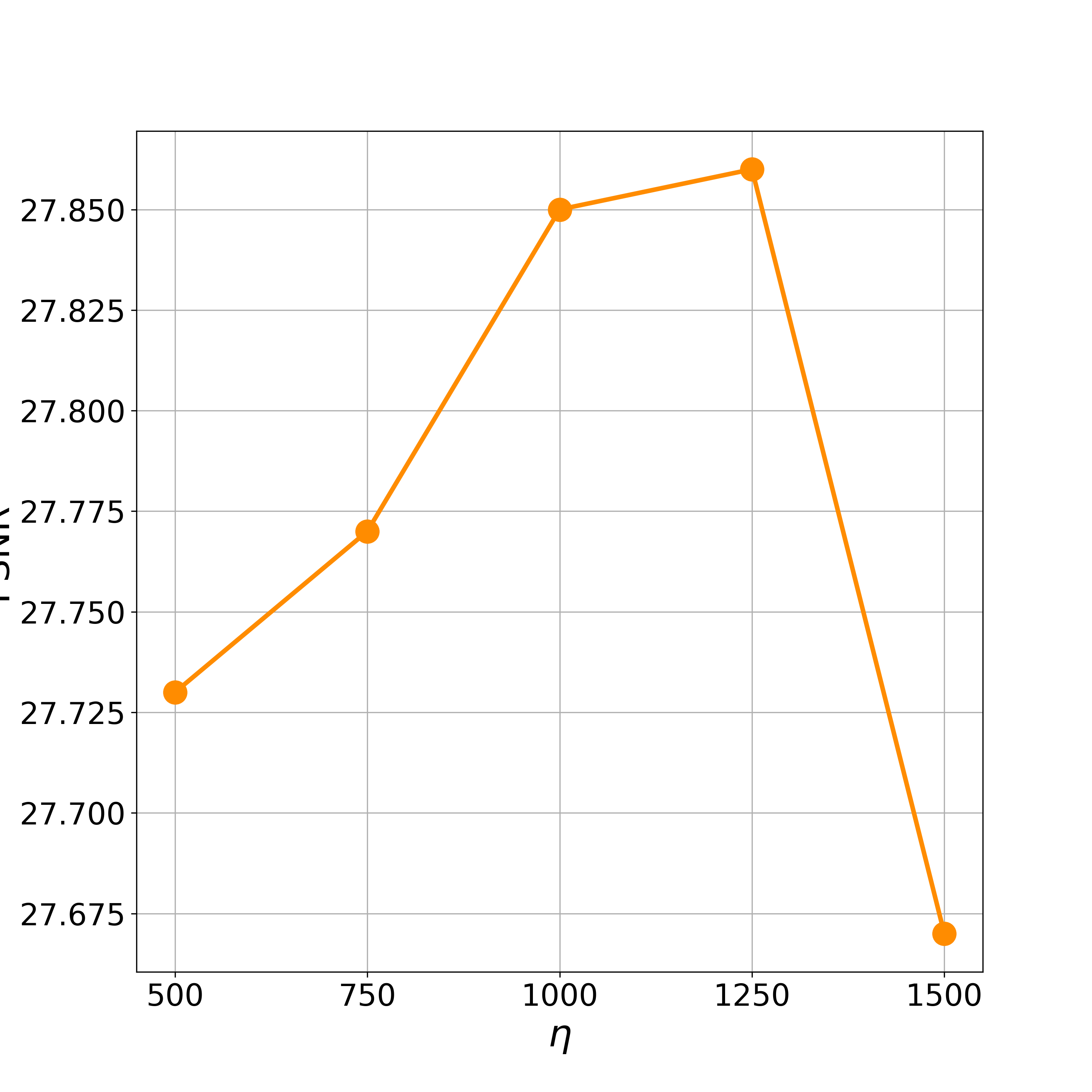}   & \includegraphics[width=0.22\textwidth]{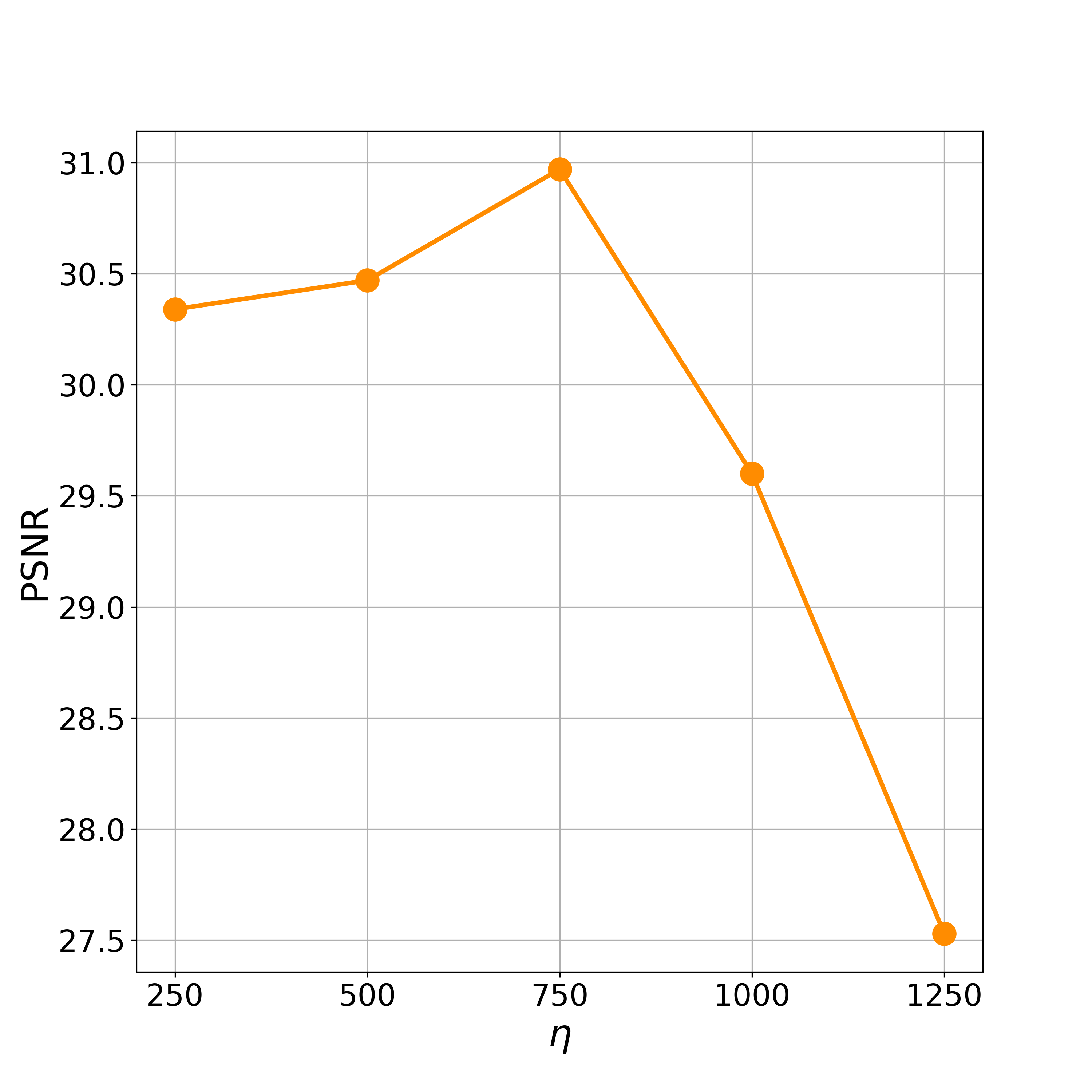} & \includegraphics[width=0.22\textwidth]{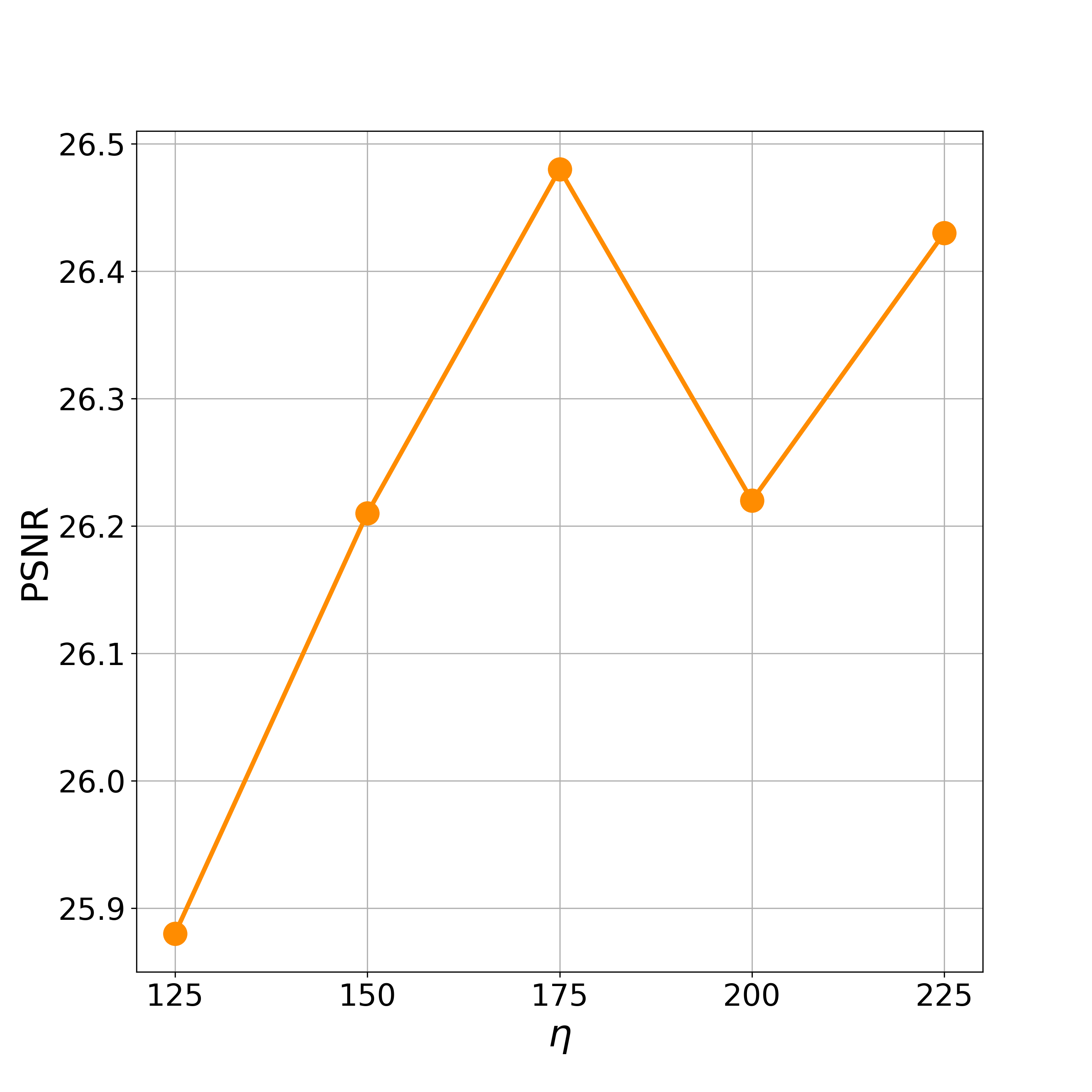} & \includegraphics[width=0.22\textwidth]{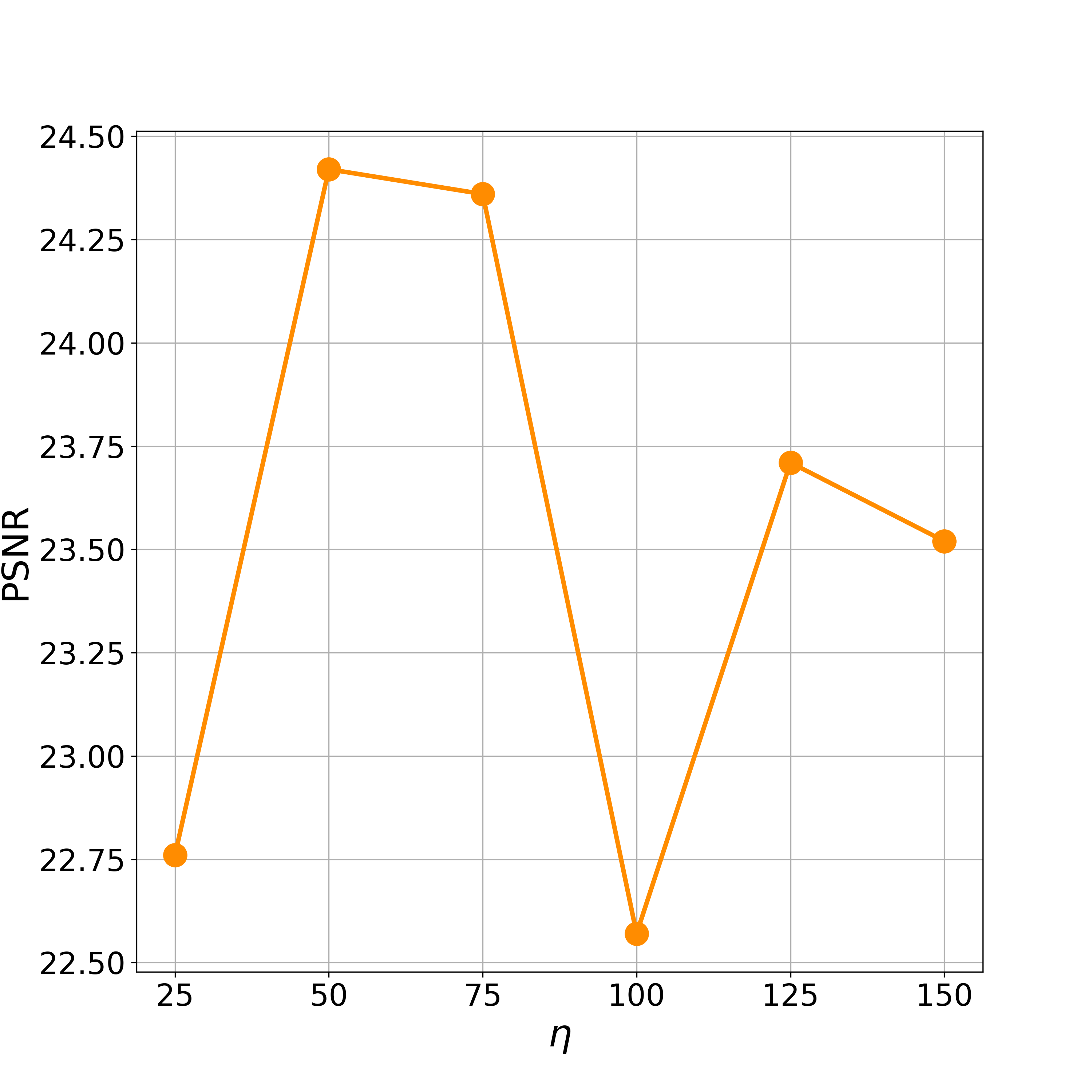}  \\
          \includegraphics[width=0.22\textwidth]{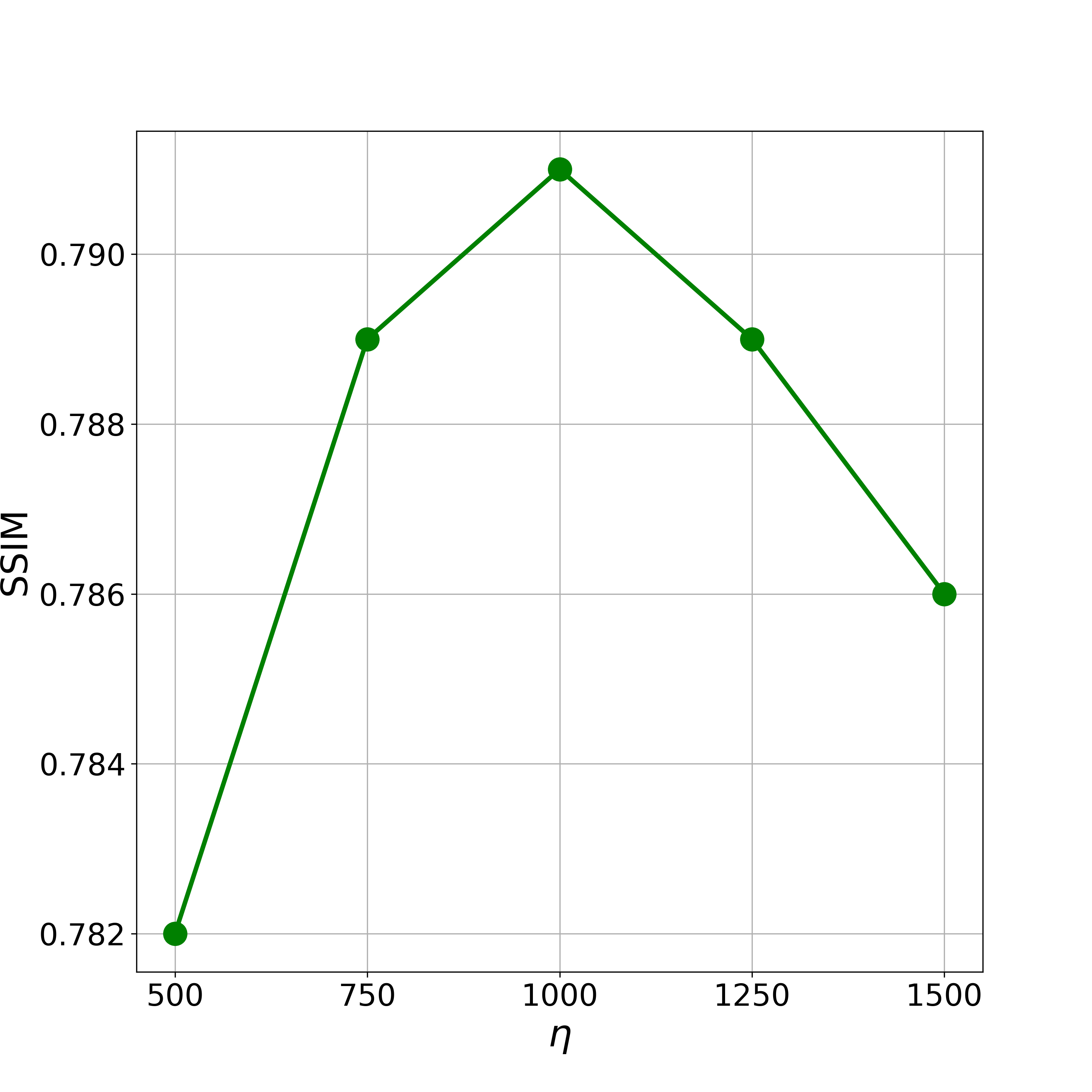}   & \includegraphics[width=0.22\textwidth]{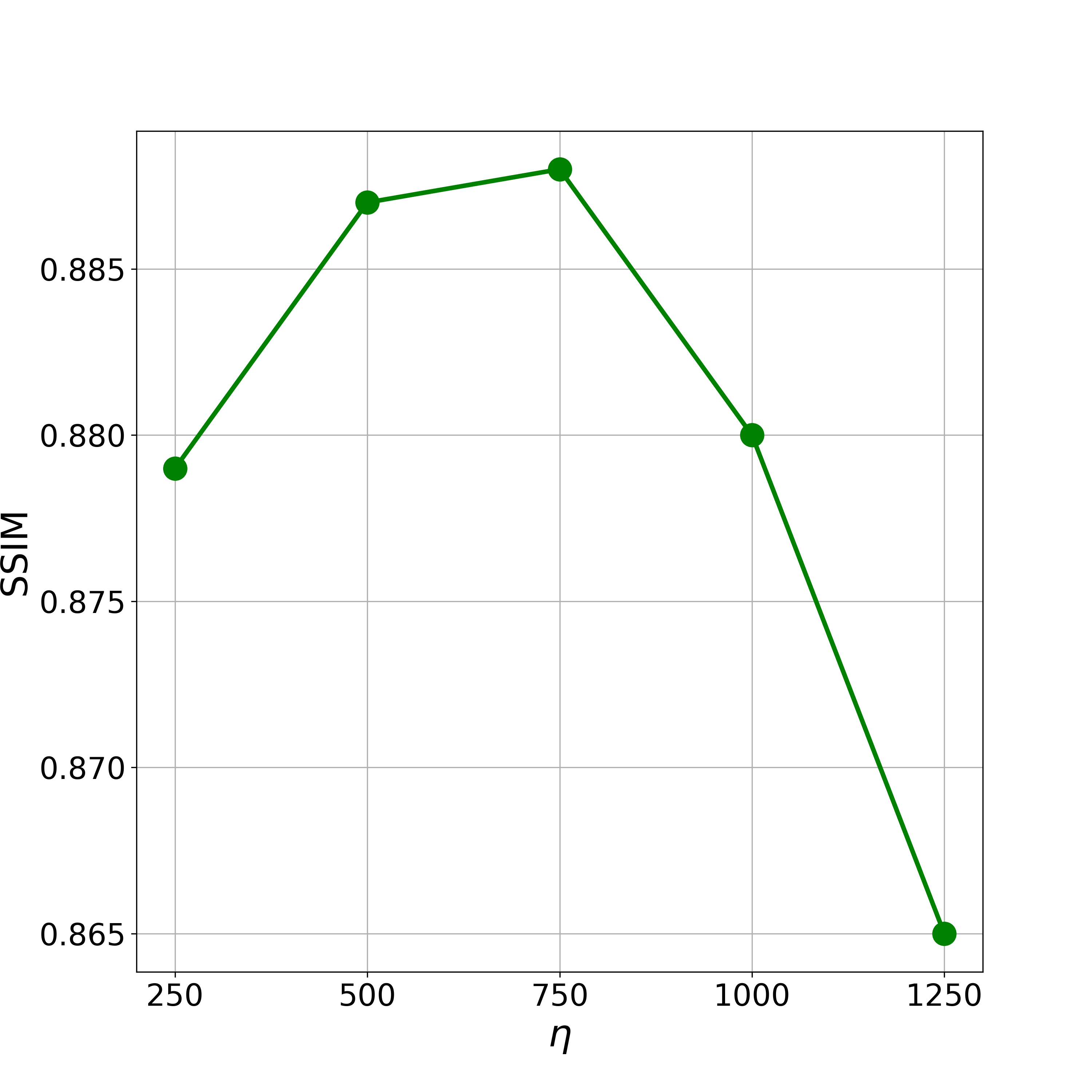} & \includegraphics[width=0.22\textwidth]{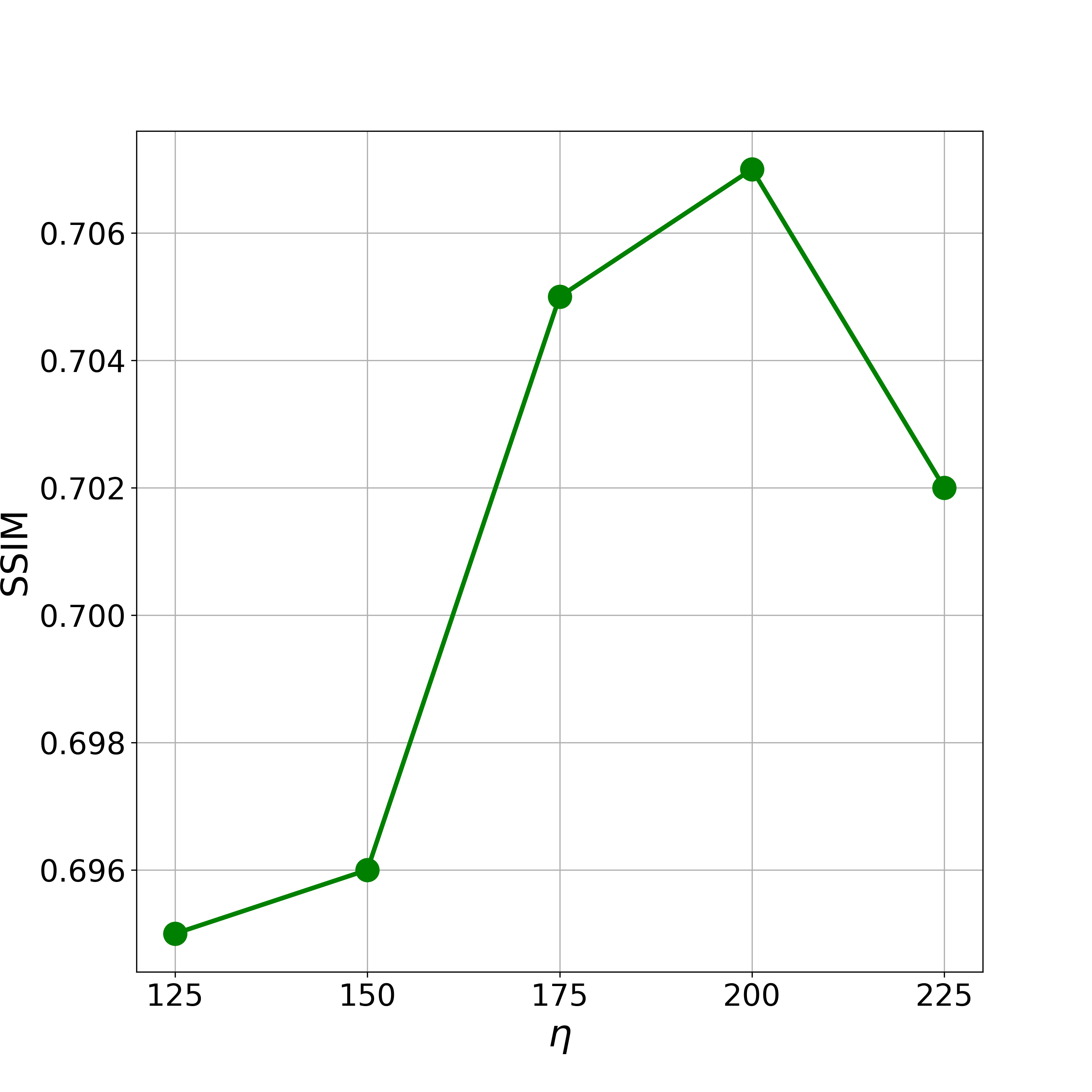} & \includegraphics[width=0.22\textwidth]{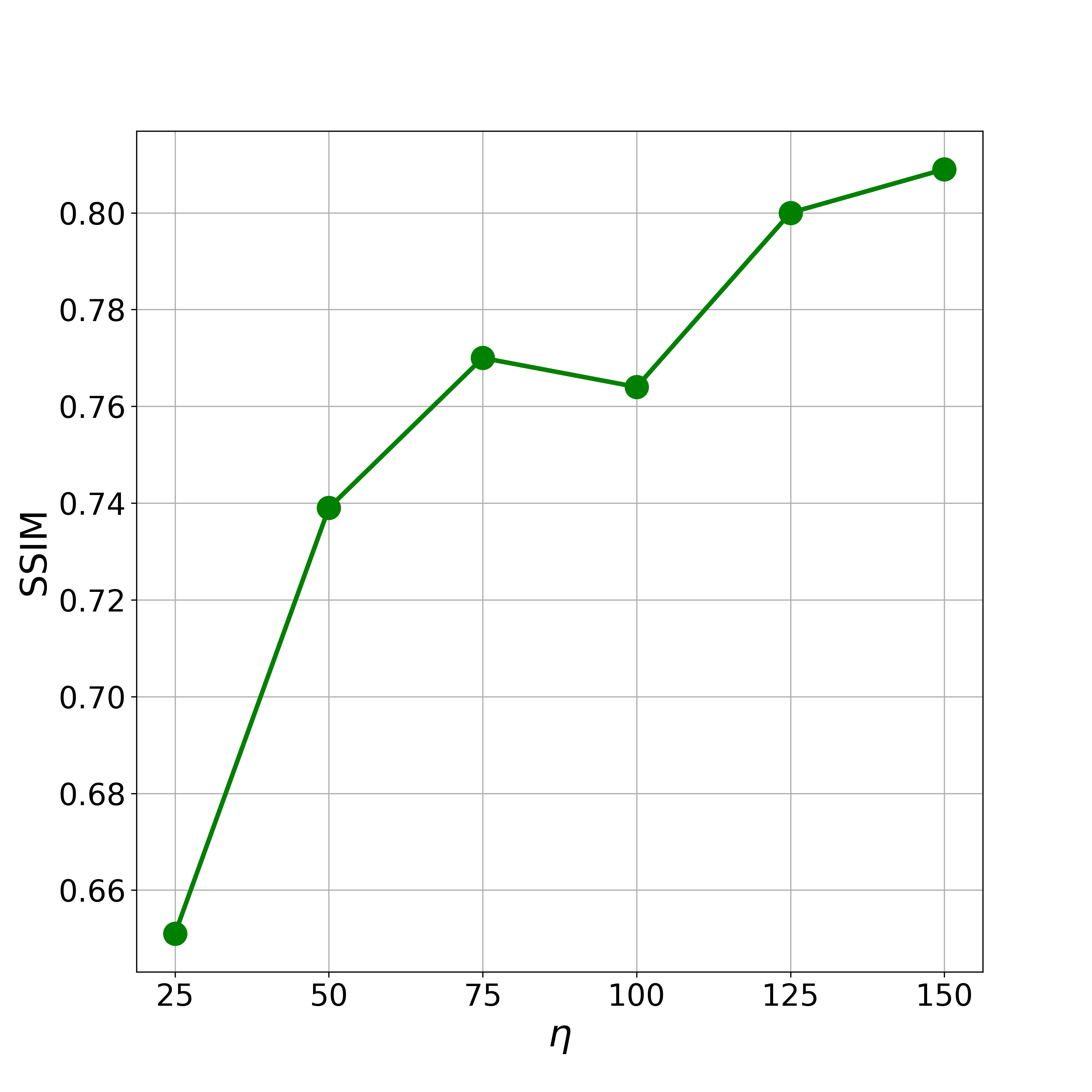}  
    \end{tabular}
    \caption{Hyperparameter $\eta$ selection  results for DPS-ODE. We select $\eta=1000, 750, 200, 75 $ for super-resolution, inpainting(random), Gaussian deblurring, and inpainting(box), respectively.}
    \label{fig:dps_abl}
\end{figure}

\begin{figure}[h]
    \centering
      \begin{tabular}{cccc}
$\nu=2$ &  $\nu=2$ &    $\nu=4$    & $\nu=4$  
\\ 
\includegraphics[width=0.22\textwidth]{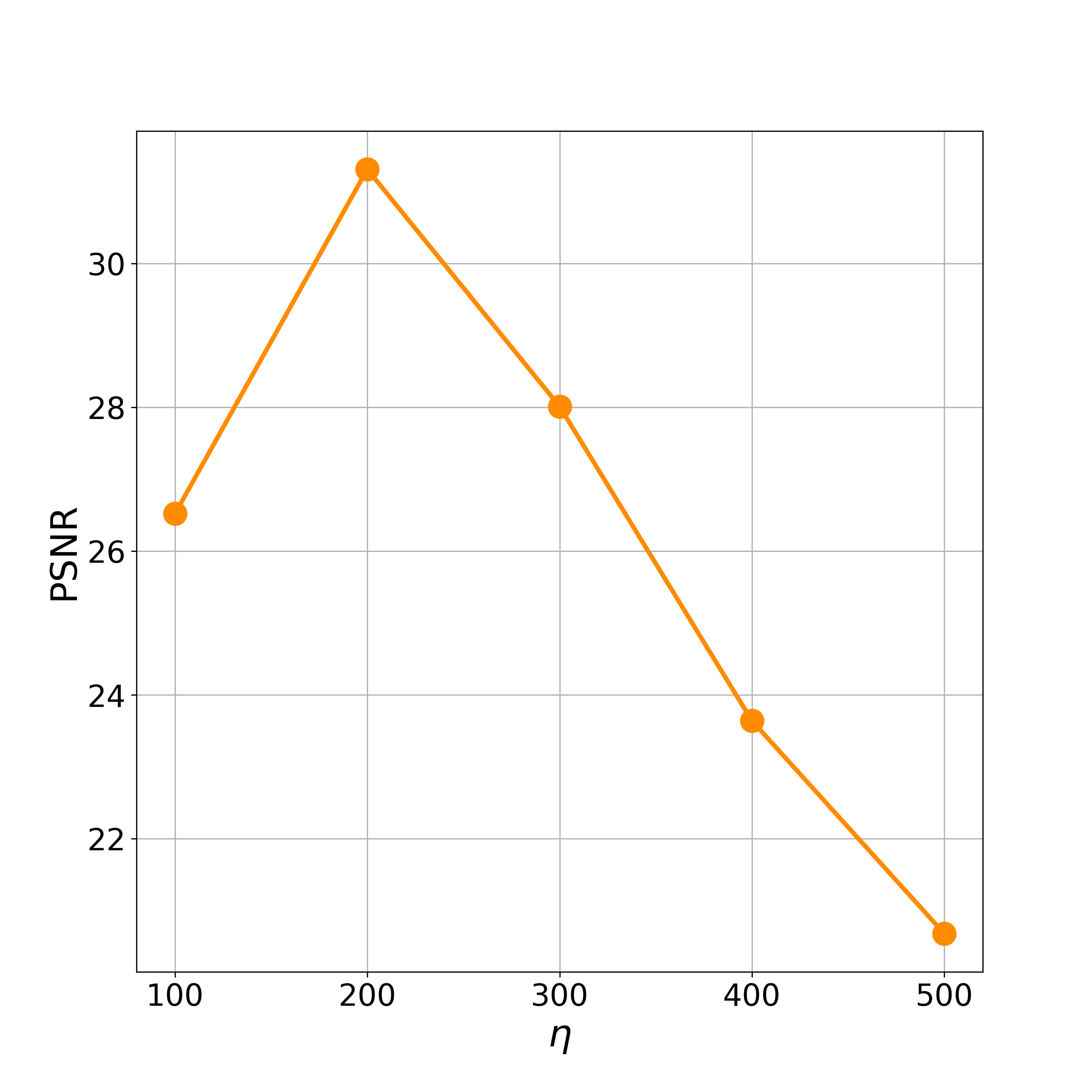} &    \includegraphics[width=0.22\textwidth]{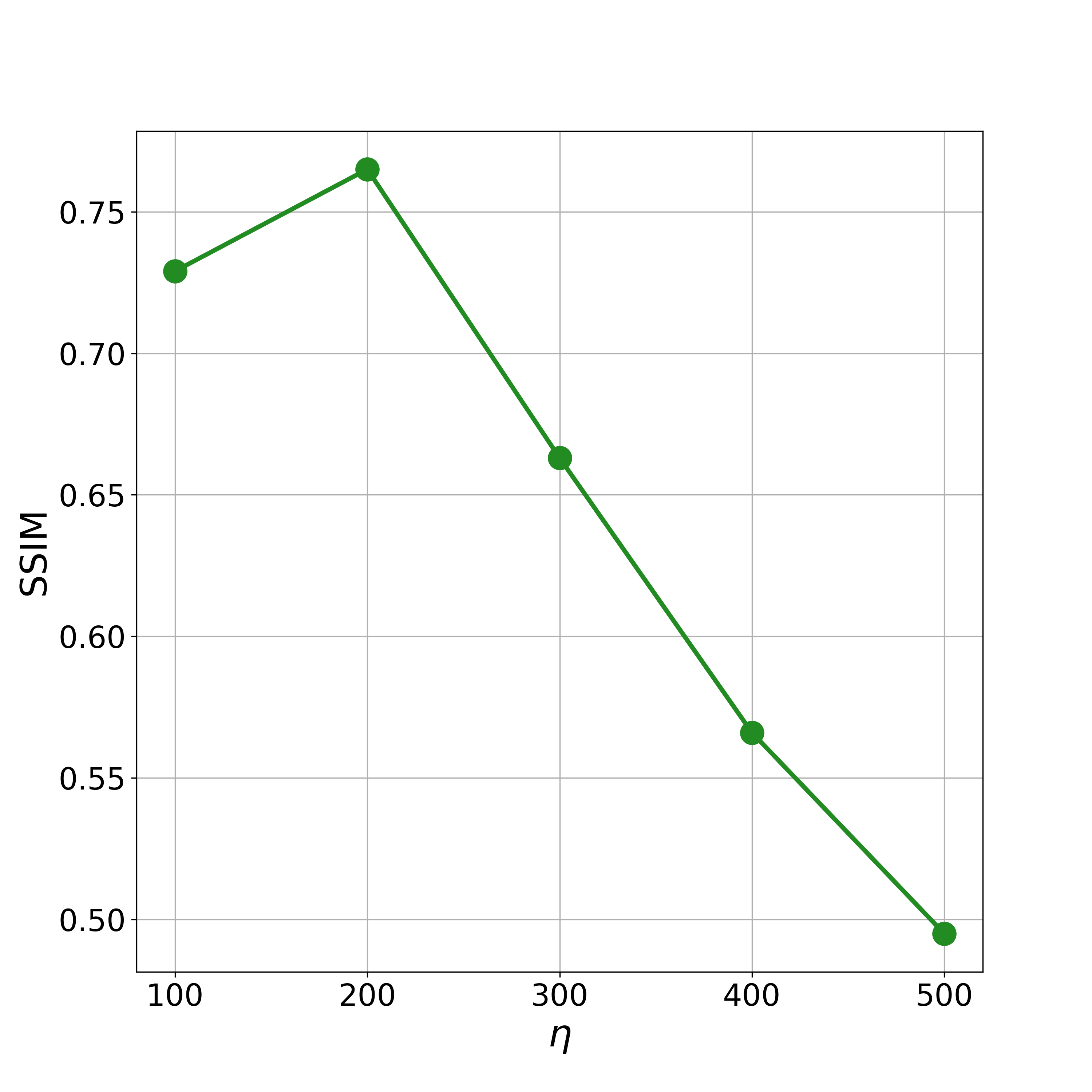}  & \includegraphics[width=0.22\textwidth]{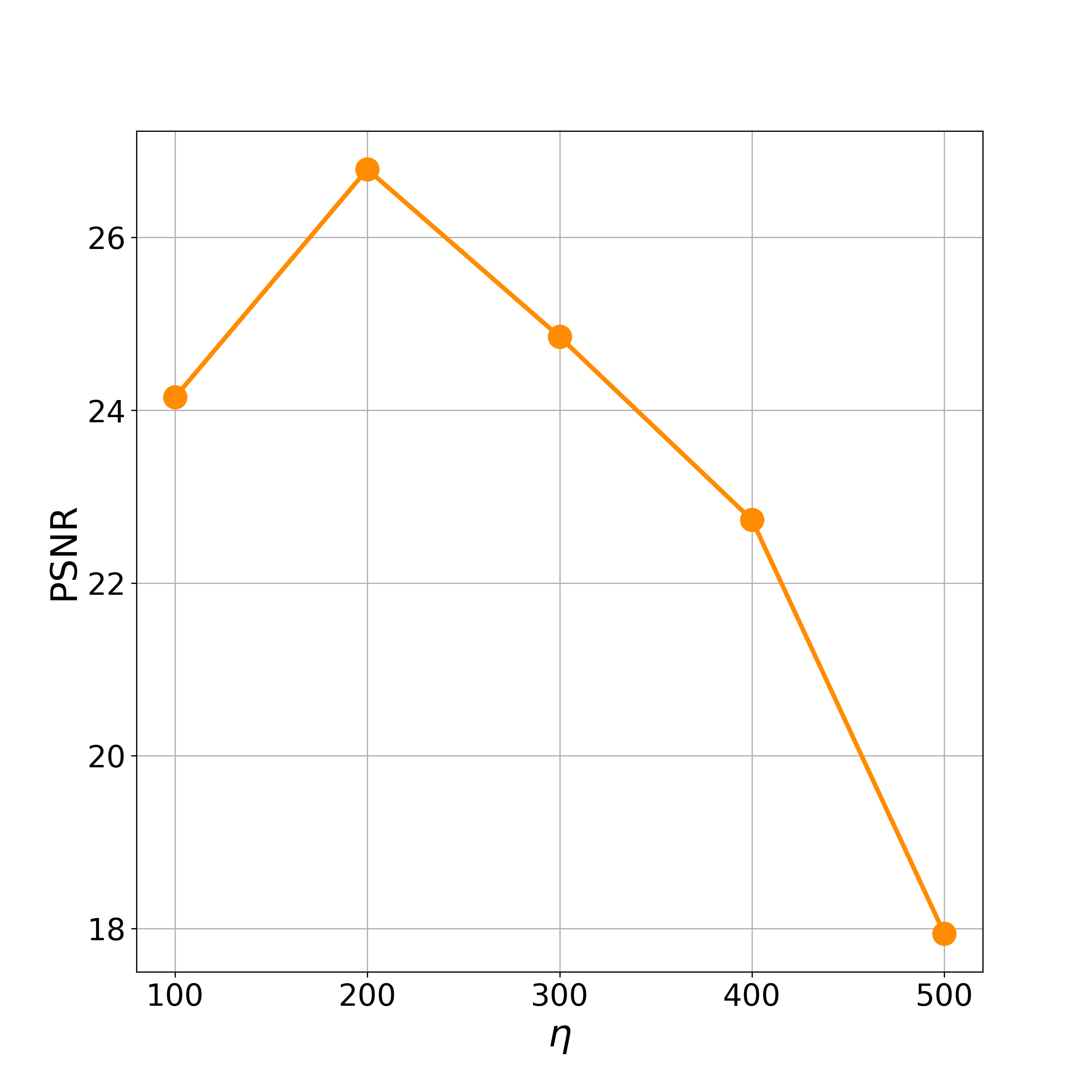}  &\includegraphics[width=0.22\textwidth]{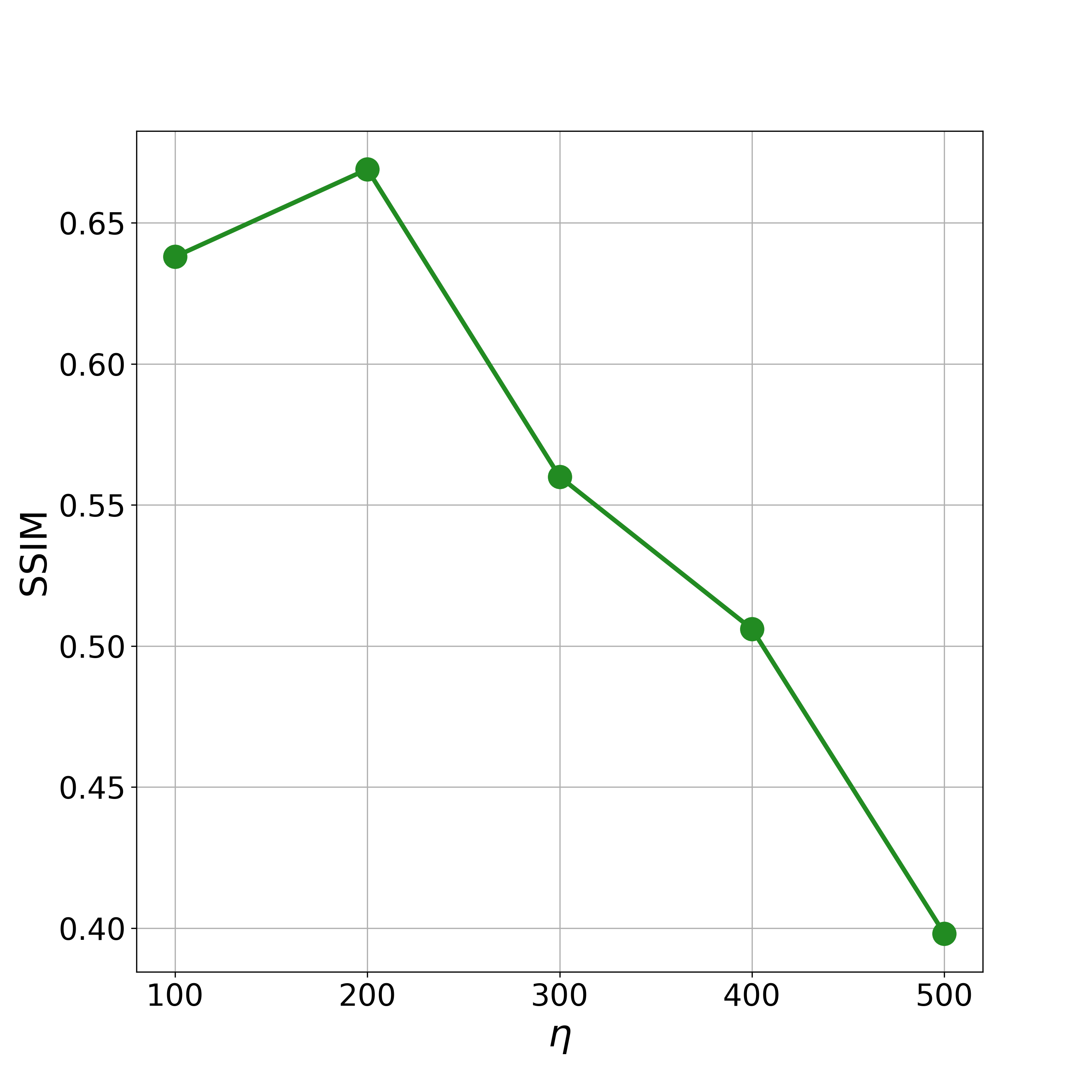}  
\end{tabular}
  \caption{Hyperparameter $\eta$ selection  results for DPS-ODE on the HCP T2w dataset. We select $\eta=200 $ for all the experiments.}
    \label{fig:dps_abl_mri}
\end{figure}

\newpage
\section*{NeurIPS Paper Checklist}

\begin{enumerate}

\item {\bf Claims}
    \item[] Question: Do the main claims made in the abstract and introduction accurately reflect the paper's contributions and scope?
    \item[] Answer: \answerYes{} 
    \item[] Justification: The claims made in the abstract  match theoretical and experimental results.
    \item[] Guidelines:
    \begin{itemize}
        \item The answer NA means that the abstract and introduction do not include the claims made in the paper.
        \item The abstract and/or introduction should clearly state the claims made, including the contributions made in the paper and important assumptions and limitations. A No or NA answer to this question will not be perceived well by the reviewers. 
        \item The claims made should match theoretical and experimental results, and reflect how much the results can be expected to generalize to other settings. 
        \item It is fine to include aspirational goals as motivation as long as it is clear that these goals are not attained by the paper. 
    \end{itemize}

\item {\bf Limitations}
    \item[] Question: Does the paper discuss the limitations of the work performed by the authors?
    \item[] Answer: \answerYes{} 
    \item[] Justification: They are discussed in Section Limitations.
    \item[] Guidelines:
    \begin{itemize}
        \item The answer NA means that the paper has no limitation while the answer No means that the paper has limitations, but those are not discussed in the paper. 
        \item The authors are encouraged to create a separate "Limitations" section in their paper.
        \item The paper should point out any strong assumptions and how robust the results are to violations of these assumptions (e.g., independence assumptions, noiseless settings, model well-specification, asymptotic approximations only holding locally). The authors should reflect on how these assumptions might be violated in practice and what the implications would be.
        \item The authors should reflect on the scope of the claims made, e.g., if the approach was only tested on a few datasets or with a few runs. In general, empirical results often depend on implicit assumptions, which should be articulated.
        \item The authors should reflect on the factors that influence the performance of the approach. For example, a facial recognition algorithm may perform poorly when image resolution is low or images are taken in low lighting. Or a speech-to-text system might not be used reliably to provide closed captions for online lectures because it fails to handle technical jargon.
        \item The authors should discuss the computational efficiency of the proposed algorithms and how they scale with dataset size.
        \item If applicable, the authors should discuss possible limitations of their approach to address problems of privacy and fairness.
        \item While the authors might fear that complete honesty about limitations might be used by reviewers as grounds for rejection, a worse outcome might be that reviewers discover limitations that aren't acknowledged in the paper. The authors should use their best judgment and recognize that individual actions in favor of transparency play an important role in developing norms that preserve the integrity of the community. Reviewers will be specifically instructed to not penalize honesty concerning limitations.
    \end{itemize}

\item {\bf Theory Assumptions and Proofs}
    \item[] Question: For each theoretical result, does the paper provide the full set of assumptions and a complete (and correct) proof?
    \item[] Answer: \answerYes{} 
    \item[] Justification: They are provided in Section Method.
    \item[] Guidelines:
    \begin{itemize}
        \item The answer NA means that the paper does not include theoretical results. 
        \item All the theorems, formulas, and proofs in the paper should be numbered and cross-referenced.
        \item All assumptions should be clearly stated or referenced in the statement of any theorems.
        \item The proofs can either appear in the main paper or the supplemental material, but if they appear in the supplemental material, the authors are encouraged to provide a short proof sketch to provide intuition. 
        \item Inversely, any informal proof provided in the core of the paper should be complemented by formal proofs provided in appendix or supplemental material.
        \item Theorems and Lemmas that the proof relies upon should be properly referenced. 
    \end{itemize}

    \item {\bf Experimental Result Reproducibility}
    \item[] Question: Does the paper fully disclose all the information needed to reproduce the main experimental results of the paper to the extent that it affects the main claims and/or conclusions of the paper (regardless of whether the code and data are provided or not)?
    \item[] Answer: \answerYes{} 
    \item[] Justification: They are provided in Section Experiments and Appendix.
    \item[] Guidelines:
    \begin{itemize}
        \item The answer NA means that the paper does not include experiments.
        \item If the paper includes experiments, a No answer to this question will not be perceived well by the reviewers: Making the paper reproducible is important, regardless of whether the code and data are provided or not.
        \item If the contribution is a dataset and/or model, the authors should describe the steps taken to make their results reproducible or verifiable. 
        \item Depending on the contribution, reproducibility can be accomplished in various ways. For example, if the contribution is a novel architecture, describing the architecture fully might suffice, or if the contribution is a specific model and empirical evaluation, it may be necessary to either make it possible for others to replicate the model with the same dataset, or provide access to the model. In general. releasing code and data is often one good way to accomplish this, but reproducibility can also be provided via detailed instructions for how to replicate the results, access to a hosted model (e.g., in the case of a large language model), releasing of a model checkpoint, or other means that are appropriate to the research performed.
        \item While NeurIPS does not require releasing code, the conference does require all submissions to provide some reasonable avenue for reproducibility, which may depend on the nature of the contribution. For example
        \begin{enumerate}
            \item If the contribution is primarily a new algorithm, the paper should make it clear how to reproduce that algorithm.
            \item If the contribution is primarily a new model architecture, the paper should describe the architecture clearly and fully.
            \item If the contribution is a new model (e.g., a large language model), then there should either be a way to access this model for reproducing the results or a way to reproduce the model (e.g., with an open-source dataset or instructions for how to construct the dataset).
            \item We recognize that reproducibility may be tricky in some cases, in which case authors are welcome to describe the particular way they provide for reproducibility. In the case of closed-source models, it may be that access to the model is limited in some way (e.g., to registered users), but it should be possible for other researchers to have some path to reproducing or verifying the results.
        \end{enumerate}
    \end{itemize}

\item {\bf Open access to data and code}
    \item[] Question: Does the paper provide open access to the data and code, with sufficient instructions to faithfully reproduce the main experimental results, as described in supplemental material?
    \item[] Answer: \answerYes{} 
    \item[] Justification: We have provided sufficient implementation details and links for original repositories.
    \item[] Guidelines:
    \begin{itemize}
        \item The answer NA means that paper does not include experiments requiring code.
        \item Please see the NeurIPS code and data submission guidelines (\url{https://nips.cc/public/guides/CodeSubmissionPolicy}) for more details.
        \item While we encourage the release of code and data, we understand that this might not be possible, so “No” is an acceptable answer. Papers cannot be rejected simply for not including code, unless this is central to the contribution (e.g., for a new open-source benchmark).
        \item The instructions should contain the exact command and environment needed to run to reproduce the results. See the NeurIPS code and data submission guidelines (\url{https://nips.cc/public/guides/CodeSubmissionPolicy}) for more details.
        \item The authors should provide instructions on data access and preparation, including how to access the raw data, preprocessed data, intermediate data, and generated data, etc.
        \item The authors should provide scripts to reproduce all experimental results for the new proposed method and baselines. If only a subset of experiments are reproducible, they should state which ones are omitted from the script and why.
        \item At submission time, to preserve anonymity, the authors should release anonymized versions (if applicable).
        \item Providing as much information as possible in supplemental material (appended to the paper) is recommended, but including URLs to data and code is permitted.
    \end{itemize}

\item {\bf Experimental Setting/Details}
    \item[] Question: Does the paper specify all the training and test details (e.g., data splits, hyperparameters, how they were chosen, type of optimizer, etc.) necessary to understand the results?
    \item[] Answer: \answerYes{} 
    \item[] Justification: They are provided in Section Experiments and Appendix.
    \item[] Guidelines:
    \begin{itemize}
        \item The answer NA means that the paper does not include experiments.
        \item The experimental setting should be presented in the core of the paper to a level of detail that is necessary to appreciate the results and make sense of them.
        \item The full details can be provided either with the code, in appendix, or as supplemental material.
    \end{itemize}

\item {\bf Experiment Statistical Significance}
    \item[] Question: Does the paper report error bars suitably and correctly defined or other appropriate information about the statistical significance of the experiments?
    \item[] Answer: \answerYes{} 
    \item[] Justification: Standard deviations are provided in Tables. 
    \item[] Guidelines:
    \begin{itemize}
        \item The answer NA means that the paper does not include experiments.
        \item The authors should answer "Yes" if the results are accompanied by error bars, confidence intervals, or statistical significance tests, at least for the experiments that support the main claims of the paper.
        \item The factors of variability that the error bars are capturing should be clearly stated (for example, train/test split, initialization, random drawing of some parameter, or overall run with given experimental conditions).
        \item The method for calculating the error bars should be explained (closed form formula, call to a library function, bootstrap, etc.)
        \item The assumptions made should be given (e.g., Normally distributed errors).
        \item It should be clear whether the error bar is the standard deviation or the standard error of the mean.
        \item It is OK to report 1-sigma error bars, but one should state it. The authors should preferably report a 2-sigma error bar than state that they have a 96\% CI, if the hypothesis of Normality of errors is not verified.
        \item For asymmetric distributions, the authors should be careful not to show in tables or figures symmetric error bars that would yield results that are out of range (e.g. negative error rates).
        \item If error bars are reported in tables or plots, The authors should explain in the text how they were calculated and reference the corresponding figures or tables in the text.
    \end{itemize}

\item {\bf Experiments Compute Resources}
    \item[] Question: For each experiment, does the paper provide sufficient information on the computer resources (type of compute workers, memory, time of execution) needed to reproduce the experiments?
    \item[] Answer: \answerYes{} 
    \item[] Justification: They are provided in Appendix.
    \item[] Guidelines:
    \begin{itemize}
        \item The answer NA means that the paper does not include experiments.
        \item The paper should indicate the type of compute workers CPU or GPU, internal cluster, or cloud provider, including relevant memory and storage.
        \item The paper should provide the amount of compute required for each of the individual experimental runs as well as estimate the total compute. 
        \item The paper should disclose whether the full research project required more compute than the experiments reported in the paper (e.g., preliminary or failed experiments that didn't make it into the paper). 
    \end{itemize}
    
\item {\bf Code Of Ethics}
    \item[] Question: Does the research conducted in the paper conform, in every respect, with the NeurIPS Code of Ethics \url{https://neurips.cc/public/EthicsGuidelines}?
    \item[] Answer: \answerYes{} 
    \item[] Justification: We confirm that the paper conforms with NeurIPS Code of Ethics.
    \item[] Guidelines:
    \begin{itemize}
        \item The answer NA means that the authors have not reviewed the NeurIPS Code of Ethics.
        \item If the authors answer No, they should explain the special circumstances that require a deviation from the Code of Ethics.
        \item The authors should make sure to preserve anonymity (e.g., if there is a special consideration due to laws or regulations in their jurisdiction).
    \end{itemize}

\item {\bf Broader Impacts}
    \item[] Question: Does the paper discuss both potential positive societal impacts and negative societal impacts of the work performed?
    \item[] Answer: \answerYes{} 
    \item[] Justification: They are discussion in Section Broader Impacts.
    \item[] Guidelines:
    \begin{itemize}
        \item The answer NA means that there is no societal impact of the work performed.
        \item If the authors answer NA or No, they should explain why their work has no societal impact or why the paper does not address societal impact.
        \item Examples of negative societal impacts include potential malicious or unintended uses (e.g., disinformation, generating fake profiles, surveillance), fairness considerations (e.g., deployment of technologies that could make decisions that unfairly impact specific groups), privacy considerations, and security considerations.
        \item The conference expects that many papers will be foundational research and not tied to particular applications, let alone deployments. However, if there is a direct path to any negative applications, the authors should point it out. For example, it is legitimate to point out that an improvement in the quality of generative models could be used to generate deepfakes for disinformation. On the other hand, it is not needed to point out that a generic algorithm for optimizing neural networks could enable people to train models that generate Deepfakes faster.
        \item The authors should consider possible harms that could arise when the technology is being used as intended and functioning correctly, harms that could arise when the technology is being used as intended but gives incorrect results, and harms following from (intentional or unintentional) misuse of the technology.
        \item If there are negative societal impacts, the authors could also discuss possible mitigation strategies (e.g., gated release of models, providing defenses in addition to attacks, mechanisms for monitoring misuse, mechanisms to monitor how a system learns from feedback over time, improving the efficiency and accessibility of ML).
    \end{itemize}
    
\item {\bf Safeguards}
    \item[] Question: Does the paper describe safeguards that have been put in place for responsible release of data or models that have a high risk for misuse (e.g., pretrained language models, image generators, or scraped datasets)?
    \item[] Answer: \answerNA{} 
    \item[] Justification: The paper poses no such risks.
    \item[] Guidelines:
    \begin{itemize}
        \item The answer NA means that the paper poses no such risks.
        \item Released models that have a high risk for misuse or dual-use should be released with necessary safeguards to allow for controlled use of the model, for example by requiring that users adhere to usage guidelines or restrictions to access the model or implementing safety filters. 
        \item Datasets that have been scraped from the Internet could pose safety risks. The authors should describe how they avoided releasing unsafe images.
        \item We recognize that providing effective safeguards is challenging, and many papers do not require this, but we encourage authors to take this into account and make a best faith effort.
    \end{itemize}

\item {\bf Licenses for existing assets}
    \item[] Question: Are the creators or original owners of assets (e.g., code, data, models), used in the paper, properly credited and are the license and terms of use explicitly mentioned and properly respected?
    \item[] Answer: \answerYes{} 
    \item[] Justification: They are properly cited.
    \item[] Guidelines:
    \begin{itemize}
        \item The answer NA means that the paper does not use existing assets.
        \item The authors should cite the original paper that produced the code package or dataset.
        \item The authors should state which version of the asset is used and, if possible, include a URL.
        \item The name of the license (e.g., CC-BY 4.0) should be included for each asset.
        \item For scraped data from a particular source (e.g., website), the copyright and terms of service of that source should be provided.
        \item If assets are released, the license, copyright information, and terms of use in the package should be provided. For popular datasets, \url{paperswithcode.com/datasets} has curated licenses for some datasets. Their licensing guide can help determine the license of a dataset.
        \item For existing datasets that are re-packaged, both the original license and the license of the derived asset (if it has changed) should be provided.
        \item If this information is not available online, the authors are encouraged to reach out to the asset's creators.
    \end{itemize}

\item {\bf New Assets}
    \item[] Question: Are new assets introduced in the paper well documented and is the documentation provided alongside the assets?
    \item[] Answer: \answerYes{} 
    \item[] Justification: They are well documented in Section Experiments and Appendix.
    \item[] Guidelines:
    \begin{itemize}
        \item The answer NA means that the paper does not release new assets.
        \item Researchers should communicate the details of the dataset/code/model as part of their submissions via structured templates. This includes details about training, license, limitations, etc. 
        \item The paper should discuss whether and how consent was obtained from people whose asset is used.
        \item At submission time, remember to anonymize your assets (if applicable). You can either create an anonymized URL or include an anonymized zip file.
    \end{itemize}

\item {\bf Crowdsourcing and Research with Human Subjects}
    \item[] Question: For crowdsourcing experiments and research with human subjects, does the paper include the full text of instructions given to participants and screenshots, if applicable, as well as details about compensation (if any)? 
    \item[] Answer: \answerNA{} 
    \item[] Justification: The paper does not involve crowdsourcing nor research with human subjects.
    \item[] Guidelines:
    \begin{itemize}
        \item The answer NA means that the paper does not involve crowdsourcing nor research with human subjects.
        \item Including this information in the supplemental material is fine, but if the main contribution of the paper involves human subjects, then as much detail as possible should be included in the main paper. 
        \item According to the NeurIPS Code of Ethics, workers involved in data collection, curation, or other labor should be paid at least the minimum wage in the country of the data collector. 
    \end{itemize}

\item {\bf Institutional Review Board (IRB) Approvals or Equivalent for Research with Human Subjects}
    \item[] Question: Does the paper describe potential risks incurred by study participants, whether such risks were disclosed to the subjects, and whether Institutional Review Board (IRB) approvals (or an equivalent approval/review based on the requirements of your country or institution) were obtained?
    \item[] Answer: \answerNA{} 
    \item[] Justification: The paper does not involve crowdsourcing nor research with human subjects.
    \item[] Guidelines:
    \begin{itemize}
        \item The answer NA means that the paper does not involve crowdsourcing nor research with human subjects.
        \item Depending on the country in which research is conducted, IRB approval (or equivalent) may be required for any human subjects research. If you obtained IRB approval, you should clearly state this in the paper. 
        \item We recognize that the procedures for this may vary significantly between institutions and locations, and we expect authors to adhere to the NeurIPS Code of Ethics and the guidelines for their institution. 
        \item For initial submissions, do not include any information that would break anonymity (if applicable), such as the institution conducting the review.
    \end{itemize}

\end{enumerate}

\end{document}